\documentclass[a4paper,11pt]{article}

\input{macros.tex}

\title{
\toptitlebar
{{\center\baselineskip 18pt
                      {\Large\bf Towards Faster Decentralized Stochastic\\ Optimization with Communication Compression}}
} 
\bottomtitlebar}
\date{}
\author[1]{Rustem Islamov$^*$}
\author[2]{Yuan Gao$^*$}
\author[2]{Sebastian Stich}
\affil[1]{University of Basel}
\affil[2]{CISPA Helmholtz Center for Information Security}

\begin{document}

\maketitle
\def\thefootnote{*}\footnotetext{Equal contribution.}\def\thefootnote{\arabic{footnote}}

\begin{abstract}
   Communication efficiency has garnered significant attention as it is considered the main bottleneck for large-scale decentralized Machine Learning applications in distributed and federated settings. In this regime, clients are restricted to transmitting small amounts of compressed information to their neighbors over a communication graph. Numerous endeavors have been made to address this challenging problem by developing algorithms with compressed communication for decentralized non-convex optimization problems. Despite considerable efforts, current theoretical understandings of the problem are still very limited, and existing algorithms all suffer from various limitations. In particular, these algorithms typically rely on strong, and often infeasible assumptions such as bounded data heterogeneity or require large batch access while failing to achieve linear speedup with the number of clients. In this paper, we introduce \algname{MoTEF}, a novel approach that integrates communication compression with {\bf Mo}mentum {\bf T}racking and {\bf E}rror {\bf F}eedback. \algname{MoTEF} is the first algorithm to achieve an asymptotic rate matching that of distributed \algname{SGD} under arbitrary data heterogeneity, hence resolving a long-standing theoretical obstacle in decentralized optimization with compressed communication. We provide numerical experiments to validate our theoretical findings and confirm the practical superiority of \algname{MoTEF}.
\end{abstract}

\section{Introduction}

Decentralized machine learning approaches are increasingly popular in numerous applications such as the internet-of-things (IoT) and networked autonomous systems \citep{marvasti2014optimal, savazzi2020federated}, primarily due to their scalability to larger datasets and systems, as well as their respect for data locality and privacy concerns. In this work, we focus on decentralized optimization techniques that operate without a central coordinator, relying solely on on-device computation and local communication with neighboring devices. This encompasses traditional scenarios like training Machine Learning models in large data centers, as well as emerging applications where computations occur directly on devices. Such a setting is preferred over centralized topology which often poses a significant bottleneck on the central node in terms of communication latency, bandwidth, and fault tolerance.

Considering the enormous size of modern Machine Learning models, classic single-node training is often impossible. Moreover, the training of large models requires a huge amount of data that does not fit the memory of a single machine. Therefore, modern training techniques heavily rely on distributed computations over a set of computation nodes/clients \citep{shoeybi2019megatron, wang2020survey, ramesh2021zero, ramesh2022hierarchical}. One of the instances of distributed training is Federated Learning (FL) \citep{konecny2016federated, kairouz2021advances} which has recently gathered a lot of attention. In this setting, clients, such as hospitals or owners of edge devices, collaboratively train a model on their devices while retaining their data locally.

A key issue in distributed optimization is the communication bottleneck \citep{seide20141, strom2015scalable} that limits the scaling properties of distributed deep learning training \citep{seide20141, alistarh2017qsgd}. One of the remedies to decrease communication expenses involves communication compression, where only quantized messages (with fewer bits) are exchanged between clients using compression operators. When used appropriately, contractive compressors (see \Cref{def:contractive_compressor}), such as Top-K, are often empirically preferable. However, the naive application of contractive compression operators might lead to divergence \citep{beznosikov2023biased}. To make compression suitable for distributed training, the Error Feedback (EF) mechanism  \citep{seide20141, stich2018sparsified} is widely used in practice. It plays a crucial role in achieving high compression ratios. 

However, most of the works analyzed EF mechanism in the centralized setting \citep{stich2019error, gorbunov2020linearly, stich2020communication}. Recent research achievements \citep{gao2024econtrol, fatkhullin2024momentum} demonstrate that in this regime properly constructed EF mechanism can handle both client drift \citep{mishchenko2019distributed, karimireddy2020scaffold} and stochastic noise from the gradients, and can achieve near-optimal convergence rates. In the more challenging decentralized setting, a series of studies \citep{zhao2022beer, yan2023compressed} introduced algorithms capable of effectively managing the drift but fail to achieve a linear acceleration in parallel training, i.e.\ increasing the number of devices used for training does not lead to a decrease in the training time. \citet{yau2022docom} partially solved this issue under stronger assumptions achieving linear speed-up using variance reduction, but with worse dependency on the variance of the noise

Designing a method that addresses client drift while preserving linear acceleration in decentralized training has been challenging due to the complex interplay between client drift, Error Feedback mechanism and the communication topology. In our study, we introduce \algname{MoTEF}, a novel method that tackles these challenges concurrently. Our primary contributions can be outlined as follows.

\begin{itemize}
    \item We propose a novel method \algname{MoTEF} that incorporates momentum tracking with compression and Error Feedback, and provably works under standard assumptions $(i)$ without imposing any data heterogeneity bounds, $(ii)$ without any impractical assumptions such as large batches, $(iii)$ with arbitrary contractive compressor, and $(iv)$ achieves linear speed-up with the number of clients~$n$. We provide convergence guarantees for the general class of non-convex functions, and for the structured class of non-convex functions satisfying the Polyak-{\L}ojasiewicz (P{\L}) condition.


    \item We propose \algname{MoTEF-VR}, a momentum-based \algname{STORM}-type \citep{cutkosky2019momentum} variance-reduced variant of our base method that improves further the asymptotic rate of convergence.

    \item Finally, we provide an extensive numerical study of \algname{MoTEF} demonstrating the superiority of the proposed method in practice and supporting theoretical claims.
\end{itemize}

\subsection{Related works}

\paragraph{Decentralized optimization and gradient tracking.} First works in the field studied gossip averaging procedures that are typically used to reach consensus \citep{kempe2003gossip, xiao2004fast}. Nevertheless, direct use of gossip averaging might be sub-optimal as it often results in slow convergence \citep{nedic2009distributed}. Gradient tracking \citep{qu2017harnessing, nedic2017achieving, koloskova2021improved} is one the most popular remedies to this issue. It has been widely applied to obtain faster decentralized algorithms \citep{sun2020improving, xin2022fast, xin2021fast, li2022destress, zhao2022beer}. In this work, we follow a similar approach but perform a tracking step on momentum term instead of gradients. \citet{takezawa2022momentum} might be the first to analyze momentum tracking in decentralized optimization, but they do not consider communication compression.

\paragraph{Momentum in distributed training.} Lately, the utilization of momentum \citep{polyak1964some} has attracted attention in distributed optimization. Several works empirically showed that momentum can improve performance in distributed setting \citep{wang2019slowmo,karimireddy2020mime, das2022faster}. Besides, it has recently been shown that the use of momentum improves convergence guarantees \citep{yau2022docom, fatkhullin2024momentum, cheng2024momentum, huang2023stochastic}  fully removing dependencies on data heterogeneity bounds. In this work, we follow this approach and apply the momentum technique to the more challenging decentralized setting.

\paragraph{Short history of Error Feedback.}


Initially, the Error Feedback mechanism was introduced as a heuristic \citep{seide20141} and was subsequently analyzed within a simple single-node framework \citep{stich2018sparsified, karimireddy2019error}. The first findings in the distributed context were achieved under strong assumptions such as IID data distributions \citep{karimireddy2019error} or bounded gradients \citep{cordonnier2018convex, alistarh2018convergence, koloskova2019decentralized, koloskova2020decentralized}. \algname{EF21} \citep{richtarik2021ef21} stands out as the first algorithm proven to operate with any contractive compressors and under arbitrary heterogeneity, albeit failing to converge when clients are limited to using only stochastic gradients \citep{fatkhullin2024momentum}. Subsequently, \algname{EF21} was extended to diverse practical scenarios \citep{fatkhullin2021ef21} and decentralized training \citep{zhao2022beer} improving the dependencies on some problem parameters. Recent advancements \citep{gao2024econtrol, fatkhullin2024momentum} have demonstrated that a carefully designed EF mechanism (through the control of feedback signal strength or the use of momentum) results in nearly optimal convergence guarantees in a centralized setting.
 
\paragraph{Issues of Error Feedback in decentralized setting.} Despite having been studied in the centralized setting extensively,  EF-based algorithms in the decentralized regime still fail to achieve desirable properties. 
\begin{itemize}
\item {\bf Strong assumptions.} Many earlier theoretical results for EF require strong assumptions, such as either the bounded gradient assumption \citep{koloskova2019decentralized, koloskova2020decentralized} or global heterogeneity bound \citep{lian2017can, tang2019deepsqueeze, lu2021optimal, singh2021squarm}. 

\item {\bf Mega batches.} Convergence of \algname{BEER} algorithm\citep{zhao2022beer}  requires large batches that can be costly or even infeasible in some applications. For example, in medical applications \citep{rieke2020future} or Reinforcement Learning \citep{khodadadian2022federated, jin2022federated, mitra2023temporal} sampling large batches is often intractable. Moreover, it has been shown that training with small batch sizes improves generalization and convergence \citep{wilson2003general, keskar2016large, sekhari2021sgd}.

\item {\bf Suboptimal rates.} The stochastic term of several algorithms does not improve with $n$ the number of clients \citep{zhao2022beer, yan2023compressed}, while the opposite is often desirable, and can be achieved in the centralized training setting \citep{fatkhullin2024momentum, gao2024econtrol}. Other work achieves speed-up with $n$, but requires stronger smoothness assumptions and has a worse dependency on the noise variance \citep{yau2022docom}. Moreover, \citep{koloskova2019decentralized, koloskova2020decentralized} do not achieve standard $\cO(1/\varepsilon^2)$ convergence rate in noiseless regime. 

\item {\bf Necessity of unbiased compression.} Finally, early works analyzed decentralized algorithms only for a more restricted class of {\it unbiased} compressors \citep{tang2018communication, kovalev2021linearly}. \citet{huang2023cedas} modify any contractive compressor using an additional unbiased compressor following the results of \citep{horvath2020better}. This approach enables the creation of a better sequence of gradient estimators, albeit with twice the per-iteration communication cost.

\end{itemize}

In \Cref{tab:summary}, we provide a summary of known theoretical results in decentralized training with compression that are most relevant to our work. We highlight the main issues of existing algorithms.

\begin{table*}[t]
    \centering
    \caption{Summary of convergence guarantees for decentralized methods supporting contractive compressors. {\bf nCVX} = supports non-convex
functions; {\bf P{\L}} = supports functions satisfying P{\L} condition. We present the convergence in terms of $\Eb{\|\nabla f(\xx_{\rm out})\|^2} \le \varepsilon^2$ and $\Eb{f(\xx_{\rm out})-f^\star} \le \varepsilon$ in P{\L} regimes for specifically chosen $\xx_{\rm out}$. Here $F^0 \eqdef \Eb{f(\xx^0) - f^*}$, $L$ and $\ell$ are smoothness constants, $\rho$ is a spectral gap, and $\sigma^2$ is stochastic variance bound.}
    \label{tab:summary}
    \resizebox{\textwidth}{!}{
        \begin{tabular}{ccccc}
            \toprule
            \multirow{2}{*}{\textbf{Method}} & 
            \multicolumn{2}{c}{\textbf{Asymptotic Complexity}} & 
            \multirow{2}{*}{\textbf{Large Batches?}} & 
            \multirow{2}{*}{\makecellnew{\textbf{Extra} \\ \textbf{Assumptions?}}}\\
            & 
            \textbf{nCVX} & 
            \textbf{P{\L}} &
            & 
            
            \\ \toprule
            \makecellnew{\algname{Choco-SGD} \\ \citep{koloskova2019decentralized}} &
            $\frac{LF^0\sigma^2}{n\varepsilon^4}$ &
            \xmark &
            \xmark & 
            \makecellnew{Bounded Gradients \\ $\Eb{\|\nabla f_i(\xx,\xi)\|^2}\le G^2$}
            \\
            \midrule

            \makecellnew{\algname{BEER} \\ \citep{zhao2022beer}} &
            $\frac{LF^0\sigma^2}{{\color{darkred} \alpha^2\rho^3}\varepsilon^4}$ &
            $\frac{LF^0}{\mu^2\alpha^2\rho^3\varepsilon}$ &
            \makecellnew{Batch size of \\ order $\frac{\sigma^2}{\alpha\varepsilon^2}$} & 
            \xmark
            \\
            \midrule

            \makecellnew{\algname{CEDAS} \\ \citep{huang2023cedas}} &
            $\frac{L F^0\sigma^2}{n\varepsilon^4}$ &
            \xmark &
            \xmark & 
           \makecellnew{Additional Unbiased \\ Compressor}
            \\
            \midrule

            \makecellnew{\algname{DeepSqueeze} \\ \citep{tang2019deepsqueeze}} &
            $\frac{L F^0\sigma^2}{n\varepsilon^4}$ &
            \xmark &
            \xmark & 
           \makecellnew{Bounded Heterogeneity \\ $n^{-1}\sum_{i}\|\nabla f_i(\xx) - \nabla f(\xx)\|^2 \le \zeta^2$}
            \\
            \midrule

             \makecellnew{\algname{DoCoM} \\ \citep{yau2022docom}} &
            $\frac{\ell F^0{\color{darkred} \sigma^3}}{n\varepsilon^3}$ &
            $\frac{\ell F^0{\color{darkred}\sigma^3}}{\mu^2n\varepsilon}$ &
            \xmark & 
            \xmark
            \\
            \midrule

             \makecellnew{\algname{CDProxSGT} \\ \citep{yan2023compressed}} &
            $\frac{L F^0\sigma^2}{{\color{darkred} \alpha^2\rho^2}\varepsilon^4}$ &
            \xmark &
            \xmark & 
            \xmark
            \\
            \midrule

             \makecellnew{\algname{MoTEF} \\ \textbf{[This work]}} &
            $\frac{LF^0\sigma^2}{n\varepsilon^4}$ &
            $\frac{LF^0\sigma^2}{\mu^2n\varepsilon}$ &
            \xmark & 
            \xmark 
            \\
            \midrule

            \makecellnew{\algname{MoTEF-VR} \\ \textbf{[This work]}} &
            $\frac{\ell F^0\sigma^2}{n\varepsilon^3}$ &
            \xmark &
            \xmark & 
            \xmark 
            \\

            \bottomrule 
        
        \end{tabular}
        }
\end{table*}

\section{Problem setup}\label{sec:problem_setup}

Formally, we consider the following optimization problem 
\begin{eqnarray} 
    \min\limits_{\xx\in\R^d} \left\{f(\xx) \eqdef \frac{1}{n}\sum_{i=1}^n f_i(\xx)\right\},
\end{eqnarray}
where $n$ is the number of clients participating in the training, $\xx$ are the parameters of a model, $f(\xx)$ is the global objective, and $f_i(\xx) \eqdef \mathbb{E}_{\xi_i \sim\cD_i}[f_i(\xx,\xi_i)]$ is the local objective over local dataset $\cD_i.$ Throughout this work, we assume that the global function $f$ is bounded below by $f^\star > -\infty.$

In the setting of decentralized communication, the clients are restricted to communicating with their neighbors only over a certain undirected communication graph $\mathcal{G}([n], E).$ Each vertex in $[n]$ represents a client, and each edge in $E$ represents a communication link between clients. Besides, we assign a positive weight to $w_{ij}$ if there is an edge $(i,j) \in E$, and $w_{ij} = 0$ if $(i,j) \notin E.$ Weights $w_{ij}$ form a mixing matrix $\mW\in\R^{n\times n}$ (sometimes also called gossip or interaction matrix). The mixing matrix $\mW$ should satisfy the following standard assumption.

\begin{assumption}\label{asmp:mixing_matrix}
We assume that $\mW\in\R^{n\times n}$ is symmetric ($\mW=\mW^\top$) and doubly stochastic ($\mW\1 = \1, \1^\top \mW = \1^\top)$ matrix with eigenvalues $1=|\lambda_1(\mW)| > |\lambda_2(\mW)| \ge \dots \ge |\lambda_n(\mW)|.$ We denote the spectral gap of $\mW$ as 
\begin{equation}\label{eq:spectral_gap} 
    \rho \eqdef 1 - |\lambda_2(\mW)| \in (0, 1].
\end{equation}
\end{assumption}
The spectral gap is typically used to measure the influence of network topology in the training \citep{aldous2014reversible, nedic2018network}. 

In our work, we consider algorithms combined with compressed communication. Formally, we analyze methods utilizing practically useful contractive compression operators.

\begin{definition}\label{def:contractive_compressor}
    We say that a (possibly randomized) mapping $\cC\colon \R^d \to \R^d$ is a contractive compression operator if for some constant $0 < \alpha \le 1$ it holds
    \begin{equation}\label{eq:contractive_compressor} 
        \Eb{\|\cC(\xx)-\xx\|^2} \le (1-\alpha)\|\xx\|^2.
    \end{equation}
\end{definition}

One of the classic examples of compressors satisfying \eqref{eq:contractive_compressor} is Top-K \citep{stich2018sparsified}. It acts on the input by preserving K largest by magnitude entries while zeroing the rest. The class of contractive compressors includes well-known sparsification \cite{alistarh2018convergence, stich2018sparsified} and quantization \citep{wen2017terngrad, bernstein2018signsgd, horvath2022natural} operators. We refer to \citep{beznosikov2023biased, safaryan2022fednl, qian2022basis, islamov2023distributed} for more examples of contractive compressors. 

In decentralized training, typically, each client receives the messages from its neighbors and transfers back to them the aggregated information. We highlight that, contrary to many prior works, our analysis supports an arbitrarily heterogeneous setting, i.e. it does not require any assumptions on the heterogeneity level, which means that local data distributions might be distant from each other. Next, we provide standard assumptions on the function class and noise model.

\begin{assumption}\label{asmp:smoothness} We assume that each local function $f_i$ is $L$-smooth, i.e. for all $\xx, \yy\in \R^d$, and $i\in[n]$ it holds
\begin{equation}\label{eq:smoothness} 
    \|\nabla f_i(\xx) - \nabla f_i(\yy)\| \le L\|\xx-\yy\|.
\end{equation}
\end{assumption}
Next, we assume that each client has access to an unbiased gradient estimator with bounded variance.
\begin{assumption}\label{asmp:bounded_variance} We assume that we have access to a gradient oracle $\gg^i(\xx) \colon \R^d \to \R^d$ for each local function $f_i$ such that for all $\xx\in\R^d$ and $i\in[n]$ it holds
\begin{equation}\label{eq:bounded_variance} 
    \Eb{\gg^i(\xx)} = \nabla f_i(\xx), \quad \Eb{\|\gg^i(\xx)-\nabla f_i(\xx)\|^2} \le \sigma^2.
\end{equation}    
\end{assumption}
It is important to mention that mini-batches are allowed as well, effectively reducing the variance by the local batch size. Nevertheless, there is no requirement for any specific (minimal) batch size, and for simplicity, we consistently assume a batch size of one.

Finally, we consider the structural class of non-convex functions satisfying Polyak-{\L}ojasiewicz condition \citep{polyak1963gradient}. This assumption is one of the weakest conditions under which vanilla Gradient Descent converges linearly \citep{karimi2016linear}.
\begin{assumption}\label{asmp:pl} We assume that the global function $f$ is $\mu$-P{\L} for some $\mu > 0$, i.e. for all $\xx\in\R^d$ it holds
\begin{equation}\label{eq:pl} 
    \|\nabla f(\xx)\|^2 \ge 2\mu (f(x) - f^\star).
\end{equation}
\end{assumption}
Note that the P{\L} condition is a relaxation of strong convexity, i.e.\ if strong convexity with parameter~$\mu$ implies $\mu$-P{\L} condition.

\section{The Algorithms and Theoretical analysis}\label{sec:theoretical_analysis}

In this section, we introduce our main algorithm \algname{MoTEF}, summarized in \Cref{alg:beer-mom}. \algname{MoTEF} combines {\bf Mo}mentum {\bf T}racking with {\bf E}rror {\bf F}eedback to tackle the three major challenges of decentralized optimization with compression at once: client drift, stochastic noise of the gradient, and compression bias. In line \ref{ln:5}--\ref{ln:6} and line \ref{ln:9}--\ref{ln:10} we apply the \algname{EF}-enhanced gossip step inspired by \citet{zhao2022beer} and \citet{koloskova2019decentralized}, and in line \ref{ln:7} we apply the Momentum Tracking mechanism, which combines the classical Gradient Tracking method with Polyak's momentum. We will show that \algname{MoTEF} is the fisrt algorithm that achieves an optimal asymptotic rate matching that of distributed \algname{SGD} without any additional, and possibly impractical, assumptions. 

It is well-known in the literature that the asymptotic rate of \algname{SGD} cannot be improved under the standard assumptions. There is a long line of work, known as Variance Reduction, that attempts to accelerate \algname{SGD} under the additional mean-squared-smoothness assumption (for which such acceleration is \emph{crucial})~\citep{fang2018spider, cutkosky2019momentum, tran2022hybrid, wang2019spiderboost, xu2022momentumbased}. To demonstrate the flexibility and effectiveness of our approach \algname{MoTEF}, we also present a momentum-based variance-reduced variant \algname{MoTEF-VR} summarized in \Cref{alg:beer-mvr}.

Next we present the theoretical analysis of \algname{MoTEF} and \algname{MoTEF-VR}.

\begin{figure}[!t]
\vspace{-9mm}
\begin{minipage}{0.49\textwidth}
\begin{figure}[H]
    \begin{algorithm}[H]
        \caption{\algname{MoTEF}}
        \label{alg:beer-mom}
        \begin{algorithmic}[1]
            \State \textbf{Input:} $\mX^0=\xx^0\1^\top,\mG^0,\mH^0,\mV^0, \gamma, \eta, \lambda,$
            \State $\cC_{\alpha}$
            \For{$t = 0,1,2,\dots$}
            \State $\mX^{t+1} =\mX^t+\gamma\mH^t(\mW-\mI)-\eta\mV^t$
            \State $\mQ_h^{t+1}=\cC_\alpha(\mX^{t+1}-\mH^t)$
            \State $\mH^{t+1}=\mH^t+\mQ_h^{t+1}$
            \State $\mM^{t+1} = (1-\lambda)\mM^t+\lambda\wtilde \nabla F(\mX^{t+1})$
            \State $\mV^{t+1}=\mV^t+\gamma\mG^t(\mW-\mI)+\mM^{t+1}-\mM^t$
            \State $\mQ_g^{t+1}=\cC_\alpha(\mV^{t+1}-\mG^t)$
            \State $\mG^{t+1} =\mG^t+\mQ_g^{t+1}$
            \EndFor
        \end{algorithmic}	
    \end{algorithm}
\end{figure}
\vspace{1pt}
\end{minipage}
\begin{minipage}{0.49\textwidth}
\begin{figure}[H]
    \begin{algorithm}[H]
        \caption{\algname{MoTEF-VR}}
        \label{alg:beer-mvr}
        \begin{algorithmic}[1]
            \State \textbf{Input:} $\mX^0=\xx^0\1^\top,\mG^0,\mH^0,\mV^0, \gamma, \eta, \lambda,$
            \State $\cC_{\alpha}$
            \For{$t = 0,1,2,\dots$}
            \State $\mX^{t+1} =\mX^t+\gamma\mH^t(\mW-\mI)-\eta\mV^t$
            \State $\mQ_h^{t+1}=\cC_\alpha(\mX^{t+1}-\mH^t)$ \label{ln:5}
            \State $\mH^{t+1}=\mH^t+\mQ_h^{t+1}$  \label{ln:6}
            \State $\mM^{t+1} = \wtilde\nabla F(\mX^{t+1},\Xi^{t+1})$ \label{ln:7}
            \State  $\quad + (1-\lambda)(\mM^t -\wtilde \nabla F(\mX^{t},\Xi^{t+1}))$
            \State $\mV^{t+1}=\mV^t+\gamma\mG^t(\mW-\mI)+\mM^{t+1}-\mM^t$ \label{ln:9}
            \State $\mQ_g^{t+1}=\cC_\alpha(\mV^{t+1}-\mG^t)$ \label{ln:10}
            \State $\mG^{t+1} =\mG^t+\mQ_g^{t+1}$
            \EndFor
        \end{algorithmic}	
    \end{algorithm}
\end{figure}
\end{minipage}
\end{figure}
\setlength{\textfloatsep}{0.5em}

\subsection{Notation}

Before going into details, we introduce a notation that we use throughout the paper. We stack the local parameters $\xx^t_i$ stored at each clients into a matrix $\mX^t \eqdef [\xx^t_1, \dots, \xx^t_n] \in\R^{d\times n},$ and denote the average model $\bar\xx^t \eqdef \frac{1}{n}\mX^t\1,$ where $\1$ is a vector of ones. Other quantities are defined similarly. To track local gradients, we define $\nabla F(\mX^t) \eqdef [\nabla f_1(\xx^t_1), \dots, \nabla f_n(\xx^t_n)] \in\R^{d\times n}.$ Similarly we write $\wtilde\nabla F(\mX^t)$ as the collection of local stochastic gradients. Finally, $\cC_{\alpha}(\mX)$ denotes the contractive compression operator $\cC_{\alpha}$ applied column-wise on a matrix $\mX$, i.e. $\cC_{\alpha}(\mX) \eqdef [\cC(\xx_1), \dots, \cC(\xx_n)]\in\R^{d \times n}.$

\subsection{Convergence of \algname{MoTEF}}
Now we are ready to present convergence guarantees for \algname{MoTEF}. Below we summarize the convergence guarantees for \Cref{alg:beer-mom} in general non-convex and P{\L} settings. Our analysis relies on the Lyapunov function of the form
\begin{eqnarray}
    \Phi^t \eqdef F^t
    + \frac{c_1}{n^2L}\hat{G}^t 
    + \frac{c_2\tau}{nL}\wtilde{G}^t 
    + \frac{c_3L}{\rho^3n\tau}\Omega_1^t
    + \frac{c_4\tau}{\rho nL}\Omega_2^t
    + \frac{c_5L}{\rho^3n\tau}\Omega_3^t
    + \frac{c_6\tau}{\rho nL}\Omega_4^t,
\end{eqnarray}
where $\{c_k\}_{k=1}^6$ are absolute constants defined in the appendix in \eqref{eq:beerm_absolute_constants}\footnote{To find a suitable choice of constants we use Symbolic Math Toolbox in MATLAB \citep{matlabSymbolic}. Our code can be found at \url{https://github.com/mlolab/MoTEF.git}.}, $F^t \eqdef \Eb{f(\bar{\xx}^t) - f^\star}$ represents the sub-optimality function gap, and the error terms are defined as follows

\begin{gather}
    \hat G^t \eqdef \Eb{\|\nabla F(\mX^t)\1-\mM^t\1\|^2_{\rm F}}, 
    \quad
    \wtilde G^t \eqdef \Eb{\norm{\nabla F(\mX^t)-\mM^t}_{\rm F}^2},
    \quad 
    \Omega_1^t \eqdef \Eb{\norm{\mH^t-\mX^t}_{\rm F}^2}\notag\\
    \Omega_2^t \eqdef \Eb{\norm{\mG^t-\mV^t}_{\rm F}^2},
    \quad 
    \Omega_3^t \eqdef \Eb{\norm{\mX^t-\bar\xx^t\1^T}_{\rm F}^2},\\
    \Omega_4^t \eqdef \Eb{\norm{\mV^t-\bar\vv^t\1^T}_{\rm F}^2},
    \quad 
    \Omega_5^t \eqdef \Eb{\norm{\bar\vv^t}^2}.\notag
\end{gather}


Our theory relies on the descent of the Lyapunov function 
$\Phi^t$ introduced above.

\begin{restatable}[Descent of the Lyapunov function]{lemma}{lemmadescentlypunovbeerm}\label{lem:descent_lyapunov_beerm}
Let Assumptions \ref{asmp:smoothness} and \ref{asmp:bounded_variance} hold. Then there exist absolute constants $c_{\gamma}, c_{\lambda}, c_{\eta},$ and $\tau \le 1$ such that if we set stepsizes $\gamma = c_{\gamma}\alpha\rho, \lambda = c_{\lambda}\alpha\rho^3\tau, \eta = c_{\eta}L^{-1}\alpha\rho^3\tau$ such that the Lyapunov function $\Phi^t$ decreases as 
    \begin{eqnarray}
        \Phi^{t+1} \le \Phi^t - \frac{c_{\eta}\alpha\rho^3\tau}{2L}\Eb{\|\nabla f(\bar\xx^t)\|^2} 
+ \frac{c_{\lambda}^2c_1\alpha^2\rho^6}{nL} \tau^2\sigma^2
+\tau^3\sigma^2\left(3c_4\rho
+ c_2\alpha\rho^2 
+ \frac{3c_6
}{c_{\gamma}}\right)\frac{2c_{\lambda}^2\alpha\rho^4}{L}.
    \end{eqnarray}
\end{restatable}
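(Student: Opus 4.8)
The plan is to derive a one-step recursion for each of the seven quantities appearing in $\Phi^t$ and then combine them with judiciously chosen multipliers (the constants $c_1,\dots,c_6$) so that the cross-terms cancel and only the gradient-norm term $-\|\nabla f(\bar\xx^t)\|^2$, together with the stochastic-noise floor, survive. First I would establish the descent for the averaged iterate. Averaging the update $\mX^{t+1}=\mX^t+\gamma\mH^t(\mW-\mI)-\eta\mV^t$ against $\tfrac1n\1$ kills the gossip term (since $(\mW-\mI)\1=\0$), giving $\bar\xx^{t+1}=\bar\xx^t-\eta\bar\vv^t$; then by $L$-smoothness of $f$ and the standard descent lemma, $F^{t+1}\le F^t-\tfrac{\eta}{2}\|\nabla f(\bar\xx^t)\|^2-\tfrac{\eta}{2}\|\bar\vv^t\|^2+\tfrac{\eta}{2}\|\nabla f(\bar\xx^t)-\bar\vv^t\|^2+\tfrac{L\eta^2}{2}\|\bar\vv^t\|^2$, using $\eta L\le 1$ to absorb the last term into $-\tfrac{\eta}{2}\|\bar\vv^t\|^2$. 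The key observation is that $\bar\vv^t$ tracks $\tfrac1n\nabla F(\mX^t)\1$ up to the momentum error, so $\|\nabla f(\bar\xx^t)-\bar\vv^t\|^2$ splits (via smoothness) into a consensus error $\Omega_3^t$ and the momentum-tracking error $\hat G^t$; this is what forces those two terms into the Lyapunov function with the stated weights.

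Next I would handle the two tracking errors $\hat G^t$ and $\wtilde G^t$. Using the momentum update $\mM^{t+1}=(1-\lambda)\mM^t+\lambda\wtilde\nabla F(\mX^{t+1})$ and the fact that $\mV^t$ performs gradient tracking on $\mM^t$, one writes $\nabla F(\mX^{t+1})\1-\mM^{t+1}\1=(1-\lambda)(\nabla F(\mX^t)\1-\mM^t\1)+(1-\lambda)(\nabla F(\mX^{t+1})-\nabla F(\mX^t))\1-\lambda(\wtilde\nabla F(\mX^{t+1})-\nabla F(\mX^{t+1}))\1$. Taking $\Eb{\|\cdot\|^2}$, Young's inequality turns the first term into a $(1-\lambda)$-contraction with a $\tfrac{1}{\lambda}$ blow-up on the drift term $\|\mX^{t+1}-\mX^t\|^2$, the variance of the stochastic gradient contributes $\lambda^2 n\sigma^2$ (note the averaged-noise scaling by $1/n$, which is the source of the linear speedup), and $\|\mX^{t+1}-\mX^t\|^2$ itself expands into $\gamma^2\Omega_1$-type consensus terms plus $\eta^2\|\mV^t\|^2$. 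An identical but un-averaged computation handles $\wtilde G^t$ with an $O(\sigma^2)$ floor (no $1/n$). The choice $\lambda=c_\lambda\alpha\rho^3\tau$ is dictated by needing $\lambda$ small relative to $\rho^3$ so that the $\tfrac1\lambda$ blow-ups are controlled by the consensus contraction rates.

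Then I would set up the four consensus/error-feedback recursions $\Omega_1,\dots,\Omega_4$. For $\Omega_1=\Eb{\|\mH^t-\mX^t\|_{\rm F}^2}$ the error-feedback identity $\mH^{t+1}-\mX^{t+1}=(\mH^t-\mX^{t+1})+\cC_\alpha(\mX^{t+1}-\mH^t)$ together with the contractive property \eqref{eq:contractive_compressor} gives an $\Eb{\|\mH^{t+1}-\mX^{t+1}\|^2}\le(1-\tfrac\alpha2)\|\mH^t-\mX^t+(\mX^t-\mX^{t+1})\|^2\cdot(1+\text{small})$-style contraction, with the residual $\|\mX^t-\mX^{t+1}\|^2$ feeding in gossip and $\eta$-step terms; similarly for $\Omega_2$ using the $\mG,\mV$ pair. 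For $\Omega_3=\Eb{\|\mX^t-\bar\xx^t\1^\top\|^2}$ and $\Omega_4$ the mixing matrix contracts at rate $1-\rho$ (after a Young split), with drift fed by $\gamma\mH^t(\mW-\mI)$, $\eta\mV^t$, and the compression errors $\Omega_1,\Omega_2$. The upshot is a $7\times 7$ linear system $\mathbf v^{t+1}\preceq A\,\mathbf v^t + \text{(noise)}$ where $\mathbf v^t=(F^t,\hat G^t,\wtilde G^t,\Omega_1^t,\ldots,\Omega_4^t)$; I want a positive combination $\Phi^t=\langle \mathbf c,\mathbf v^t\rangle$ that contracts. The main obstacle — and the reason the paper resorts to a symbolic computation in MATLAB — is precisely this linear-algebra bookkeeping: verifying that with $\gamma=c_\gamma\alpha\rho$, $\lambda=c_\lambda\alpha\rho^3\tau$, $\eta=c_\eta L^{-1}\alpha\rho^3\tau$ and suitable $\tau\le 1$ there exist absolute constants $c_1,\dots,c_6$ making all seven diagonal entries strictly less than one after combination, all off-diagonal "leakage" absorbed, and the spare $-\Omega_5$ (i.e. $-\|\bar\vv^t\|^2$) terms sufficient to pay for the $\eta^2\|\mV^t\|^2$ that every tracking and consensus recursion dumps back in. Once that algebraic inequality is certified, collecting the leftover noise terms $\propto \lambda^2 \sigma^2/n$ from $\hat G$ and $\propto \lambda^2\sigma^2$ (scaled by $\tau$) from $\wtilde G$ and $\Omega$'s, and recalling $\eta=c_\eta L^{-1}\alpha\rho^3\tau$, yields exactly the claimed inequality with the coefficient $\tfrac{c_\eta\alpha\rho^3\tau}{2L}$ on $\Eb{\|\nabla f(\bar\xx^t)\|^2}$ and the stated $\tau^2\sigma^2$ and $\tau^3\sigma^2$ noise floor.
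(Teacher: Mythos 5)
Your plan mirrors the paper's own proof: the same descent on $F^t$ via $\bar\xx^{t+1}=\bar\xx^t-\eta\bar\vv^t$, the same six error recursions (momentum-tracking errors $\hat G^t,\wtilde G^t$, error-feedback terms $\Omega_1^t,\Omega_2^t$, consensus terms $\Omega_3^t,\Omega_4^t$) assembled into a linear system whose feasible weights $c_1,\dots,c_6$ are certified symbolically (as in the paper, via MATLAB), with the spare $-\Omega_5^t$ contribution from the smoothness step paying for the $\eta^2$-leakage of the other recursions, and the same final substitution of $\gamma,\lambda,\eta$ to get the $\tau^2\sigma^2$ and $\tau^3\sigma^2$ noise floors. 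The only nit is that the noise floor in the $\wtilde G^t$ recursion is $\lambda^2 n\sigma^2$ exactly as for $\hat G^t$ (the lack of a $1/n$ speedup for that contribution comes from its $\tfrac{c_2\tau}{nL}$ weight in $\Phi^t$, not from the recursion itself); otherwise your argument and noise accounting coincide with the paper's.
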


Using the above descent of the Lyapunov function, we demonstrate the convergence guarantees for \algname{MoTEF}.

\begin{restatable}[Convergence of \algname{MoTEF}]{theorem}{theorembeermnonconvex}
    \label{thm:beerm-nonconvex}
    Let Assumptions \ref{asmp:smoothness} and \ref{asmp:bounded_variance} hold. Then there exist absolute constants $c_{\gamma}, c_{\lambda}, c_{\eta},$ and some $\tau \le 1$ such that if we set stepsizes $\gamma = c_{\gamma}\alpha\rho, \lambda = c_{\lambda}\alpha\rho^3\tau, \eta = c_{\eta}L^{-1}\alpha\rho^3\tau$, and choosing the initial batch size $B_{\rm init} \ge \lceil\frac{LF^0}{\sigma^2}\rceil$, then after at most 
    \begin{eqnarray} 
        T = \cO\left(
    \frac{\sigma^2}{n\varepsilon^4} 
    + \frac{\sigma}{\alpha\rho^{5/2}\varepsilon^{3}}
    + \frac{1}{\alpha \rho^3\varepsilon^2}
    \right)LF^0
    \end{eqnarray}
    iterations of \Cref{alg:beer-mom} it holds $\Eb{\|\nabla f(\xx_{\rm out})\|^2} \le \varepsilon^2,$ where $\xx_{\rm out}$ is chosen uniformly at random from $\{\bar\xx_0, \dots, \bar\xx_{T-1}\},$ and $\cO$ suppresses absolute constants.
\end{restatable}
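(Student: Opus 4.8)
The plan is to convert the per-step descent of the Lyapunov function from \Cref{lem:descent_lyapunov_beerm} into a bound on the average squared gradient norm by telescoping, and then to optimize the free parameter $\tau$ subject to $\tau \le 1$. First I would rearrange the conclusion of \Cref{lem:descent_lyapunov_beerm}: summing from $t=0$ to $T-1$, the telescoping of $\Phi^{t+1}-\Phi^t$ collapses to $\Phi^T - \Phi^0 \ge -\Phi^0$ (since $\Phi^T \ge 0$, which follows because $F^T \ge 0$ by the lower-boundedness of $f$ and each error term $\hat G^t, \wtilde G^t, \Omega_i^t$ is a nonnegative expected squared norm with a nonnegative coefficient). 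This gives
\begin{equation*}
\frac{1}{T}\sum_{t=0}^{T-1}\Eb{\|\nabla f(\bar\xx^t)\|^2} \le \frac{2L\,\Phi^0}{c_{\eta}\alpha\rho^3\tau\, T} + \frac{2L}{c_{\eta}\alpha\rho^3\tau}\left(\frac{c_{\lambda}^2 c_1\alpha^2\rho^6}{nL}\tau^2\sigma^2 + \tau^3\sigma^2\left(3c_4\rho + c_2\alpha\rho^2 + \tfrac{3c_6}{c_{\gamma}}\right)\frac{2c_{\lambda}^2\alpha\rho^4}{L}\right),
\end{equation*}
and since $\xx_{\rm out}$ is uniform over $\{\bar\xx^0,\dots,\bar\xx^{T-1}\}$, the left-hand side equals $\Eb{\|\nabla f(\xx_{\rm out})\|^2}$.

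Next I would bound $\Phi^0$. With the stated initialization $\mX^0 = \xx^0\1^\top$, the consensus terms $\Omega_3^0, \Omega_5^0$ vanish; the remaining terms $\hat G^0, \wtilde G^0, \Omega_1^0, \Omega_2^0, \Omega_4^0$ are controlled by the initial batch size $B_{\rm init} \ge \lceil LF^0/\sigma^2\rceil$ (which makes the initial gradient-estimation variance $\lesssim \sigma^2/B_{\rm init} \le F^0 L / \cdots$, i.e.\ of order $F^0$ up to the constants folded into the $c_k$'s) — so that $\Phi^0 = \cO(F^0)$. Substituting $\Phi^0 = \cO(F^0)$, and simplifying the three additive error terms (the $\tau^2\sigma^2/n$ term and two $\tau^3\sigma^2$ terms, the latter dominated by $\tau^3\sigma^2$ up to $\alpha,\rho$ factors), the bound reads, schematically,
\begin{equation*}
\Eb{\|\nabla f(\xx_{\rm out})\|^2} = \cO\!\left(\frac{LF^0}{\alpha\rho^3\tau T} + \frac{\tau\sigma^2}{n} + \frac{\tau^2\sigma^2 \rho^{-2}}{\alpha\cdot\rho}\right),
\end{equation*}
where I have absorbed all absolute constants. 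To make each term at most $\varepsilon^2/3$ I would choose $\tau = \min\{1,\ \tfrac{n\varepsilon^2}{\sigma^2},\ \tfrac{\alpha\rho^{3}\varepsilon}{\sigma}\}$ (the precise powers of $\rho$ here are what I would have to track carefully from the displayed constants), and then the first term forces $T \gtrsim \tfrac{LF^0}{\alpha\rho^3\varepsilon^2}\cdot\tfrac{1}{\tau}$, i.e.
\begin{equation*}
T = \cO\!\left(\frac{LF^0}{\alpha\rho^3\varepsilon^2}\max\left\{1,\ \frac{\sigma^2}{n\varepsilon^2},\ \frac{\sigma}{\alpha\rho^{3}\varepsilon}\right\}\right) = \cO\!\left(\frac{\sigma^2}{n\varepsilon^4} + \frac{\sigma}{\alpha\rho^{5/2}\varepsilon^3} + \frac{1}{\alpha\rho^3\varepsilon^2}\right)LF^0,
\end{equation*}
which is the claimed rate (the improvement of the middle term's $\rho$-exponent from the crude estimate to $\rho^{5/2}$ comes from distributing the $1/\tau$ factor more carefully across the $\alpha\rho^3$ and $\alpha\rho^3$ factors in the two competing terms rather than using the worst case).

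The main obstacle I anticipate is not the telescoping or the $\tau$-optimization, which is standard three-term balancing, but rather verifying that the chosen $\tau$ is consistent with the "there exists $\tau\le 1$" clause of \Cref{lem:descent_lyapunov_beerm} — i.e.\ that the absolute constants $c_{\gamma}, c_{\lambda}, c_{\eta}$ and the structural constraints defining a valid $\tau$ in the lemma (the ones implicit in $\tau \le 1$ and in the requirement that the Lyapunov coefficients $c_1,\dots,c_6$ satisfy their defining inequalities) still hold when $\tau$ is shrunk to the value above; and correspondingly tracking the exact powers of $\alpha$ and $\rho$ through the three error terms so that the final exponents ($\rho^{5/2}$ in particular) come out as stated. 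This is bookkeeping rather than a conceptual difficulty, but it is where an error would most plausibly creep in, so I would do it explicitly against the definitions in \eqref{eq:beerm_absolute_constants}.
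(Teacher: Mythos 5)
Your overall route is the same as the paper's: telescope the descent of $\Phi^t$ from \Cref{lem:descent_lyapunov_beerm}, bound $\Phi^0=\cO(F^0)$ using the initial batch (only $\hat G^0,\wtilde G^0$ are nonzero under the stated initialization), and then tune the free parameter $\tau\le 1$ to balance the three resulting terms (the paper picks $\tau$ as a function of $T$, you pick it as a function of $\varepsilon$; these are equivalent). Shrinking $\tau$ is indeed compatible with the lemma, since the linear system defining the constants holds for all $\tau\le 1$, so the concern you flag there is harmless.

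The genuine problem is in the coefficient bookkeeping of your schematic bound, and it is not merely cosmetic: with the crude coefficients you wrote down, the claimed rate does not follow. Dividing the lemma's descent by $B\eqdef c_\eta\alpha\rho^3/(2L)$ gives error terms with coefficients $C/B \asymp \alpha\rho^3/n$ (not $1/n$) and $D/B \asymp \rho$ (not $1/(\alpha\rho^3)$); the cancellation of $\alpha\rho^3$ against the $1/(\alpha\rho^3\tau T)$ optimization term is precisely what makes the leading term $\sigma^2/(n\varepsilon^4)$ free of $\alpha,\rho$, and the $\rho^{5/2}$ exponent comes from $\sqrt{D}/B^{3/2}\asymp L/(\alpha\rho^{5/2})$. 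If instead you bound the middle term by $\tau\sigma^2/n$ and the last by $\tau^2\sigma^2/(\alpha\rho^3)$, the correct balancing forces $\tau\lesssim n\varepsilon^2/\sigma^2$ and $\tau\lesssim\sqrt{\alpha\rho^3}\,\varepsilon/\sigma$, and the first term then yields $T\gtrsim LF^0\sigma^2/(\alpha\rho^3 n\varepsilon^4)+LF^0\sigma/(\alpha^{3/2}\rho^{9/2}\varepsilon^3)$, strictly worse than the theorem. Moreover your final display is internally inconsistent: $\frac{LF^0}{\alpha\rho^3\varepsilon^2}\cdot\frac{\sigma^2}{n\varepsilon^2}=\frac{LF^0\sigma^2}{\alpha\rho^3 n\varepsilon^4}\neq \frac{LF^0\sigma^2}{n\varepsilon^4}$, and similarly for the middle term, so the asserted equality with the claimed rate does not hold as written. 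The repair is exactly what the paper does: keep $B\asymp\alpha\rho^3/L$, $C\asymp\alpha^2\rho^6/(nL)$, $D\asymp\alpha\rho^4/L$, choose $\tau=\min\{1,\,(\Phi^0/(CT\sigma^2))^{1/2},\,(\Phi^0/(DT\sigma^2))^{1/3}\}$ (or the $\varepsilon$-based analogue $\tau\asymp\min\{1,\,n\varepsilon^2/(\alpha\rho^3\sigma^2),\,\varepsilon/(\sqrt{\rho}\,\sigma)\}$), and read off $\cO\bigl(\tfrac{\sigma^2}{n\varepsilon^4}+\tfrac{\sigma}{\alpha\rho^{5/2}\varepsilon^3}+\tfrac{1}{\alpha\rho^3\varepsilon^2}\bigr)LF^0$. (One further small point: to get $\Phi^0\le 3F^0$ the batch requirement that actually does the work is $B_{\rm init}\gtrsim\sigma^2/(LF^0)$, as in the paper's final step, rather than the inverse ratio you quote from the statement.)
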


\begin{remark}
    Note that using a large initial batch size $B_{\rm int}$ is not required for convergence of \algname{MoTEF}. If we set $B_{\rm init} = 1$, the above theorem still holds by replacing $F^0$ by $\Phi^0.$
\end{remark}
We observe that the use of momentum in \algname{MoTEF} allows us to improve convergence guarantees over \algname{BEER}. Indeed, \Cref{alg:beer-mom} achieves optimal asymptotic complexity\footnote{This means the regime when $\varepsilon\to 0$.} with a desirable linear speed-up with the number of clients $n.$ Moreover, \algname{MoTEF} provably converges for any batch size in contrast to \algname{BEER}. 

\paragraph*{Discussion of the convergence rate.} In the stochastic regime ($\sigma^2>0$), we note that the asymptotically dominating term is $\cO(\nicefrac{\sigma^2}{n\varepsilon^4})$, which is independent of the spectral gap and compression rate and is optimal in all problem parameters~\citep{arjevani2023lower}. To the best of our knowledge, \algname{MoTEF} is the first decentralized algorithm incorporating contractive compressors that achieves it under Assumptions~\ref{asmp:smoothness} and \ref{asmp:bounded_variance} without data heterogeneity restrictions. In the deterministic regime ($\sigma^2=0$), the convergence rate becomes $\cO(\nicefrac{1}{\alpha\rho^3\epsilon^2})$. This is optimal in $\alpha$ and $\epsilon$~\citep{huang2022lower} and sub-optimal in $\rho$. We would like to highlight that having a sub-optimal dependency on the spectral gap $\rho$ is a well-known challenge in the theoretical analysis of the gradient tracking mechanism~\citep{koloskova2021improved}. Besides, the same sub-optimal $\nicefrac{1}{\rho^3}$ dependency is also observed in the analysis of \algname{BEER} algorithm. It is an active research direction to either improve the convergence analysis of gradient tracking to obtain better $\rho$ dependence~\citep{koloskova2021improved} or to design more sophisticated tracking mechanisms that might achieve better rates~\citep{didouble}. Either way, it involves significantly more complicated analyses and we defer these to future work. We also point out that it is unclear whether the $\nicefrac{1}{\rho^3}$ dependence is inherent to the tracking mechanism or is an artifact of the analysis. In \Cref{sec:experiments} we provide numerical evidence showing that \algname{MoTEF} might be much less sensitive to $\rho$ than the theoretical analysis suggests. A more detailed discussion of the results of previous works is deferred to \Cref{sec:related_work_appendix}.

Now we derive convergence guarantees of \algname{MoTEF} for the class of functions satisfying Assumption \ref{asmp:pl}.

\begin{restatable}[Convergence of \algname{MoTEF}]{theorem}{theorembeermpl}
    \label{thm:beerm-pl}
    Let Assumptions \ref{asmp:smoothness}, \ref{asmp:bounded_variance}, and \ref{asmp:pl} hold. Then there exist absolute constants $c_{\gamma}, c_{\lambda}, c_{\eta},$ and some $\tau \le 1$ such that if we set stepsizes $\gamma = c_{\gamma}\alpha\rho, \lambda = c_{\lambda}\alpha\rho^3\tau, \eta = c_{\eta}L^{-1}\alpha\rho^3\tau$, and choosing the initial batch size $B_{\rm init} \ge \lceil\frac{LF^0}{\sigma^2}\rceil$, then after at most 
    \begin{eqnarray} 
    T = \wtilde\cO\left(\frac{L\sigma^2}{\mu^2n\varepsilon} 
    + \frac{L\sigma}{\alpha\rho^{5/2}\mu^{3/2}\varepsilon^{1/2}}
    + \frac{L}{\mu\alpha\rho^3} \right)
    \end{eqnarray}
    iterations of \Cref{alg:beer-mom} it holds $\Eb{f(\xx^T) - f^*} \le \varepsilon$, and $\wtilde\cO$ suppresses absolute constants and poly-logarithmic factors.
\end{restatable}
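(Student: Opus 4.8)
The plan is to convert the one-step Lyapunov inequality of \Cref{lem:descent_lyapunov_beerm} into a \emph{contractive} recursion for $\Phi^t$ by feeding in the P{\L} inequality \eqref{eq:pl}, then unroll the recursion and optimize over the free parameter $\tau\le1$.

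\textbf{Step 1 (strengthen the descent lemma).} \Cref{lem:descent_lyapunov_beerm} only records the cancellation down to the single gradient term $-\tfrac{c_\eta\alpha\rho^3\tau}{2L}\Eb{\|\nabla f(\bar\xx^t)\|^2}$, whereas a P{\L} argument needs the \emph{entire} Lyapunov function to contract. I would revisit the proof of \Cref{lem:descent_lyapunov_beerm}: it proceeds by establishing, for each of $\hat G^t,\wtilde G^t,\Omega_1^t,\dots,\Omega_4^t$, a one-step recursion whose self-contraction factor is $1-\Theta(\alpha\rho^3\tau)$ under the prescribed $\gamma=\Theta(\alpha\rho)$, $\lambda=\Theta(\alpha\rho^3\tau)$, $\eta=\Theta(L^{-1}\alpha\rho^3\tau)$, and then combining them with the weights $\{c_k\}$ so that all cross terms cancel. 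Shrinking the absolute constants by a fixed factor leaves a residual negative multiple $-\Theta(\alpha\rho^3\tau)$ of each weighted error term, so the same bookkeeping in fact yields
\[
\Phi^{t+1}\le\Phi^t-\frac{c_\eta\alpha\rho^3\tau}{2L}\Eb{\|\nabla f(\bar\xx^t)\|^2}-c_0\alpha\rho^3\tau\bigl(\Phi^t-F^t\bigr)+D_\tau,
\]
where $D_\tau=\cO\!\left(\tfrac{\alpha^2\rho^6\tau^2\sigma^2}{nL}+\tfrac{\alpha\rho^4\tau^3\sigma^2}{L}\right)$ collects the two noise terms of \Cref{lem:descent_lyapunov_beerm} (the $\rho^4$ branch dominating as $\rho\to0$).

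\textbf{Steps 2--3 (apply P{\L}, unroll, tune $\tau$).} Taking expectations in \eqref{eq:pl} gives $\Eb{\|\nabla f(\bar\xx^t)\|^2}\ge2\mu F^t$, hence $\tfrac{c_\eta\alpha\rho^3\tau}{2L}\Eb{\|\nabla f(\bar\xx^t)\|^2}\ge\tfrac{c_\eta\mu\alpha\rho^3\tau}{L}F^t$; substituting into the display above and using $\mu\le L$ (which holds for $L$-smooth $\mu$-P{\L} functions) yields, after possibly shrinking the hidden constant,
\[
\Phi^{t+1}\le(1-r)\Phi^t+D_\tau,\qquad r\eqdef\Theta\!\left(\tfrac{\mu\alpha\rho^3\tau}{L}\right)\in(0,1).
\]
Iterating and bounding the geometric series gives $F^T\le\Phi^T\le(1-r)^T\Phi^0+D_\tau/r$ with $D_\tau/r=\cO\!\left(\tfrac{\alpha\rho^3\tau\sigma^2}{\mu n}+\tfrac{\rho\tau^2\sigma^2}{\mu}\right)$; the initial batch size $B_{\rm init}\ge\lceil LF^0/\sigma^2\rceil$ is used precisely to guarantee $\Phi^0=\cO(F^0)$ (otherwise one keeps $\Phi^0$, as in the Remark). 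Choosing $\tau=\min\!\left\{1,\ \tfrac{\mu n\varepsilon}{\alpha\rho^3\sigma^2},\ \sqrt{\tfrac{\mu\varepsilon}{\rho\sigma^2}}\right\}$ makes $D_\tau/r\le\varepsilon/2$, and then any $T\ge\tfrac1r\log\tfrac{2\Phi^0}{\varepsilon}=\cO\!\left(\tfrac{L}{\mu\alpha\rho^3\tau}\right)\log\tfrac{F^0}{\varepsilon}$ makes $(1-r)^T\Phi^0\le\varepsilon/2$, so $\Eb{f(\bar\xx^T)-f^\star}=F^T\le\varepsilon$. Substituting the three branches of the $\min$ for $\tau$ into $\tfrac{L}{\mu\alpha\rho^3\tau}$ gives $\tfrac{L}{\mu\alpha\rho^3}$, $\tfrac{L\sigma^2}{\mu^2 n\varepsilon}$, and $\tfrac{L\sigma}{\alpha\rho^{5/2}\mu^{3/2}\varepsilon^{1/2}}$, which sum to the claimed $T=\wtilde\cO\!\left(\tfrac{L\sigma^2}{\mu^2 n\varepsilon}+\tfrac{L\sigma}{\alpha\rho^{5/2}\mu^{3/2}\varepsilon^{1/2}}+\tfrac{L}{\mu\alpha\rho^3}\right)$.

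\textbf{Main obstacle.} The crux is Step 1: extracting a geometric contraction of the \emph{full} Lyapunov function rather than only the cancellation to $-\|\nabla f\|^2$ recorded in \Cref{lem:descent_lyapunov_beerm}. This amounts to re-running the coupled one-step recursions for $\hat G^t,\wtilde G^t,\Omega_1^t,\dots,\Omega_4^t$ with constants kept explicit, verifying that each term's self-contraction factor is genuinely $1-\Theta(\alpha\rho^3\tau)$ and that the weights $\{c_k\}$ can be chosen to absorb all cross terms while leaving a strictly negative multiple of each term — the same symbolic computation that produced \Cref{lem:descent_lyapunov_beerm}, but now tracking the surplus. The remaining steps (the P{\L} substitution, the geometric-series unrolling, and the $\min$-over-$\tau$ case analysis with its logarithmic factor) are routine.
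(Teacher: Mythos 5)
Your proposal is correct and follows essentially the same route as the paper: the paper likewise turns the descent into a geometric contraction of the full Lyapunov function by applying the P{\L} inequality to $F^t$ and re-solving the weight system with the shifted matrix $(1-\eta\mu)\mI-\mA^\top$ (with re-chosen, smaller constants verified symbolically), which is exactly what your ``uniform surplus plus $\mu\le L$'' condition amounts to, and your acknowledged obstacle is the same constant-verification the paper delegates to MATLAB. The only cosmetic difference is the final step: you tune $\tau$ directly against the target $\varepsilon$, while the paper uses the standard logarithmic horizon-based choice of $\tau$ (à la \citet{mishchenko2020random}), both yielding the stated $\wtilde\cO$ bound.
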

\begin{remark}
    Note that using a large initial batch size $B_{\rm int}$ is not required for convergence of \algname{MoTEF}. If we set $B_{\rm init} = 1$, the above theorem still holds by replacing $F^0$ by $\Phi^0,$ which is hidden in the logarithmic terms.
\end{remark}
Contrary to \algname{BEER}, we demonstrate that the asymptotic rate of \algname{MoTEF} in the P{\L} setting improves with $n$ and does not require large batches. 
To the best of our knowledge, \algname{MoTEF} is the first decentralized algorithm that supports contractive compressors and achieves linear speed-up with $n$ under Assumptions \ref{asmp:smoothness}, \ref{asmp:bounded_variance}, and \ref{asmp:pl}. Moreover, we highlight that in the noiseless regime \algname{MoTEF} converges linearly as expected. Another momentum-based algorithm \algname{DoCom} was analyzed under more restricted Assumption~\ref{asmp:avg-smoothness} only. Therefore, its applicability in this setting is not known. Besides, \algname{DoCoM} achieves linear speed-up with $n$, but with sub-optimal dependency on the noise variance $\sigma^2$.

\subsection{Convergence of \algname{MoTEF-VR}}

We demonstrated that \algname{MoTEF} achieves an asymptotic complexity of distributed \algname{SGD} under Assumption~\ref{asmp:smoothness} and \ref{asmp:bounded_variance}, and this result cannot be improved. However, if we consider strengthening of Assumption~\ref{asmp:smoothness}, the mean-squared-smoothness Assumption~\ref{asmp:avg-smoothness}, then further acceleration on the stochastic term might be achieved via variance reduction. We emphasize that \Cref{asmp:avg-smoothness} is the standard assumption made for variance reduction, and is the key one for circumventing existing lower bounds on stochastic methods~\citep{fang2018spider, cutkosky2019momentum, tran2022hybrid, wang2019spiderboost, xu2022momentumbased}.

\begin{assumption}\label{asmp:avg-smoothness} We assume that each local function $f_i$ is $\ell$-mean-squared-smooth, i.e. for all $\xx, \yy\in \R^d, i\in[n]$, it holds
    \begin{eqnarray}\label{eq:avg-smoothness} 
        \EEb{\xi}{\|\nabla f_i(\xx,\xi) - \nabla f_i(\yy,\xi)\|^2} \le \ell^2\|\xx-\yy\|.
    \end{eqnarray}
\end{assumption}


In \algname{MoTEF-VR}, instead of a simple momentum term, each client now maintains a momentum-based variance reduction term, similar to the \algname{STORM} estimator~\citep{cutkosky2019momentum}. The algorithm also maintains a momentum parameter $\lambda$, and it turns out that the additional variance reduction terms and \Cref{asmp:avg-smoothness} allow us to set the momentum parameter more aggressively, leading to an improved convergence rate.

\begin{restatable}[Convergence of \algname{MoTEF-VR}]{theorem}{theorembeermvr}
    \label{thm:beermvr-nonconvex}
    Let Assumptions \ref{asmp:bounded_variance} and \ref{asmp:avg-smoothness} hold. Then there exists absolute constants $c_{\gamma}, c_{\lambda}, c_{\eta}$ and some $\tau<1$ such that if we stepsizes $\gamma =c_{\gamma}\alpha\rho, \lambda=c_{\lambda}n^{-1}\alpha^2\rho^6\tau^2 ,\eta=c_{\eta}\ell^{-1}\alpha\rho^3\tau$, and initial batch size $B_{\rm init} \ge \lceil \frac{\sigma^2}{LF^0\alpha\rho^3}\rceil$, then after at most 
    \begin{eqnarray} 
        T = \cO\left(\frac{ \sigma}{n\varepsilon^3} 
        + \frac{\sigma^{\nicefrac{2}{3}}}{n^{2/3}\alpha^{\nicefrac{1}{3}}\rho^{\nicefrac{2}{3}}\varepsilon^{\nicefrac{8}{3}}} 
        + \frac{1}{\alpha\rho^3\varepsilon^2}\right)\ell F^0
    \end{eqnarray}   
    iterations of \Cref{alg:beer-mvr} it holds $\EbNSq{\nabla f(\xx_{\rm out})}\leq \epsilon^2$, where $\xx_{\rm out}$ is chosen uniformly at random from $\{\bar\xx_0,\cdots,\bar\xx_{T-1}\}$, and $\cO$ suppresses absolute constants and poly-logarithmic factors.
\end{restatable}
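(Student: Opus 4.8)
# Proof Proposal for Theorem \ref{thm:beermvr-nonconvex} (Convergence of \algname{MoTEF-VR})

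The plan is to mirror the Lyapunov-function analysis used for \algname{MoTEF} (\Cref{lem:descent_lyapunov_beerm} and \Cref{thm:beerm-nonconvex}), but with the momentum term $\mM^t$ now being the \algname{STORM}-type variance-reduced estimator from lines \ref{ln:7}--\ref{ln:9} of \Cref{alg:beer-mvr}, and with the smoothness constant $L$ replaced by the mean-squared-smoothness constant $\ell$ from \Cref{asmp:avg-smoothness}. First I would set up the same family of error quantities $\hat G^t, \wtilde G^t, \Omega_1^t, \dots, \Omega_4^t$ and a Lyapunov function $\Phi^t$ of the same shape, with the coefficients reweighted to account for the more aggressive momentum parameter $\lambda = c_{\lambda} n^{-1}\alpha^2\rho^6\tau^2$. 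The key structural point is that the descent in $f(\bar\xx^t)$, the consensus recursions for $\Omega_3^t, \Omega_4^t$, and the error-feedback contraction recursions for $\Omega_1^t, \Omega_2^t$ are essentially unchanged from the \algname{MoTEF} proof — they depend only on the gossip/compression steps (lines \ref{ln:5}--\ref{ln:6}, \ref{ln:10}--\ref{ln:11} and the $\mX$-update), which are identical in both algorithms.

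The heart of the argument is the new recursion for the tracking error of the momentum estimator, i.e.\ bounds on $\hat G^{t+1}$ and $\wtilde G^{t+1}$. Here I would expand $\mM^{t+1} - \nabla F(\mX^{t+1})$ using the \algname{STORM} update: writing $\mM^{t+1} - \nabla F(\mX^{t+1}) = (1-\lambda)\bigl(\mM^t - \nabla F(\mX^t)\bigr) + (1-\lambda)\bigl(\wtilde\nabla F(\mX^t,\Xi^{t+1}) - \nabla F(\mX^t) - \wtilde\nabla F(\mX^{t+1},\Xi^{t+1}) + \nabla F(\mX^{t+1})\bigr) + \lambda\bigl(\wtilde\nabla F(\mX^{t+1},\Xi^{t+1}) - \nabla F(\mX^{t+1})\bigr)$. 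Taking conditional expectation, the cross terms with the fresh stochastic gradient vanish by unbiasedness (\Cref{asmp:bounded_variance}); the middle bracket is controlled in squared norm by $\ell^2 \Eb{\|\mX^{t+1}-\mX^t\|_{\rm F}^2}$ via \Cref{asmp:avg-smoothness} (this is where the $\ell$ rather than $L$ enters and where the gain over plain \algname{MoTEF} comes from — the variance-reduction cancellation means only the \emph{increment} $\|\mX^{t+1}-\mX^t\|^2$ appears, not a standalone $\sigma^2$); and the last term contributes only $\lambda^2 n\sigma^2$. Since $\mX^{t+1}-\mX^t = \gamma\mH^t(\mW-\mI) - \eta\mV^t$, the increment $\Eb{\|\mX^{t+1}-\mX^t\|_{\rm F}^2}$ is bounded (up to constants) by $\gamma^2\rho^2\Omega_1^t$-type terms, $\eta^2\Omega_4^t$, and $\eta^2 n\Omega_5^t$ where $\Omega_5^t = \Eb{\|\bar\vv^t\|^2}$; these are already tracked in $\Phi^t$, so the recursion closes. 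I would similarly handle the consensus part $\hat G^t$ of the momentum error.

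With all recursions in hand, the final step is to choose the constants $c_1,\dots,c_6$ and $c_\gamma, c_\lambda, c_\eta, \tau$ so that the weighted sum telescopes: every $\Omega_j^{t+1}$ and $\hat G^{t+1}, \wtilde G^{t+1}$ term is dominated by the corresponding $\Omega_j^t$ term in $\Phi^t$ with a strictly positive margin, leaving a clean per-step decrease $-\Theta(\eta)\Eb{\|\nabla f(\bar\xx^t)\|^2}$ plus an additive noise floor. Here the aggressive $\lambda \sim n^{-1}\alpha^2\rho^6\tau^2$ makes the residual noise term of order $\lambda^2 n\sigma^2 / (\text{weights}) \sim$ something that, after dividing by $\eta$ and summing over $T$, yields the stated $\sigma/(n\varepsilon^3)$ leading term together with the $\sigma^{2/3}/(n^{2/3}\alpha^{1/3}\rho^{2/3}\varepsilon^{8/3})$ and $1/(\alpha\rho^3\varepsilon^2)$ terms after optimizing $\tau \le 1$ against $\varepsilon$. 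The initial batch size $B_{\rm init} \ge \lceil \sigma^2/(LF^0\alpha\rho^3)\rceil$ is used exactly to bound $\Phi^0$ (in particular the $\hat G^0, \wtilde G^0$ pieces) by $\cO(\ell F^0)$, absorbing it into the $F^0$-dependence. I expect the main obstacle to be the bookkeeping of the coupled linear recursion for the six-plus error terms with the new $\ell$-dependent momentum recursion — verifying that a valid choice of all the absolute constants exists simultaneously (the paper reports using a symbolic solver for precisely this), and carefully tracking that the variance-reduction cancellation genuinely removes the standalone $\sigma^2$ from the $\wtilde G^t$ recursion so that only $\|\mX^{t+1}-\mX^t\|^2$ remains.
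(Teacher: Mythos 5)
Your proposal follows essentially the same route as the paper's proof: the same error quantities with a reweighted Lyapunov function, the observation that the descent, consensus, and compression recursions carry over unchanged from \algname{MoTEF}, the \algname{STORM} decomposition of the momentum error whose middle (increment) bracket is bounded via \Cref{asmp:avg-smoothness} by $\ell^2\Eb{\|\mX^{t+1}-\mX^t\|_{\rm F}^2}$ while the fresh-noise term contributes $\lambda^2 n\sigma^2$, and the final step of verifying a feasible choice of absolute constants, optimizing $\tau$, and using the initial batch to bound the Lyapunov value at $t=0$ by $\cO(F^0)$. Two immaterial nits: the sign of your middle bracket in the \algname{STORM} identity is flipped (harmless, since only its squared norm and zero conditional mean are used), and the $\hat G^t,\wtilde G^t$ recursions do still retain a $\cO(\lambda^2 n\sigma^2)$ term — the actual gain over \algname{MoTEF} is that the increment enters with coefficient $\ell^2$ and without the $1/\lambda$ factor from Young's inequality, exactly as in the paper.
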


\begin{remark}
    Note that using a large initial batch size $B_{\rm init}$ is not required for convergence of \algname{MoTEF-VR}. If we set $B_{\rm init} = 1$, the above theorem still holds replacing $F^0$ by $\Psi^0.$
\end{remark}

Compared to \algname{MoTEF}, \algname{MoTEF-VR} achieves an improved asymptotic rate. Moreover, all stochastic terms (the ones with $\sigma$) have a speed-up with $n$ in contrast to the convergence of \algname{DoCoM}, where only asymptotic term improves with $n.$

We point out that \algname{MoTEF-VR} applies the \algname{STORM} mechanism locally to achieve the variance reduction effect. \algname{STORM} is specifically designed for non-convex optimization problems, and its convergence rate in the more structured class of functions satisfying \Cref{asmp:pl} is still unclear in the literature~\citep{cutkosky2019momentum, xu2022momentumbased} even for the simplest centralized \algname{SGD} setting. In this work, we also do not consider the rate of \algname{MoTEF-VR} under the P{\L} condition.

\section{Numerical experiments}
\label{sec:experiments}
In this section, we complement the theoretical results on the convergence of \Cref{alg:beer-mom} with numerical evaluations. 

\subsection{Synthetic least squares problem}
\label{sec:synthetic-exp}

\begin{figure*}
    \centering
    \begin{tabular}{cccc}
        \multicolumn{2}{c}{\includegraphics[width=0.45\textwidth]{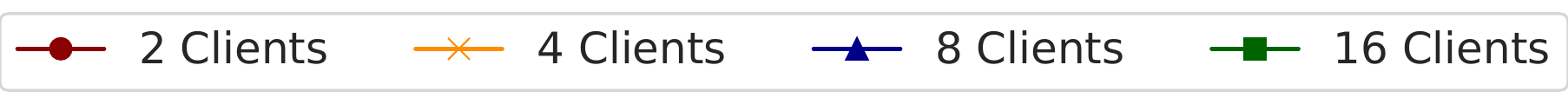}} &  & \vspace{-1mm}\\
        \vspace{-2mm}
        \hspace{-8mm}
        \includegraphics[width=0.27\textwidth]{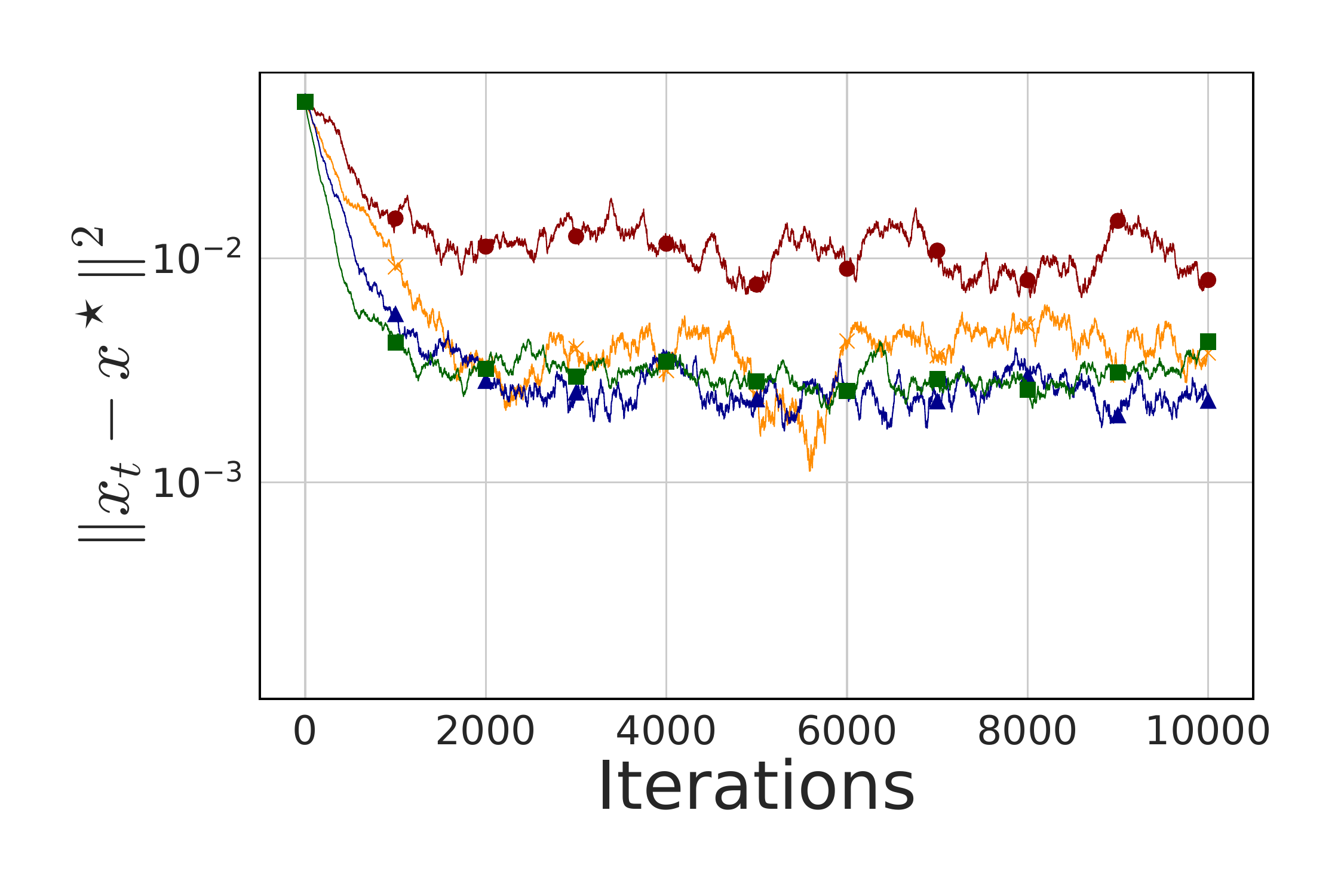} &  
        \hspace{-7.2mm} 
        \includegraphics[width=0.27\textwidth]{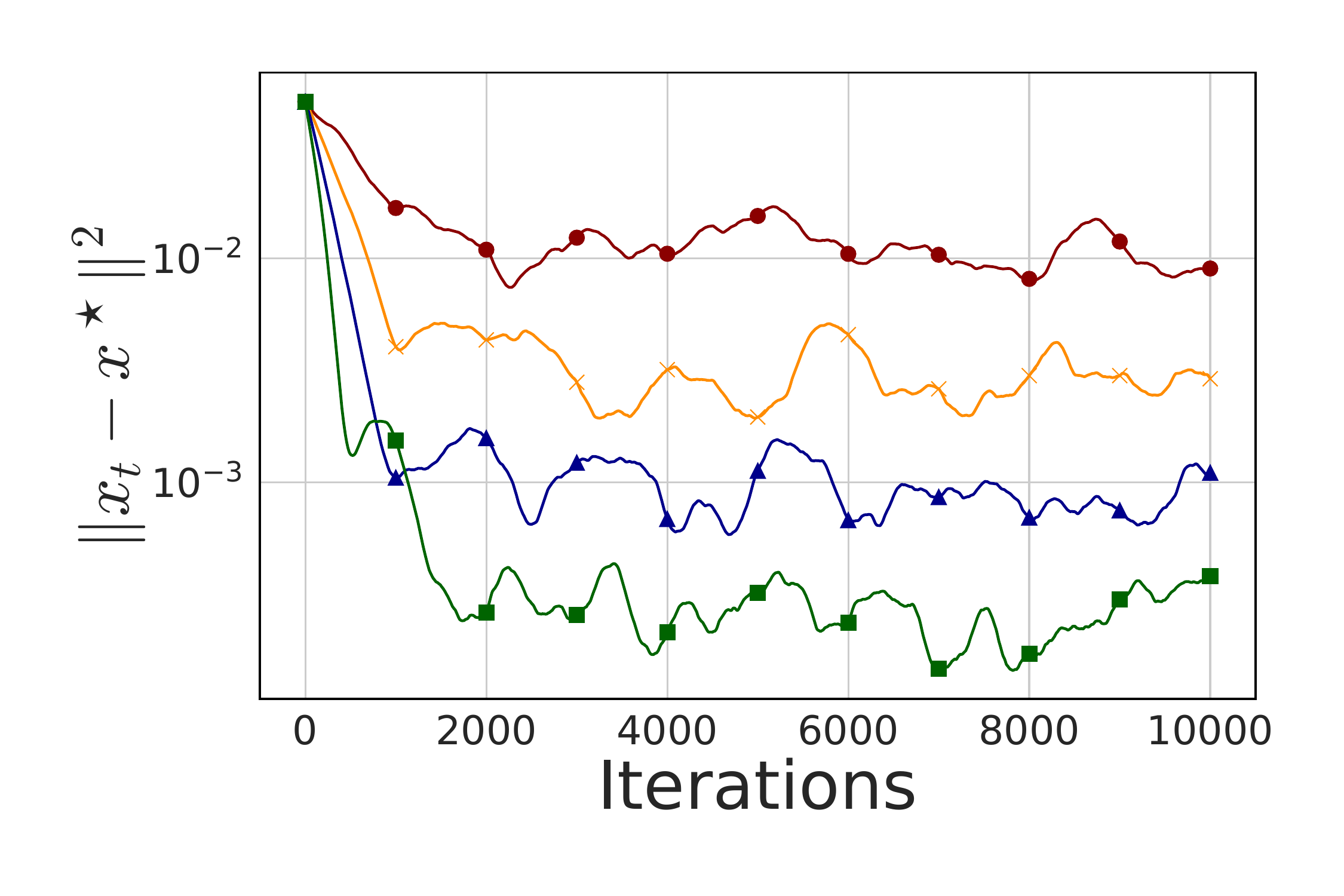} &
        \hspace{-7.2mm}
        \includegraphics[width=0.27\textwidth]{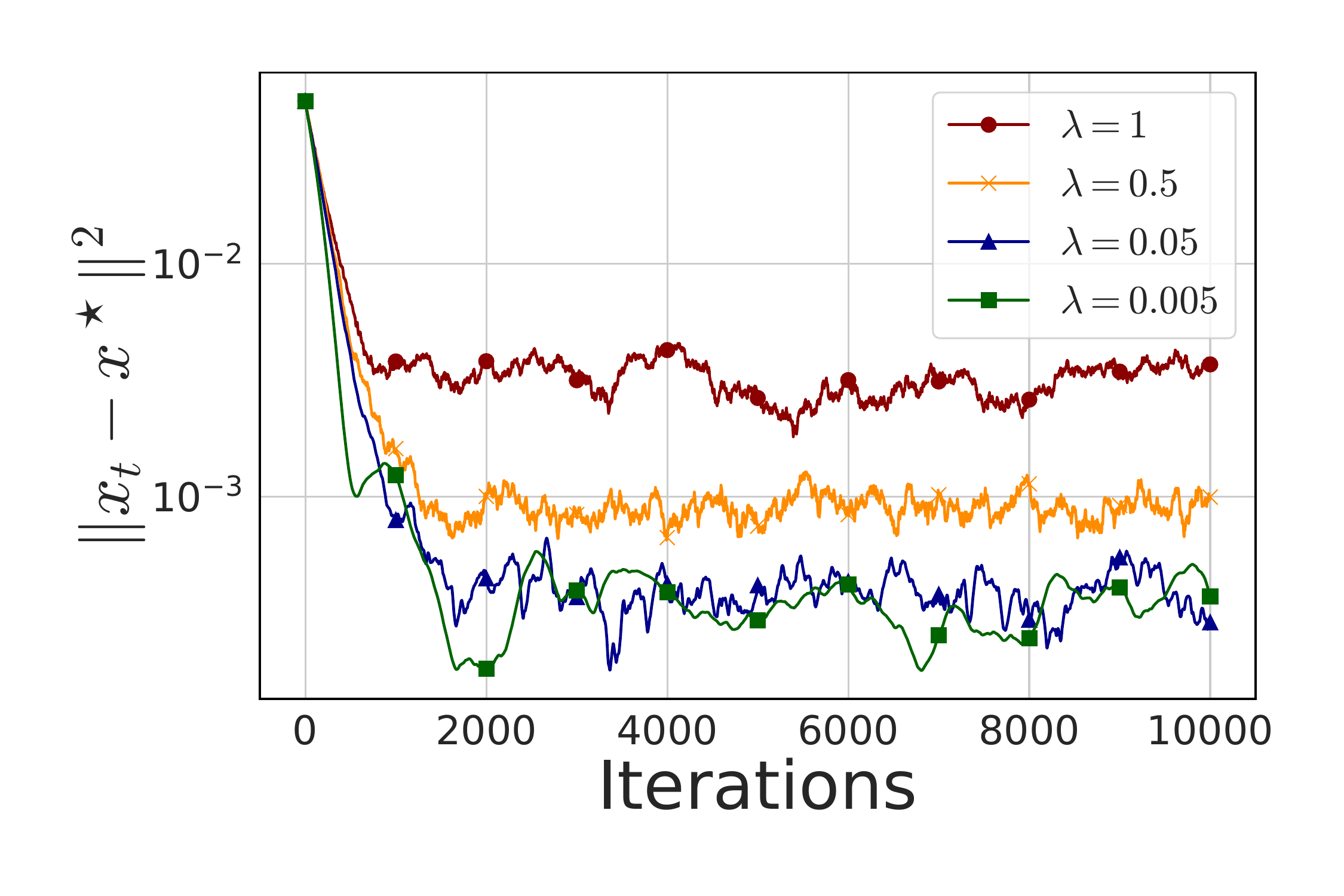} &
        \hspace{-7mm}
        \includegraphics[width=0.27\textwidth]{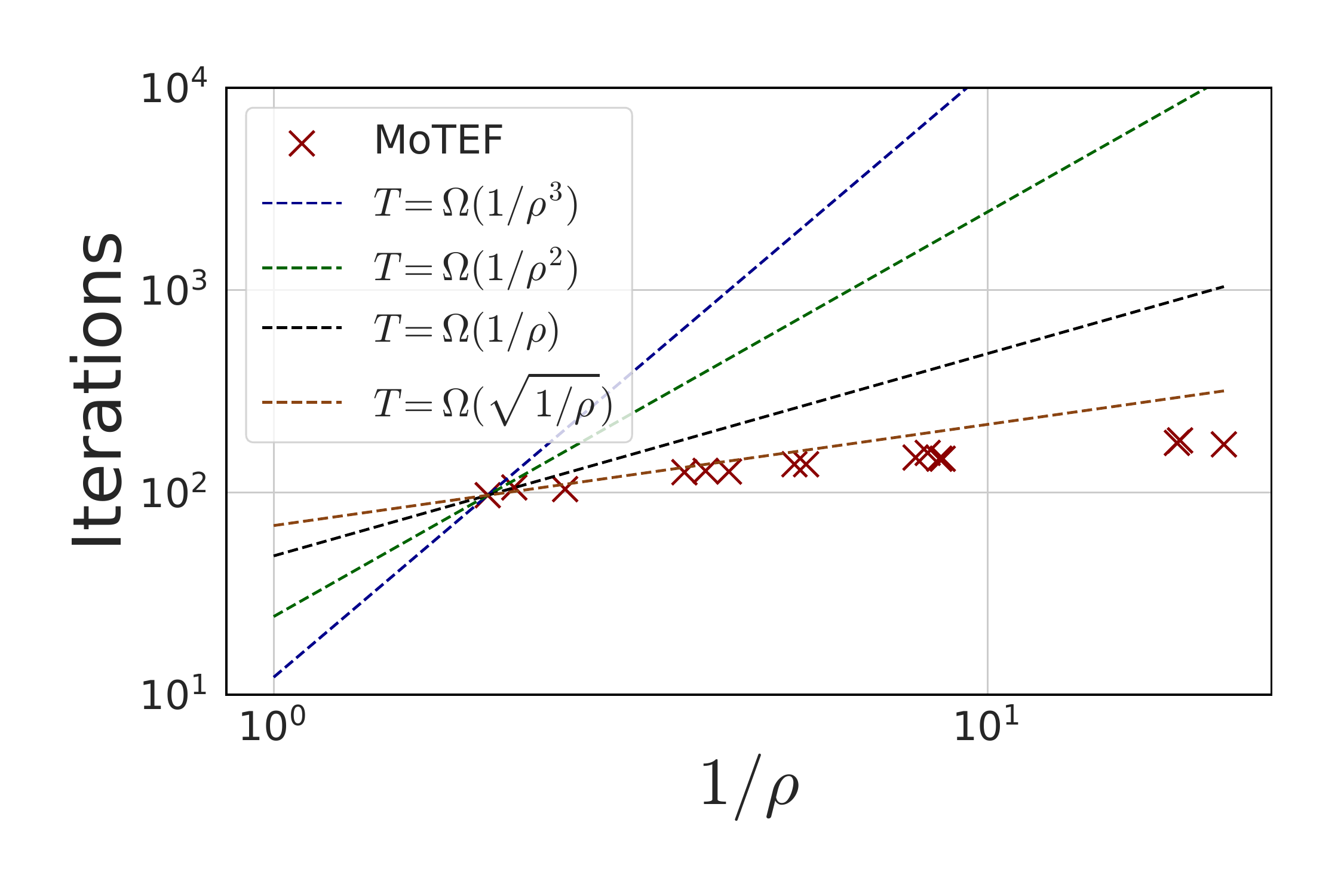}
        \\
        {\small (a) \algname{BEER}} & 
        {\small (b) \algname{MoTEF}} &
        {\small (c)} & 
        {\small (d)} 
    \end{tabular}
    \caption{(a) \algname{BEER} with different number of clients $n$; (b) \algname{MoTEF} with different number of clients $n$; (c) \algname{MoTEF} with different momentum parameter $\lambda$. \algname{MoTEF}'s error decreases as the number of clients increases, while the error of \algname{BEER} does not. The error of \algname{MoTEF} increases as the momentum parameter increases.
    In all cases, we set $d=20,\zeta=10,\sigma=10$, and apply Top-K compressor with $\alpha=\nicefrac{K}{d}=0.1$. We fix the parameters $\gamma=0.1,\eta=0.0005,\lambda=0.005$, and $n=16,$ if the opposite is not stated. (d) The number of iterations for \algname{MoTEF} to reach an error of $10^{-3}$, as compared to the theoretical prediction $\cO(\nicefrac{1}{\rho^3})$. We see that the convergence of \algname{MoTEF} is much less sensitive to $\rho$ than the theoretical prediction.}
    \label{fig:synthetic}
\end{figure*}

\begin{figure*}
    \centering
    \begin{tabular}
        {ccc}
        \multicolumn{3}{c}{\includegraphics[width=0.45\textwidth]{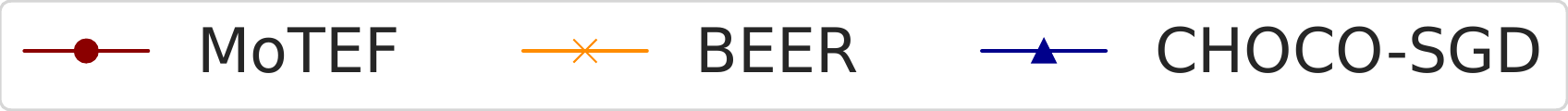}}\vspace{-1mm}\\
       \hspace{-7mm} \includegraphics[width=0.33\textwidth]{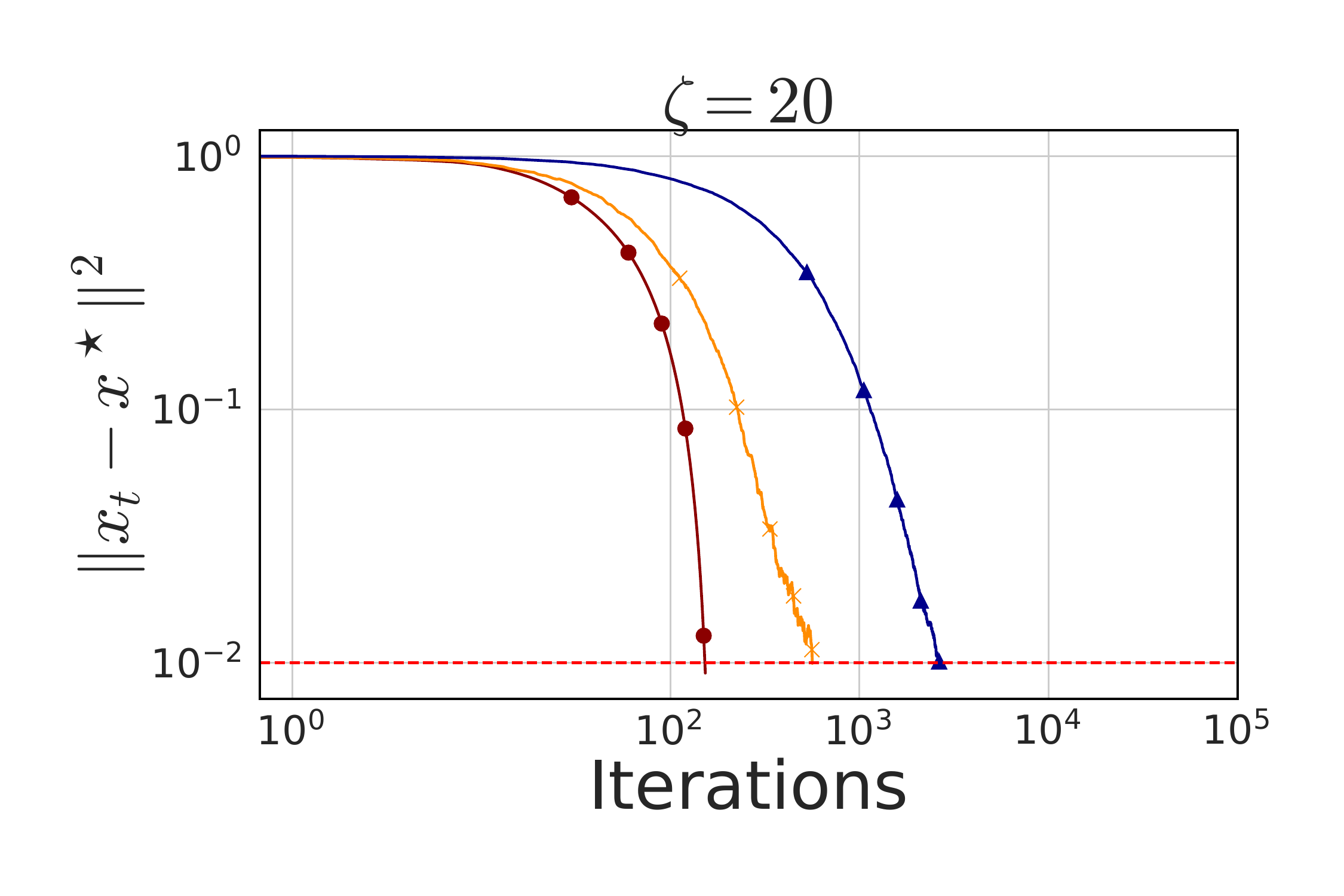} &
        \includegraphics[width=0.33\textwidth]{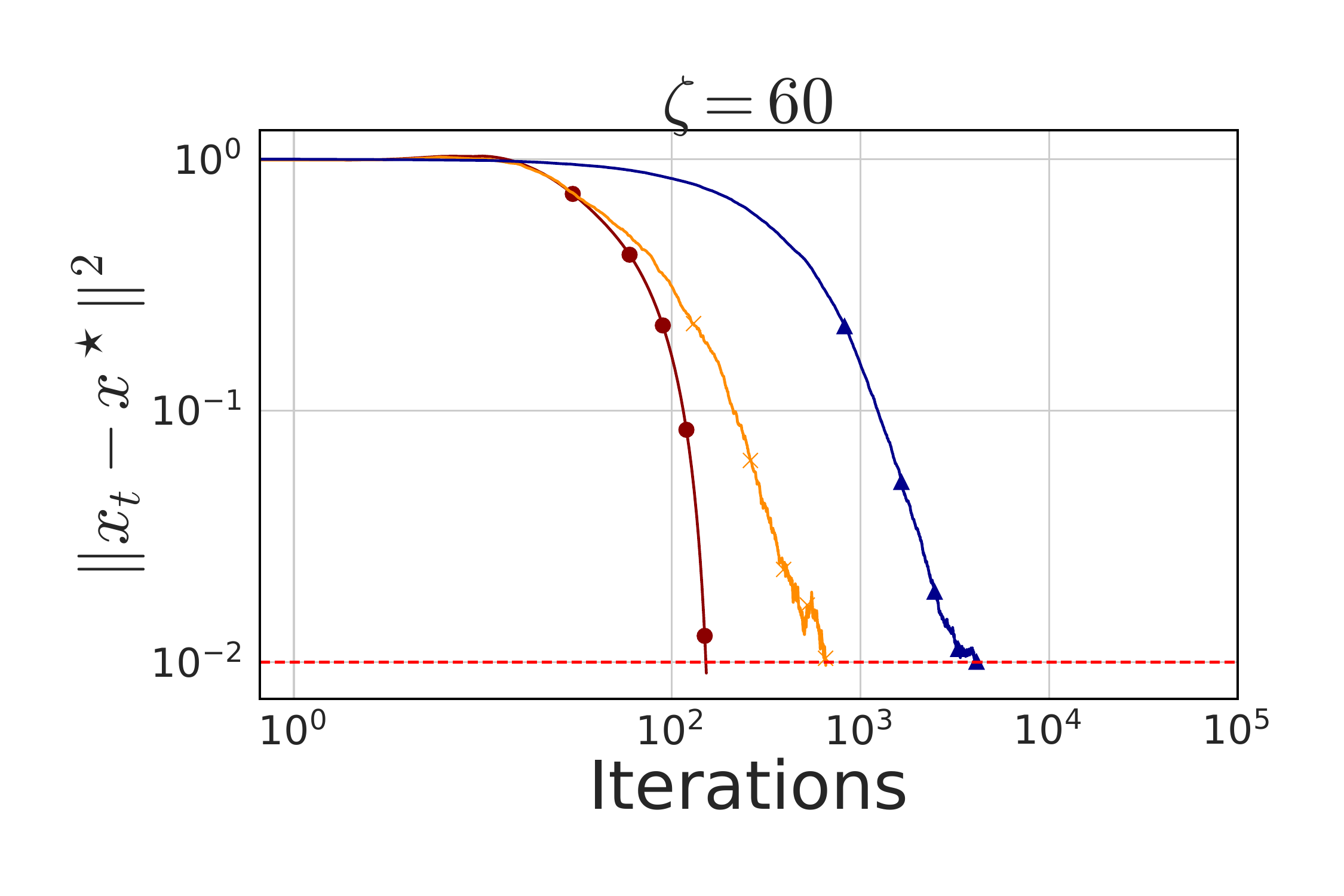} &
        \includegraphics[width=0.33\textwidth]{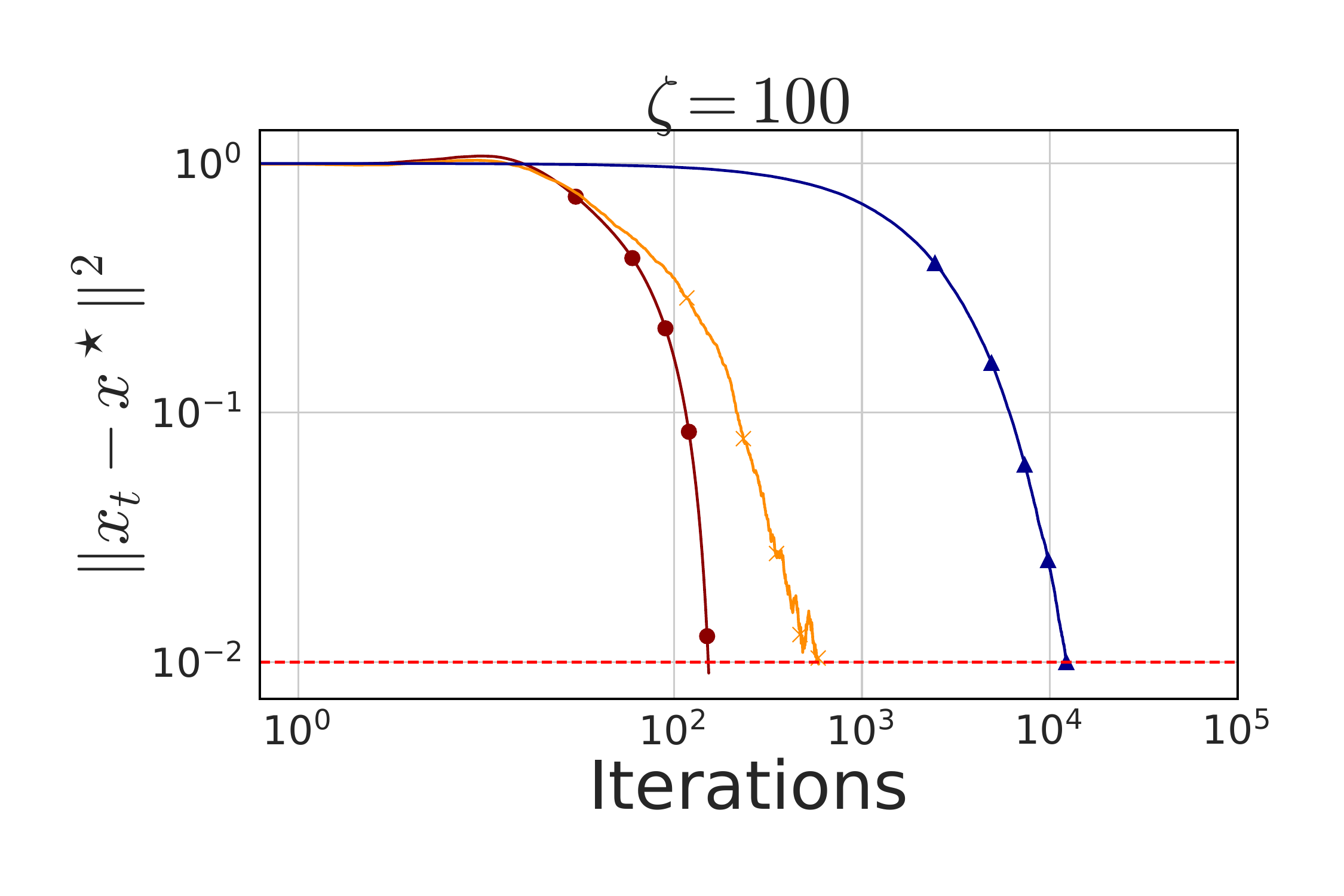}
    \end{tabular}
    \caption{Performance of \algname{MoTEF}, \algname{BEER} and \algname{CHOCO-SGD} with varying data heterogeneity $\zeta$ and fixed noise level $\sigma=5$. We see that \algname{MoTEF} outperforms \algname{BEER} and \algname{CHOCO-SGD} in all cases, and is not affected by the data heterogeneity, while \algname{CHOCO-SGD}'s performance degrades as $\zeta$ increases.
    We set $d=20,n=4$ and apply Top-K compressor with $\alpha=\nicefrac{K}{d}=0.1$. We set the target error to be $0.01$.}
    \label{fig:synthetic-zeta}
\end{figure*}

We first consider a simple synthetic least squares problem to demonstrate some of the important theoretical properties of \Cref{alg:beer-mom}. This problem is designed by~\citet{koloskova2020unified} and studied in \citep{gao2024econtrol}. For each client $i$, $f_i(\xx)\eqdef \frac{1}{2}\norm{\mA_i \xx -\bb_i}^2$, where $\mA_i^2\eqdef \nicefrac{i^2}{n}\cdot \mI_d$ and each $\bb_i$ is sampled from $\mathcal{N}(0, \nicefrac{\zeta^2}{i^2}\mI_d)$ for some parameter $\zeta$ which controls the gradient dissimilarity of the problem~\citep{koloskova2020unified}. It's easy to see that when $\zeta=0, \nabla f_i(\xx^\star)=0,\forall i$. We add Gaussian noise to the gradients to control the stochastic level $\sigma^2$ of the gradient. We use the ring network topology for the synthetic experiment unless stated otherwise.\footnote{The code to reproduce our synthetic experiment is available at \url{https://github.com/mlolab/MoTEF.git}}. We run our experiments on AMD EPYC $9554$ $64$-Core Processor.

\paragraph{Increasing the number of nodes.} In 
\Cref{fig:synthetic}-(a-b) we study the effect of increasing the number of nodes on the convergence of \Cref{alg:beer-mom}. A crucial property of \Cref{alg:beer-mom} is that its convergence rate provably improves linearly with the number of nodes, which \algname{BEER} does not possess. Here we fix a small stepsize and investigate the error that \Cref{alg:beer-mom} achieves with an increasing number of nodes. We observe that the error decreases linearly with the number of nodes, which is consistent with the theoretical results, while for the error of \algname{BEER} it is not the case.

\paragraph{Effect of the momentum parameter.} In 
\Cref{fig:synthetic}-(c) we investigate the effect of the momentum parameter $\lambda$. In particular, how it affects the convergence in the noisy regime. Our theoretical analysis suggests that the momentum parameter $\lambda \propto \eta$ is crucial for the convergence of \algname{MoTEF}. We observe that the error increases as the momentum parameter increases. Note that when $\lambda =1$, we recover \algname{BEER} which is known to not converge with the presence of noise in the local gradients, which our experiment confirms.

\paragraph{Effect of changing heterogeneity.}
In \Cref{fig:synthetic-zeta} we investigate the effect of changing data heterogeneity $\zeta$ on the performance of \algname{MoTEF}, \algname{BEER}, and \algname{Choco-SGD}. The hyperparameters were tuned; the detailed description is given in \Cref{sec:effect_of_changing}. We observe that \algname{MoTEF} outperforms other algorithms and is not affected by the changing $\zeta$. \algname{BEER} is also not affected by the changing $\zeta$, while \algname{CHOCO-SGD}'s performance degrades as $\zeta$ increases. This is consistent with the theoretical results.

\paragraph{Effect of communication topologies} In 
\Cref{fig:synthetic}-(d) we investigate the effect of the spectral gap $\rho$ on the convergence of \algname{MoTEF}. We set $\sigma^2=0$ since the optimization term is most affected by $\rho$ in the analysis. We detail the setup of the communication network in \Cref{sec:effect_of_topology_synthetic}. 
While the theory suggests that there might be a $\nicefrac{1}{\rho^3}$ dependence, our experiment shows that the convergence of \algname{MoTEF} is much less sensitive to $\rho$. Future research is needed to understand the discrepancy between the theory and the practice of the tracking mechanisms.

\subsection{Non-convex logistic regression}\label{sec:nonconvex_logreg}

\begin{figure*}
    \centering
    \begin{tabular}{cccc}
        \multicolumn{4}{c}{\includegraphics[width=0.7\textwidth]{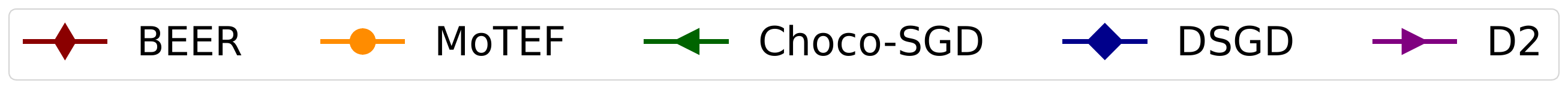}}\\
        \hspace{-4mm}\includegraphics[width=0.25\textwidth]{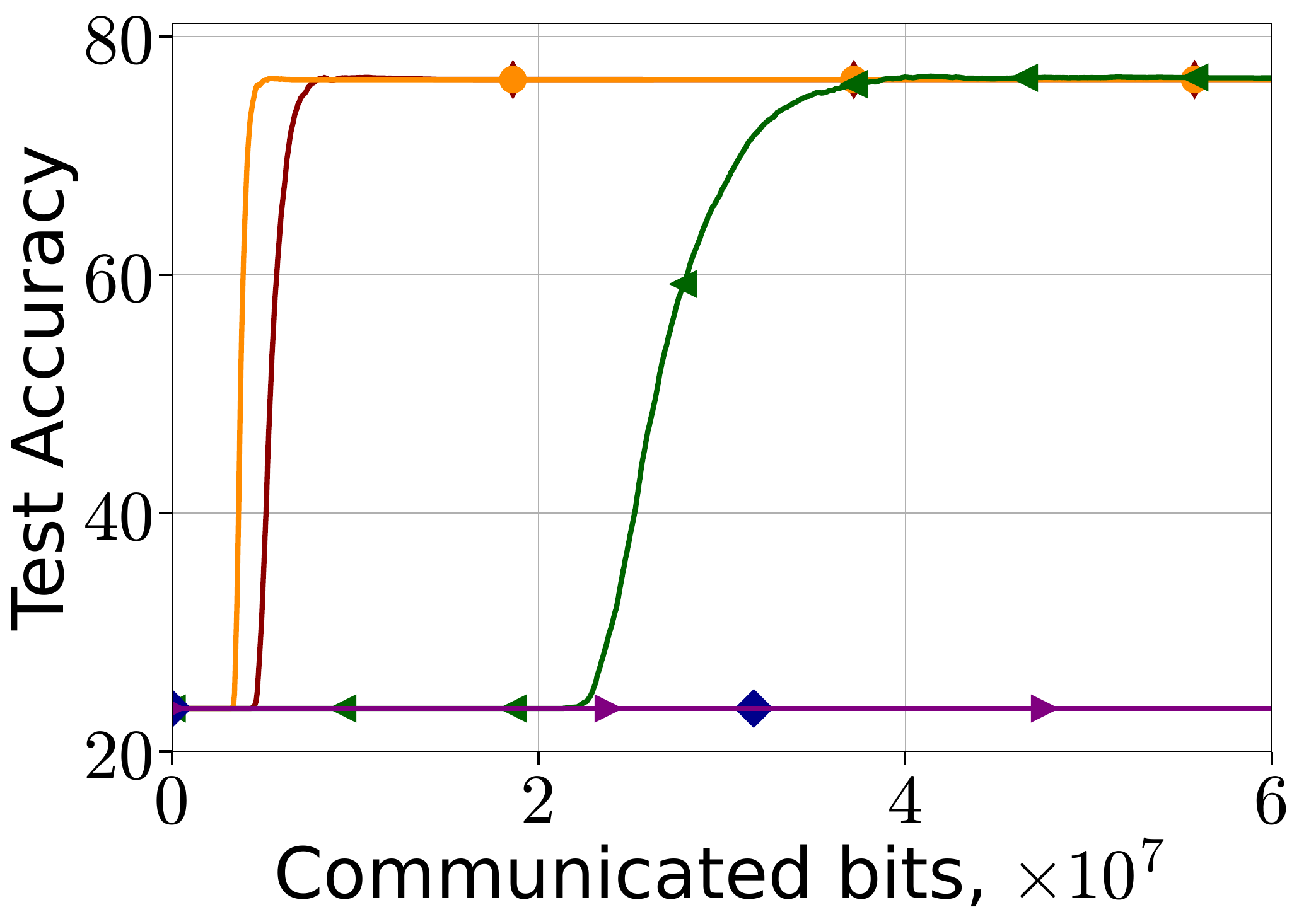} & 
         \hspace{-4mm}\includegraphics[width=0.25\textwidth]{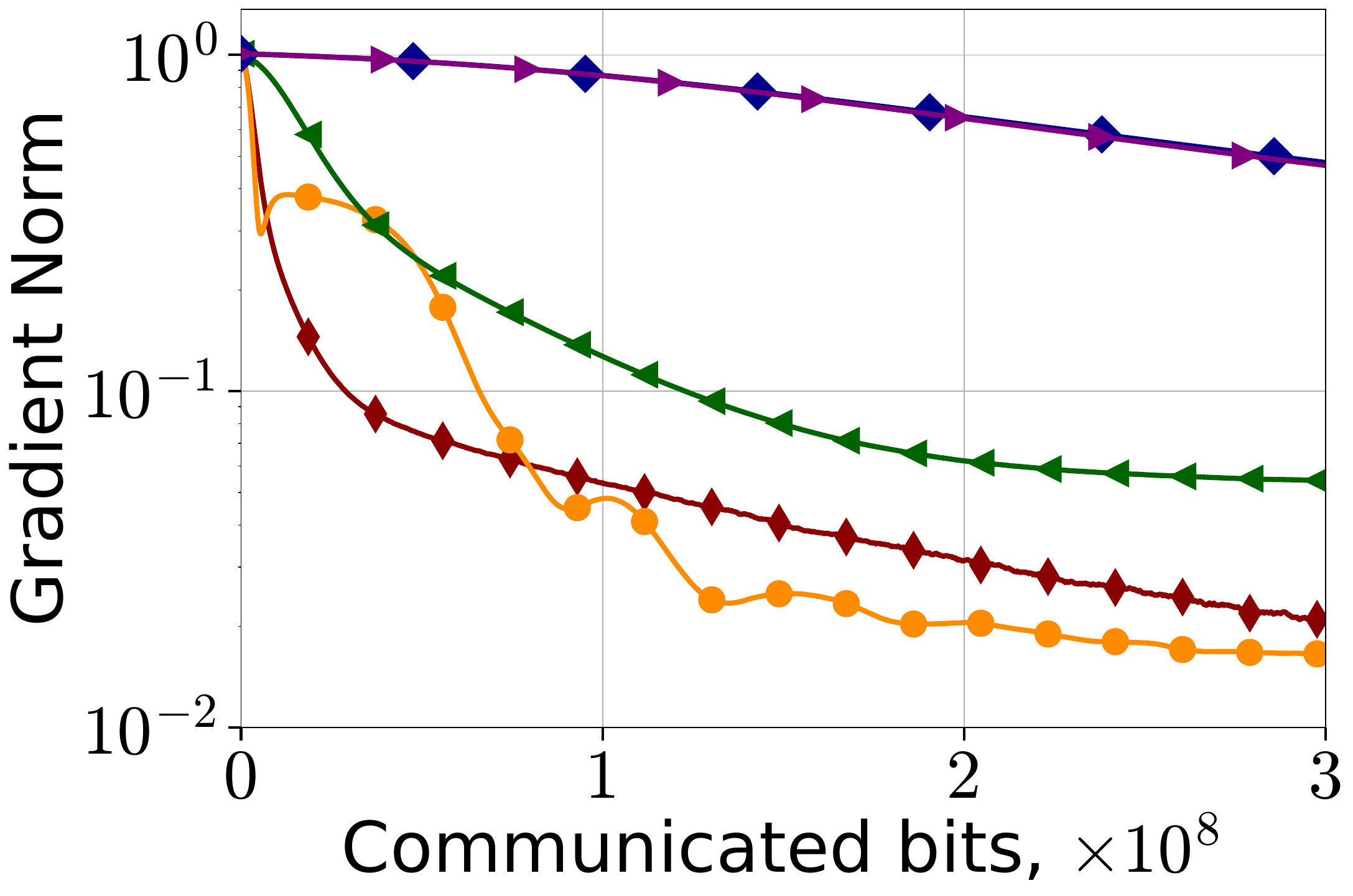} & 
         \hspace{-4mm}\includegraphics[width=0.25\textwidth]{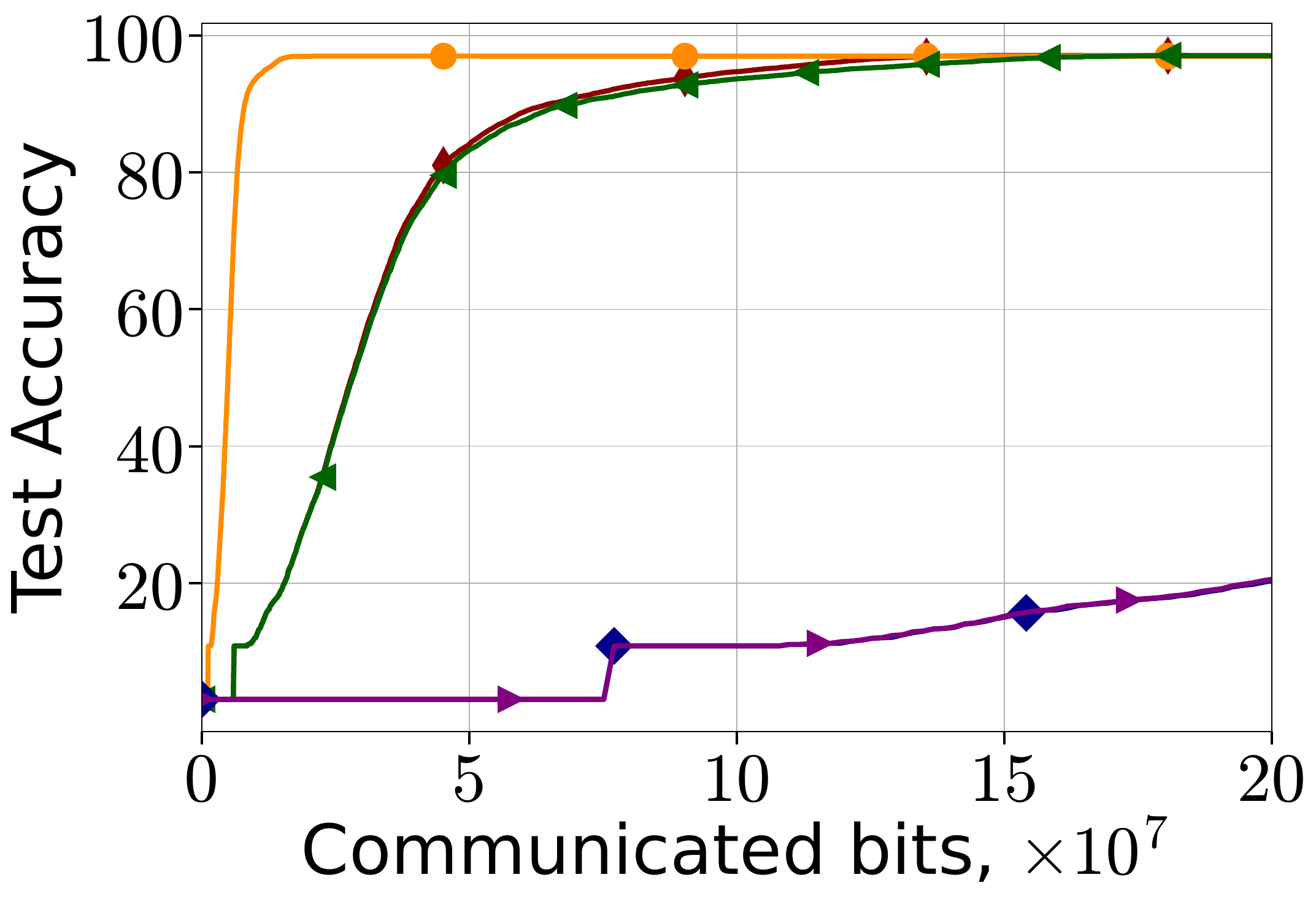} & 
         \hspace{-4mm}\includegraphics[width=0.25\textwidth]{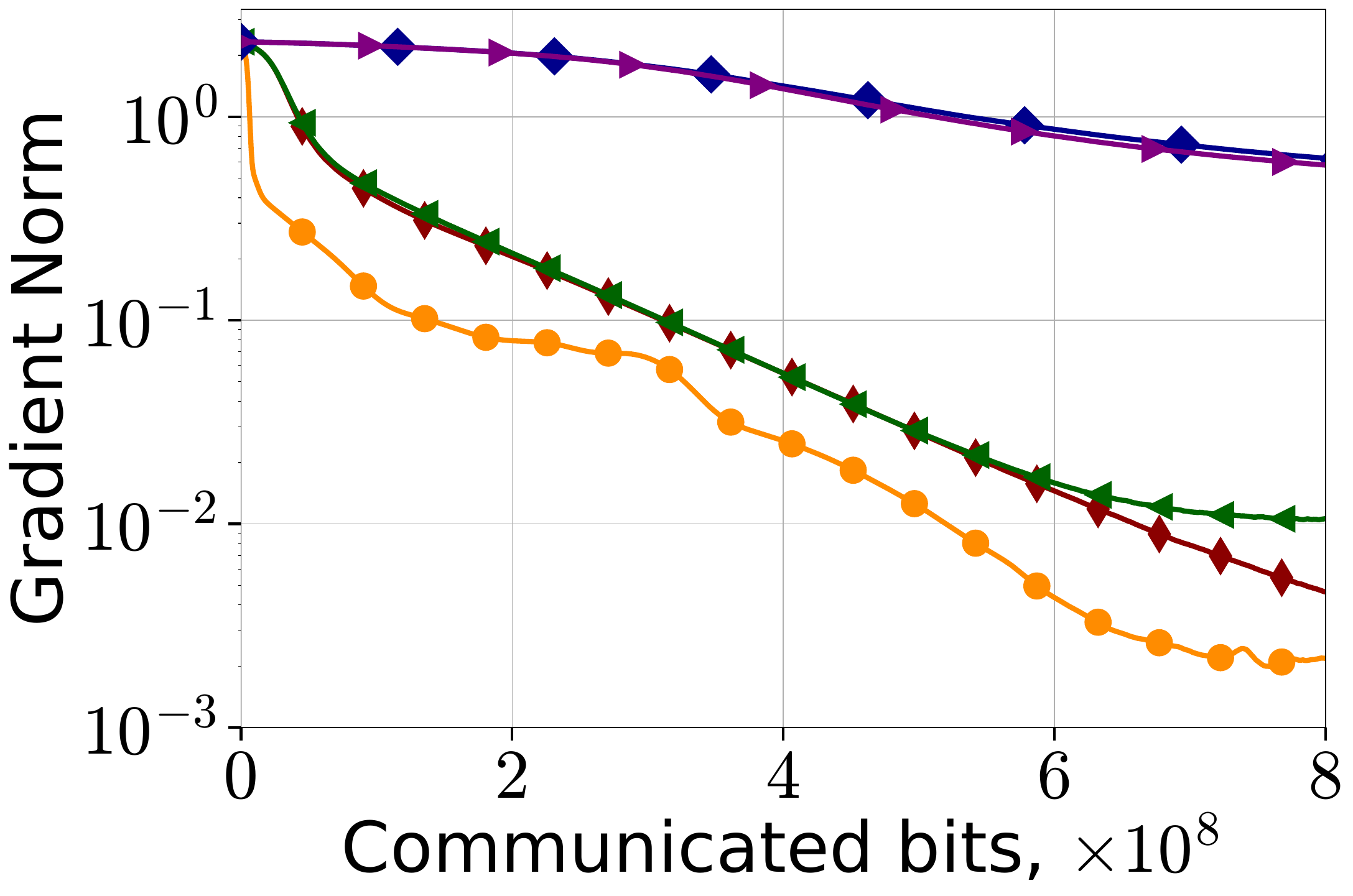} \\
         {\small (a) {\rm a9a}} & 
         {\small (b) {\rm a9a}} & 
         {\small (c) {\rm w8a}} & 
         {\small (d) {\rm w8a}} 
    \end{tabular}
    \caption{Comparison of \algname{MoTEF}, \algname{BEER}, \algname{Choco-SGD}, \algname{DSGD}, \algname{D2} in terms of communication complexity on logistic regression with non-convex regularization on ring topology with batch size $5$ and gsgd${}_b$ compressor. We observe that \algname{MoTEF} outperforms other algorithms in terms of both test accuracy and gradient norm.}
    \label{fig:nonconvex_logreg}
\end{figure*}

Following \citep{khirirat2023clip21, makarenko2023adaptive, islamov2024asgrad} we compare algorithms on logistic regression problem with non-convex regularization\footnote{Our implementation is based on open-source code from \citep{zhao2022beer} \url{https://github.com/liboyue/beer} and is available at \url{https://github.com/mlolab/MoTEF.git}.}
\begin{eqnarray} 
    \min\limits_{\xx\in\R^d}\frac{1}{n}\sum_{i=1}^n f_i(\xx) + \lambda\sum_{j=1}^d\frac{x_j^2}{1+x_j^2}, \quad f_i(\xx) \eqdef \frac{1}{m}\sum_{j=1}^m \log(1+\exp(-b_{ij}\aa_{ij}^\top\xx)),
\end{eqnarray}
where $\{b_{ij}, \aa_{ij}\}_{j=1}^m$ is a local dataset. We set $\lambda=0.05, n =100$ and use LibSVM datasets \citep{chang2011libsvm}. We do not shuffle datasets to have a more heterogeneous setting. Besides, each dataset is equally distributed among all clients. In all experiments on logistic regression, we use {\rm gsgd}${}_{b}$ compressor \citep{alistarh2017qsgd} with $b=5.$ More details of this experiments are given in \Cref{sec:additional_experiments}.

\begin{figure*}
    \centering
    \begin{tabular}{cccc}
    \multicolumn{4}{c}{\includegraphics[width=0.7\textwidth]{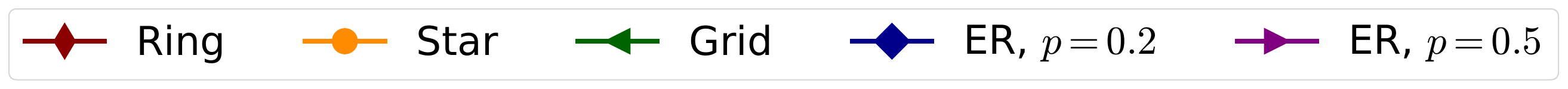}}\\
        \hspace{-4mm}\includegraphics[width=0.25\textwidth]{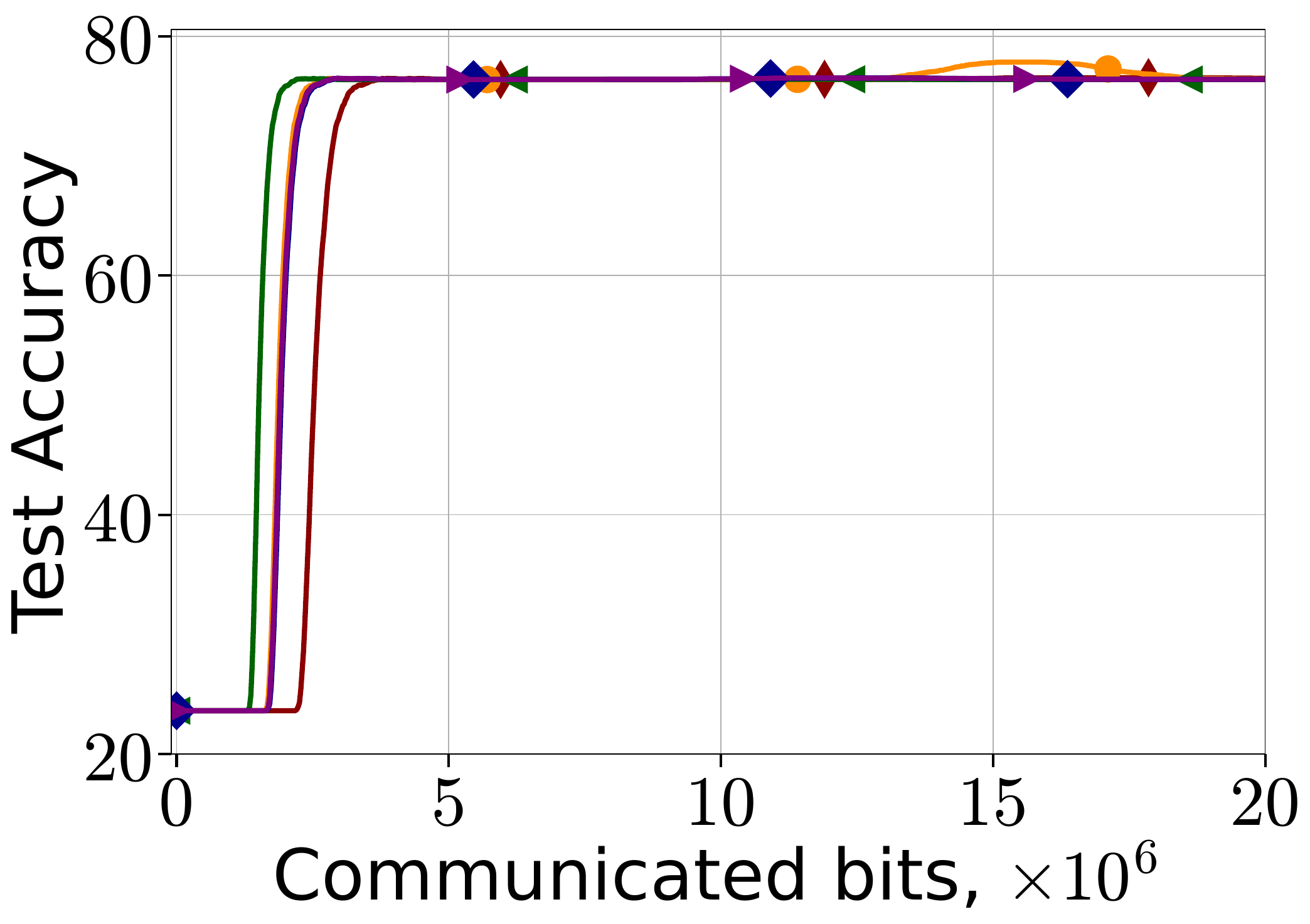} & 
         \hspace{-4.5mm}\includegraphics[width=0.25\textwidth]{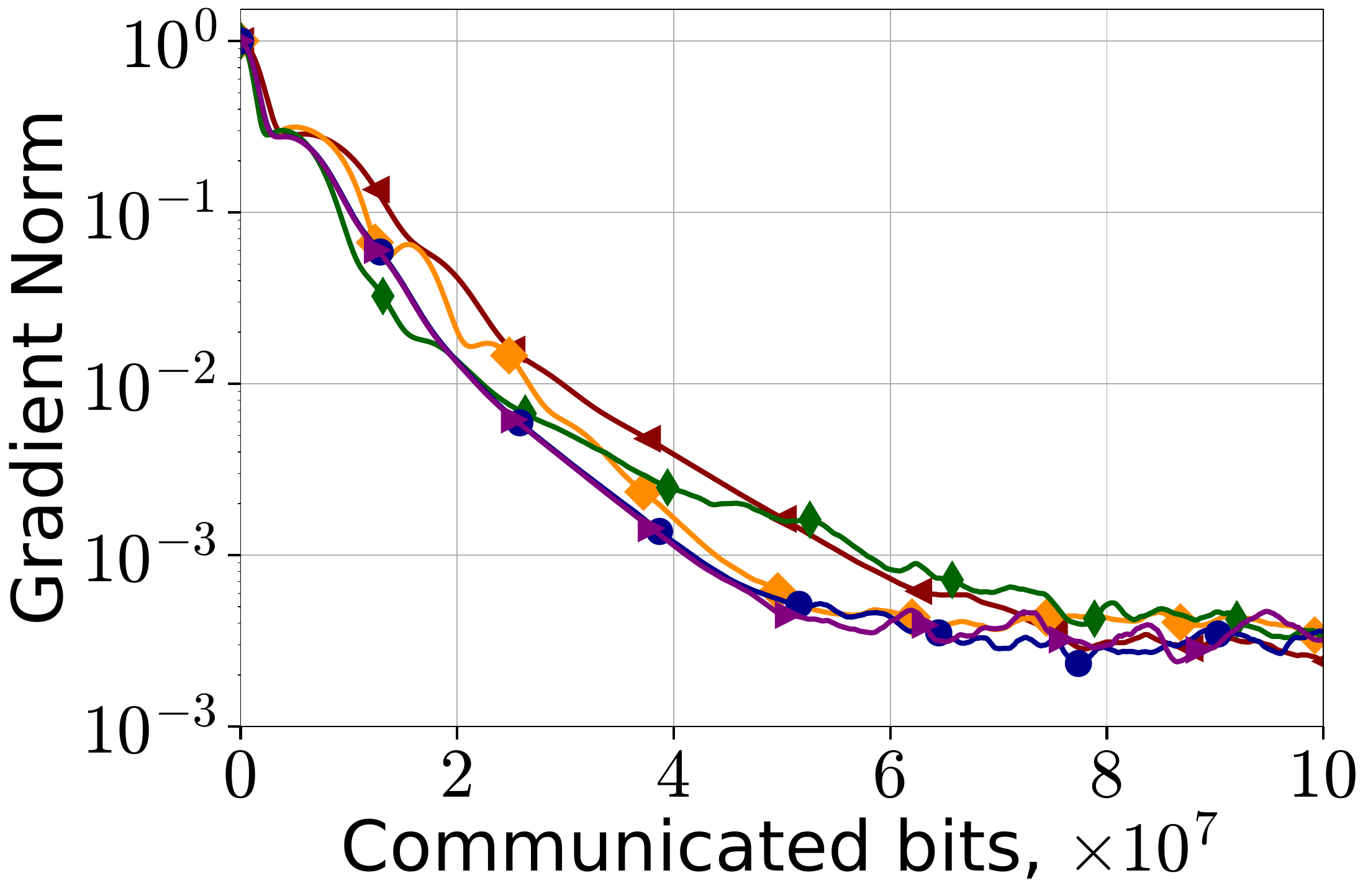} & 
        \hspace{-4.5mm}\includegraphics[width=0.25\textwidth]{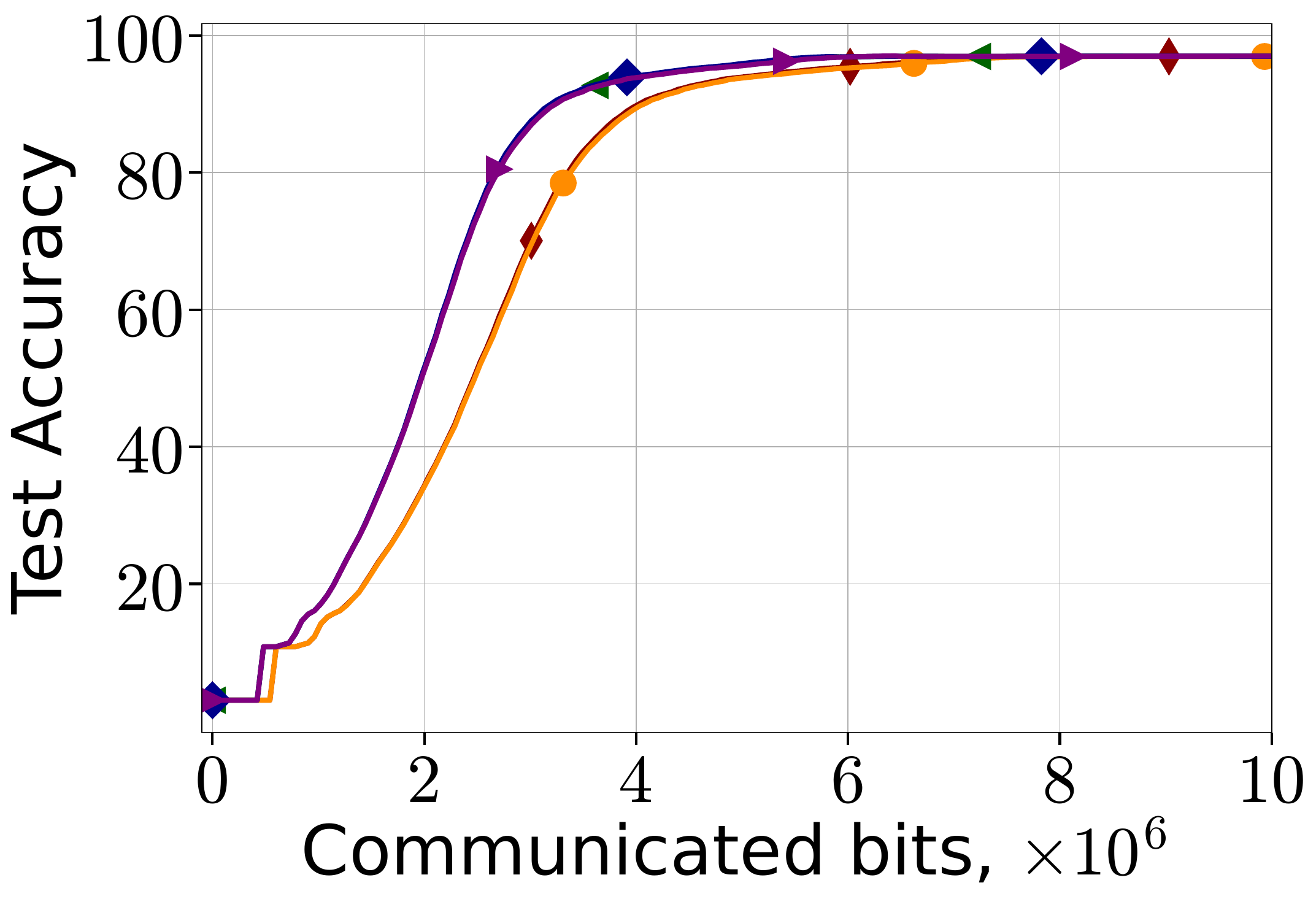} & 
         \hspace{-4.5mm}\includegraphics[width=0.25\textwidth]{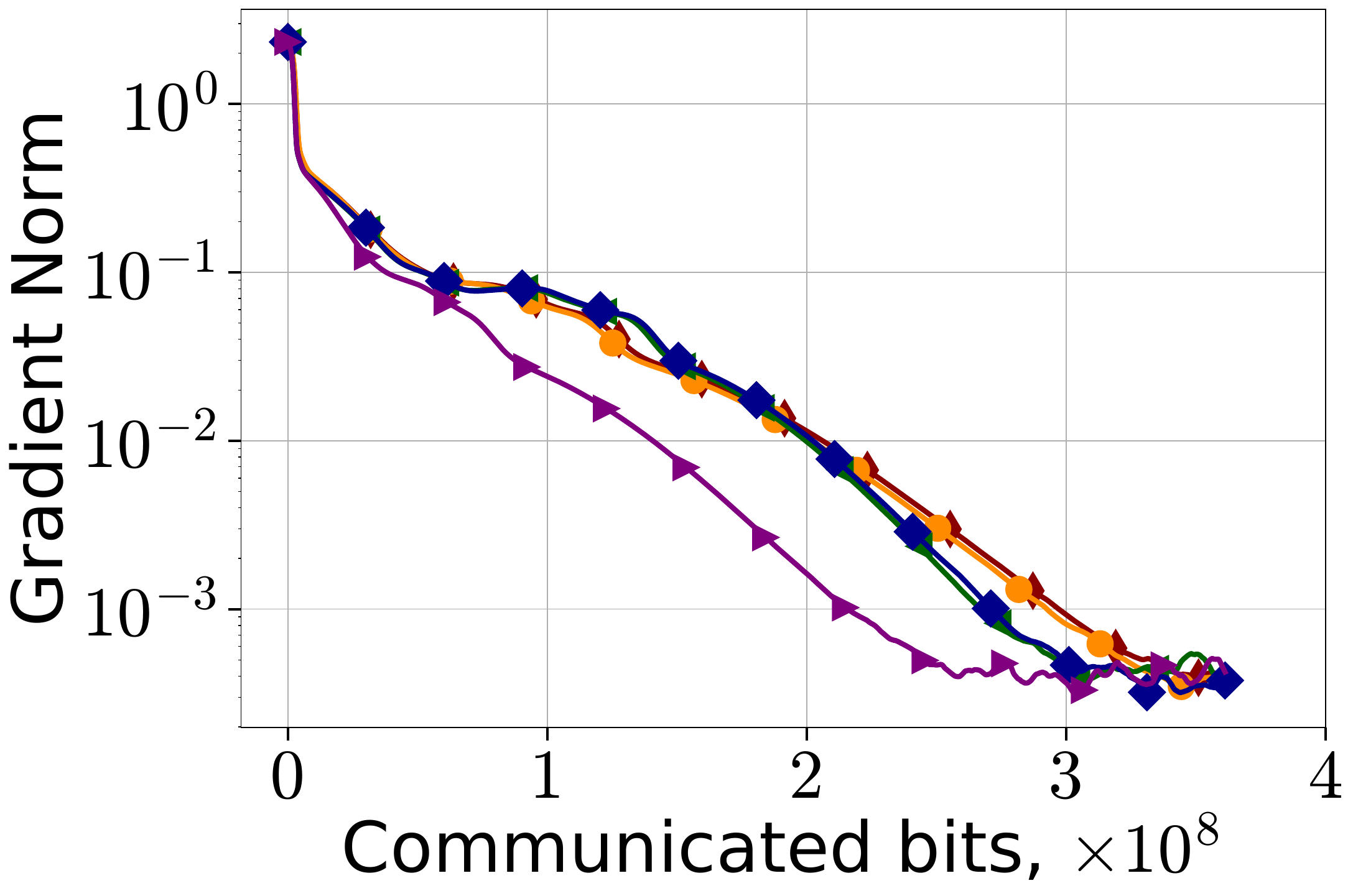} \\
         {\small (a) {\rm a9a}} & 
         {\small (b) {\rm a9a}} & 
         {\small (c) {\rm w8a}} & 
         {\small (d) {\rm w8a}} 
    \end{tabular}
    \caption{Performance of \algname{MoTEF} changing of network topology tested on logistic regression with non-convex regularization. We set $n=40, \lambda=0.05,$ and batch size $100$. We observe that \algname{MoTEF} is very robust against changing network topologies for practical problems.}
    \label{fig:topology}
\end{figure*}

\paragraph{Comparison against other methods.}
 We compare \algname{BEER} \citep{zhao2022beer}, \algname{Choco-SGD} \citep{koloskova2019decentralized}, DSGD \citep{alistarh2017qsgd}, and \algname{D2}  \citep{tang2018d} algorithms with \algname{MoTEF} on ring topology. Detailed description is given in \Cref{sec:nonconvex_logreg_hyperparameters}. For each algorithm, we fine-tune all stepsizes to achieve better convergence. According to the results in \Cref{fig:nonconvex_logreg}, we observe that \algname{MoTEF} outperforms other algorithms in terms of communication complexity in both cases, when the convergence is measured by training gradient norm and test accuracy. In \Cref{fig:cedas}, we additionally compare \algname{MoTEF} against \algname{CEDAS} \citep{huang2023cedas}.

\begin{figure*}
    \centering
    \begin{tabular}{cc}
        \includegraphics[width=0.3\textwidth]{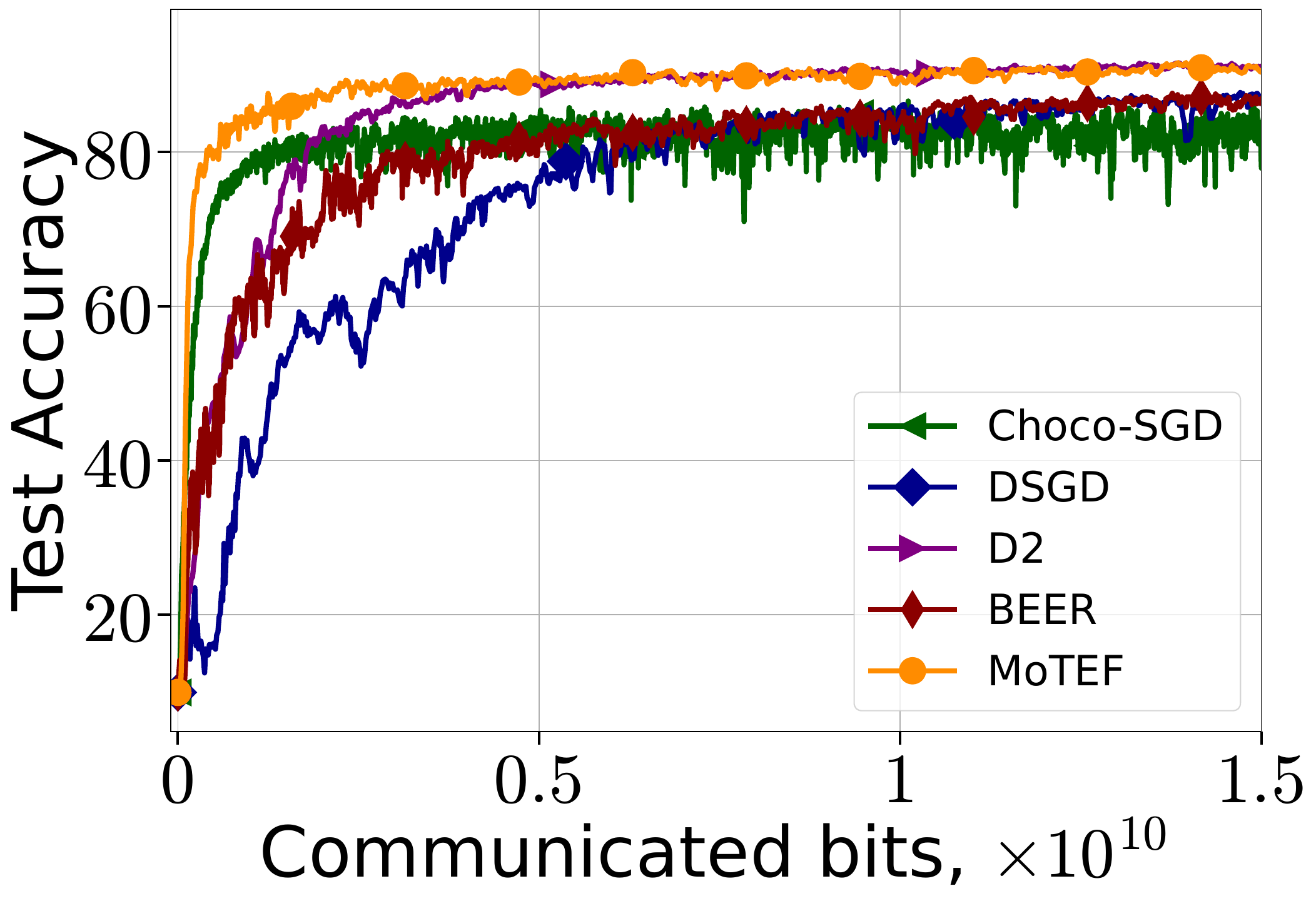} & 
         \includegraphics[width=0.31\textwidth]{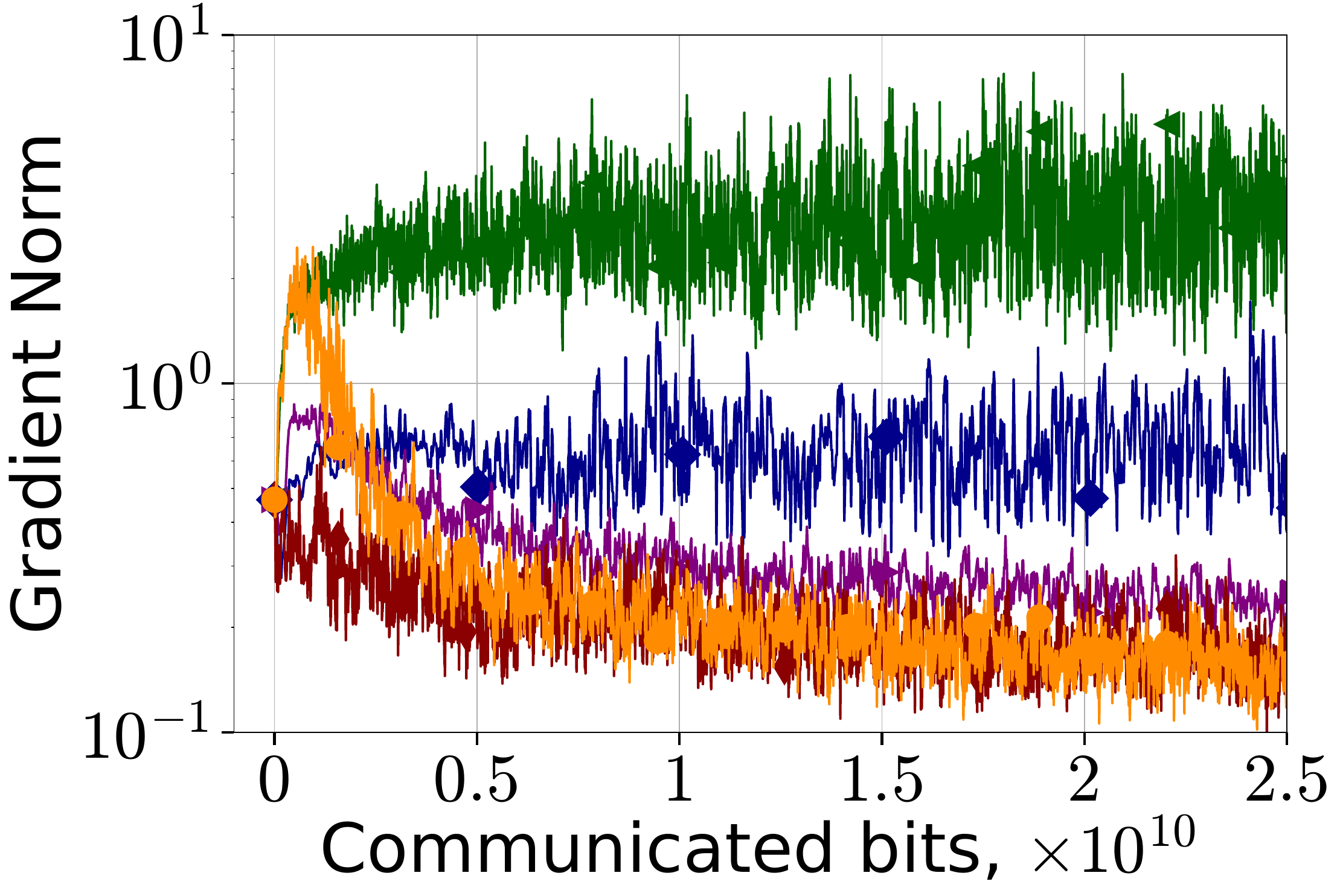}
         \\
         {\small (a) {\rm MLP}} & 
         {\small (b) {\rm MLP}} 
    \end{tabular}
    \caption{Comparison of \algname{MoTEF}, \algname{BEER}, \algname{Choco-SGD}, \algname{DSGD}, \algname{D2} in terms of communication complexity on training MLP with $1$ hidden layer. We observe that \algname{MoTEF} outperforms the other methods.}
    \label{fig:dl}
\end{figure*}

\paragraph{Robustness to communication topology.} Next, we study the effect of the network topology on the convergence of \algname{MoTEF}. We run experiments for ring, star, grid, Erdös-Rènyi ($p=0.2$ and $p=0.5$) topologies. Note the spectral gaps of these networks $0.012, 0.049, 0.063, 0.467, 0.755$ correspondingly. The hyperparameters of algorithms are given in \Cref{sec:robust_hyperparameters}. Despite the theoretical analysis showing a strong dependence on $\rho$, in \Cref{fig:topology} we demonstrate that convergence \algname{MoTEF} is not affected much by the change in spectral gap. These results demonstrate the robustness of \algname{MoTEF} to the change of network topology in practice.

\paragraph{Training of MLP.} Finally, we consider training MLP on MNIST dataset \citep{deng2012mnist} with $1$ hidden layer of size $32$. We present the results in \Cref{fig:dl}. We observe that MLP trained with \algname{MoTEF} and \algname{BEER} achieve similar gradient norm, but \algname{MoTEF} is much faster in accuracy metric showing the advantage from using momentum tracking.

\paragraph{Training of CNN.}
We additionally provide an experiment where we compare \algname{MoTEF} against \algname{BEER} and \algname{Choco-SGD} on ring and ER $p=0.6$ topologies using CNN model on MNIST dataset. We use CNN model with two convolution layers each followed by batch normalization, ReLU, and max pooling. The classification layer is fully connected. We tune the stepsize for each algorithm from $\eta \in \{0.0001, 0.001, 0.01, 0.1\}$ and gossip stepsize $\gamma \in \{0.1, 0.9\}.$ We demonstrate the performance of algorithms in \Cref{fig:cnn}. We observe that in both cases \algname{MoTEF} achieves faster convergence w.r.t. both test accuracy and train loss than competitors supporting our theoretical findings. 

\begin{figure*}
	\centering
	\begin{tabular}{cccc}
		\includegraphics[width=0.23\textwidth]{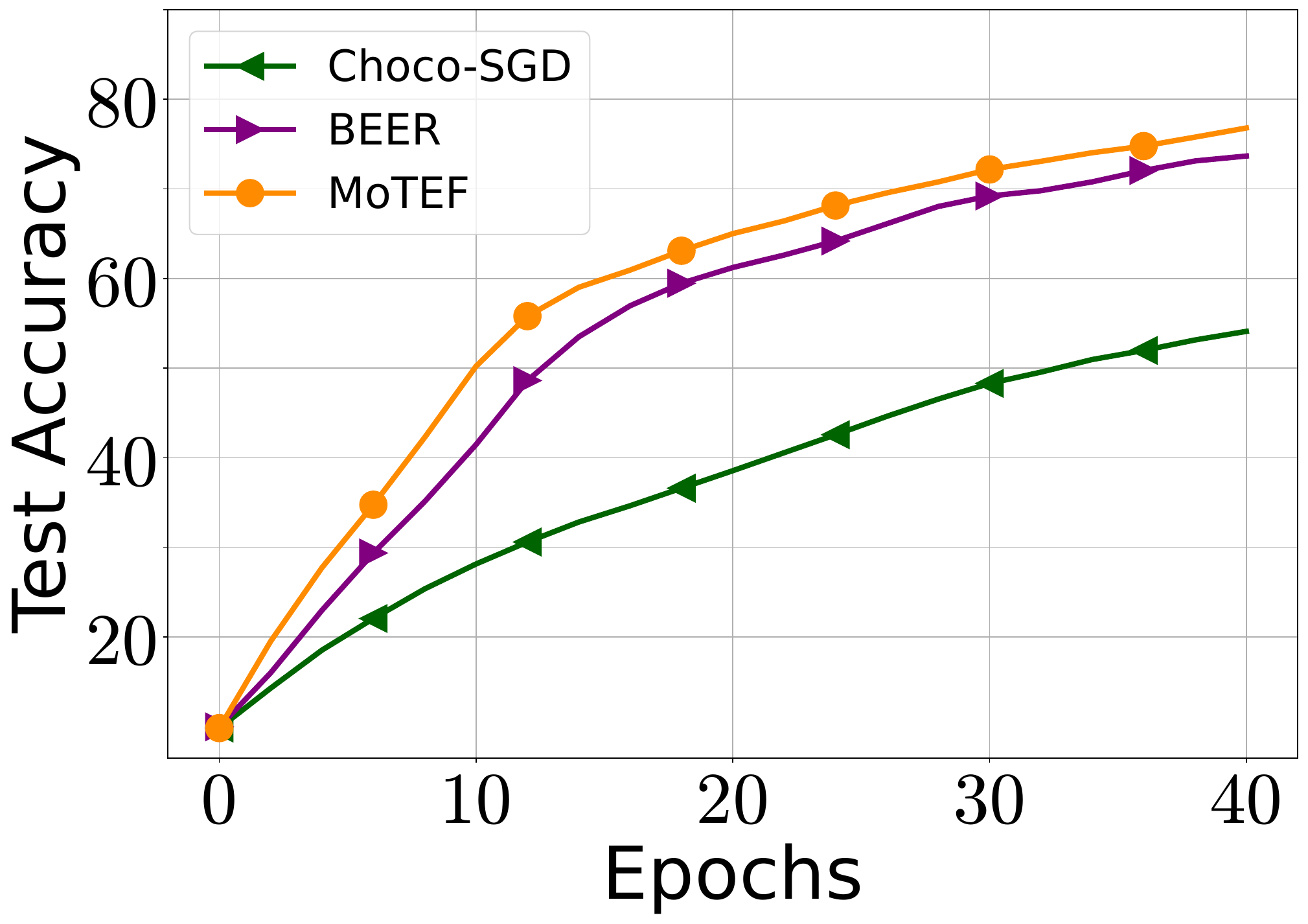} & 
		\includegraphics[width=0.23\textwidth]{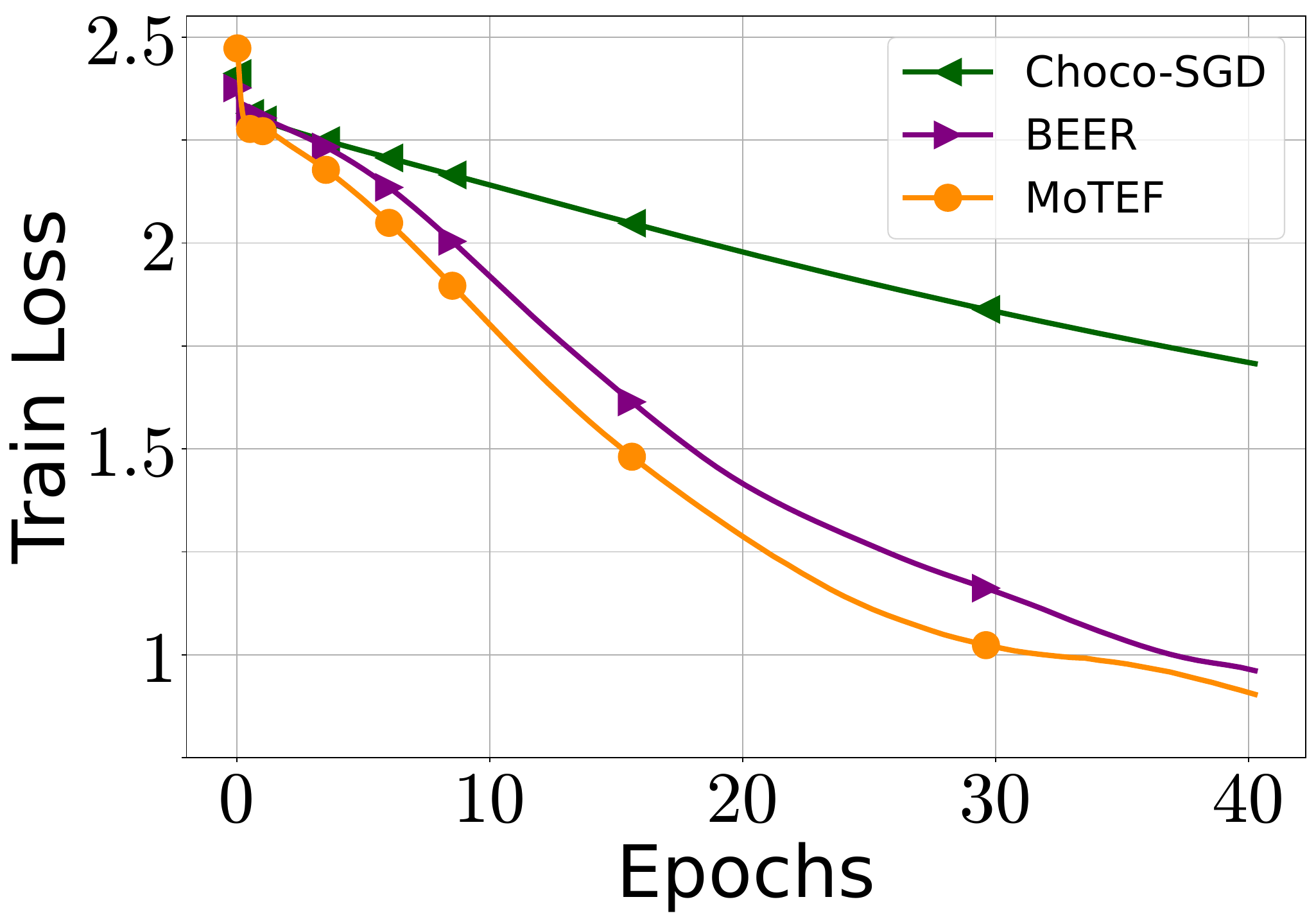} & 
		\includegraphics[width=0.23\textwidth]{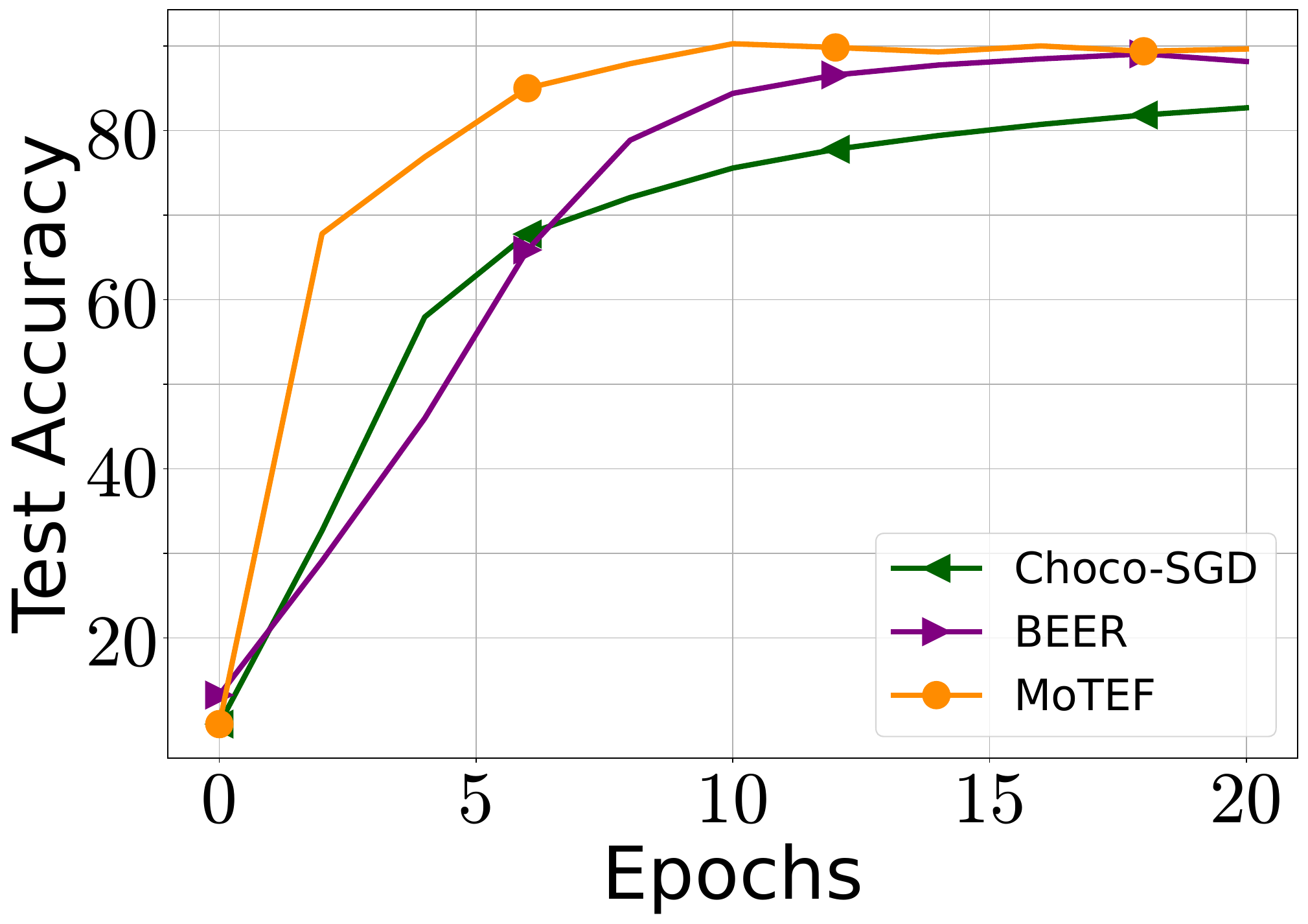} & 
		\includegraphics[width=0.23\textwidth]{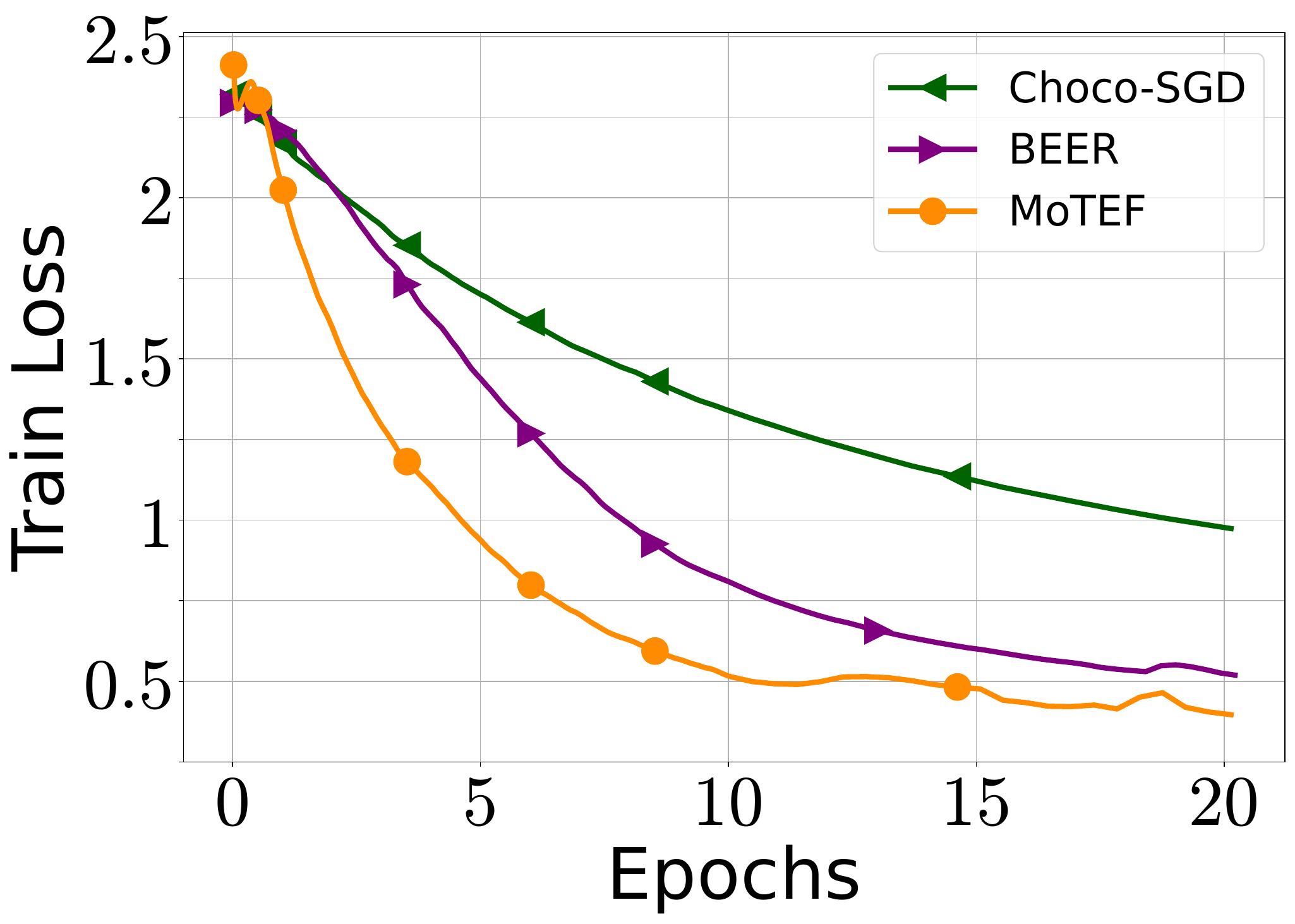} \\
		\multicolumn{2}{c}{ER $p=0.6$} &
		\multicolumn{2}{c}{Ring} 
	\end{tabular}
	\caption{Performance of \algname{MoTEF}, \algname{BEER}, and \algname{Choco-SGD} on ring and ER $p=0.6$ topologies in training CNN model on MNIST dataset.}
	\label{fig:cnn}
\end{figure*}

\section{Conclusion and Outlook}

In this work, we address a critical challenge in decentralized stochastic non-convex optimization with communication compression, that is, achieving the optimal asymptotic rate of $\cO(\nicefrac{\sigma^2}{n\varepsilon^4})$ matching that of the distributed \algname{SGD} under the standard assumptions and without any impractical assumptions, such as bounded data heterogeneity or access to large batches. We propose a new algorithm, \algname{MoTEF}, incorporating momentum tracking and Error Feedback, and prove that it achieves this goal. We also extend the framework to \algname{MoTEF-VR} and show that it achieves the variance-reduced rates under standard variance reduction assumption. We support our theoretical findings with an extensive experimental study.

The tracking mechanism plays a critical role in our algorithmic design, and a well-known challenge in these tracking mechanism is that it induces worse dependence on the spectral gap of the network. However, our preliminary numerical experiment shows that \algname{MoTEF} might be much less sensitive to the spectral gap than what the theory predicts. We believe that future work can look into this aspect and either improve our analysis or design even better tracking mechanisms. In our study, we focus only on compressed communication while there are many approaches such as performing several local steps \citep{mishchenko2022proxskip, gorbunov2021local, mishchenko2023partially} or asynchronous communication \citep{islamov2024asgrad, mishchenko2022asynchronous} that might be useful. 
We also note that some recent works attempt to improve the dependencies on the smoothness parameters for variants of Error Feedback algorithms~\citep{richtarik2024error}, where each local objective is assumed to be $L_i$-smooth, and a more careful analysis of the method gives a dependency on the average-smoothness $\bar L = n^{-1}\sum_{i=1}^n L_i$ instead of the maximum smoothness $L = \max_{i\in[n]}L_i$. Therefore, combining the aforementioned research directions with our proof techniques might lead to more improved results. We defer the exploration of these possible extensions to future research endeavors.

\section{Acknowledgement}

Rustem Islamov acknowledges the financial support of the Swiss National
Foundation, SNF grant No 207392.

\bibliography{references.bib}

\begin{thebibliography}{99}
\providecommand{\natexlab}[1]{#1}
\providecommand{\url}[1]{\texttt{#1}}
\expandafter\ifx\csname urlstyle\endcsname\relax
  \providecommand{\doi}[1]{doi: #1}\else
  \providecommand{\doi}{doi: \begingroup \urlstyle{rm}\Url}\fi

\bibitem[Aldous and Fill(2002)]{aldous2014reversible}
David Aldous and James~Allen Fill.
\newblock Reversible markov chains and random walks on graphs.
\newblock \emph{Unfinished monograph, recompiled 2014}, 2002.
\newblock URL \url{http://www.stat.berkeley.edu/$\sim$aldous/RWG/book.html}.

\bibitem[Alistarh et~al.(2017)Alistarh, Grubic, Li, Tomioka, and
  Vojnovic]{alistarh2017qsgd}
Dan Alistarh, Demjan Grubic, Jerry Li, Ryota Tomioka, and Milan Vojnovic.
\newblock Qsgd: Communication-efficient sgd via gradient quantization and
  encoding.
\newblock \emph{Advances in neural information processing systems}, 30, 2017.

\bibitem[Alistarh et~al.(2018)Alistarh, Hoefler, Johansson, Konstantinov,
  Khirirat, and Renggli]{alistarh2018convergence}
Dan Alistarh, Torsten Hoefler, Mikael Johansson, Nikola Konstantinov, Sarit
  Khirirat, and C{\'e}dric Renggli.
\newblock The convergence of sparsified gradient methods.
\newblock \emph{Advances in Neural Information Processing Systems}, 31, 2018.

\bibitem[Arjevani et~al.(2023)Arjevani, Carmon, Duchi, Foster, Srebro, and
  Woodworth]{arjevani2023lower}
Yossi Arjevani, Yair Carmon, John~C Duchi, Dylan~J Foster, Nathan Srebro, and
  Blake Woodworth.
\newblock Lower bounds for non-convex stochastic optimization.
\newblock \emph{Mathematical Programming}, 199\penalty0 (1):\penalty0 165--214,
  2023.

\bibitem[Bernstein et~al.(2018)Bernstein, Wang, Azizzadenesheli, and
  Anandkumar]{bernstein2018signsgd}
Jeremy Bernstein, Yu-Xiang Wang, Kamyar Azizzadenesheli, and Animashree
  Anandkumar.
\newblock signsgd: Compressed optimisation for non-convex problems.
\newblock In \emph{International Conference on Machine Learning}, 2018.

\bibitem[Beznosikov et~al.(2023)Beznosikov, Horv{\'a}th, Richt{\'a}rik, and
  Safaryan]{beznosikov2023biased}
Aleksandr Beznosikov, Samuel Horv{\'a}th, Peter Richt{\'a}rik, and Mher
  Safaryan.
\newblock On biased compression for distributed learning.
\newblock \emph{Journal of Machine Learning Research}, 2023.

\bibitem[Chang and Lin(2011)]{chang2011libsvm}
Chih-Chung Chang and Chih-Jen Lin.
\newblock Libsvm: a library for support vector machines.
\newblock \emph{ACM transactions on intelligent systems and technology (TIST)},
  2011.

\bibitem[Cheng et~al.(2024)Cheng, Huang, Wu, and Yuan]{cheng2024momentum}
Ziheng Cheng, Xinmeng Huang, Pengfei Wu, and Kun Yuan.
\newblock Momentum benefits non-iid federated learning simply and provably.
\newblock In \emph{The Twelfth International Conference on Learning
  Representations}, 2024.
\newblock URL \url{https://openreview.net/forum?id=TdhkAcXkRi}.

\bibitem[Cordonnier(2018)]{cordonnier2018convex}
Jean-Baptiste Cordonnier.
\newblock Convex optimization using sparsified stochastic gradient descent with
  memory.
\newblock \emph{Master thesis}, 2018.

\bibitem[Cutkosky and Orabona(2019)]{cutkosky2019momentum}
Ashok Cutkosky and Francesco Orabona.
\newblock Momentum-based variance reduction in non-convex sgd.
\newblock \emph{Advances in neural information processing systems}, 32, 2019.

\bibitem[Das et~al.(2022)Das, Acharya, Hashemi, Sanghavi, Dhillon, and
  Topcu]{das2022faster}
Rudrajit Das, Anish Acharya, Abolfazl Hashemi, Sujay Sanghavi, Inderjit~S
  Dhillon, and Ufuk Topcu.
\newblock Faster non-convex federated learning via global and local momentum.
\newblock In \emph{Uncertainty in Artificial Intelligence}, 2022.

\bibitem[Deng(2012)]{deng2012mnist}
Li~Deng.
\newblock The mnist database of handwritten digit images for machine learning
  research.
\newblock \emph{IEEE Signal Processing Magazine}, 2012.

\bibitem[Di et~al.(2022)Di, Ye, Chang, Dai, and Tsang]{didouble}
Hao Di, Haishan Ye, Xiangyu Chang, Guang Dai, and Ivor Tsang.
\newblock Double stochasticity gazes faster: Snap-shot decentralized stochastic
  gradient tracking methods.
\newblock In \emph{Forty-first International Conference on Machine Learning},
  2022.

\bibitem[Fang et~al.(2018)Fang, Li, Lin, and Zhang]{fang2018spider}
Cong Fang, Chris~Junchi Li, Zhouchen Lin, and Tong Zhang.
\newblock Spider: Near-optimal non-convex optimization via stochastic
  path-integrated differential estimator.
\newblock \emph{Advances in neural information processing systems}, 31, 2018.

\bibitem[Fatkhullin et~al.(2021)Fatkhullin, Sokolov, Gorbunov, Li, and
  Richtarik]{fatkhullin2021ef21}
Ilyas Fatkhullin, Igor Sokolov, Eduard Gorbunov, Zhize Li, and Peter Richtarik.
\newblock Ef21 with bells and whistles: Practical algorithmic extensions of
  modern error feedback.
\newblock \emph{arXiv preprint arXiv: 2110.03294}, 2021.

\bibitem[Fatkhullin et~al.(2024)Fatkhullin, Tyurin, and
  Richt{\'a}rik]{fatkhullin2024momentum}
Ilyas Fatkhullin, Alexander Tyurin, and Peter Richt{\'a}rik.
\newblock Momentum provably improves error feedback!
\newblock \emph{Advances in Neural Information Processing Systems}, 2024.

\bibitem[Gao et~al.(2024)Gao, Islamov, and Stich]{gao2024econtrol}
Yuan Gao, Rustem Islamov, and Sebastian~U Stich.
\newblock {EC}ontrol: Fast distributed optimization with compression and error
  control.
\newblock In \emph{The Twelfth International Conference on Learning
  Representations}, 2024.
\newblock URL \url{https://openreview.net/forum?id=lsvlvWB9vz}.

\bibitem[Gorbunov et~al.(2020)Gorbunov, Kovalev, Makarenko, and
  Richt{\'a}rik]{gorbunov2020linearly}
Eduard Gorbunov, Dmitry Kovalev, Dmitry Makarenko, and Peter Richt{\'a}rik.
\newblock Linearly converging error compensated sgd.
\newblock \emph{Advances in Neural Information Processing Systems}, 2020.

\bibitem[Gorbunov et~al.(2021)Gorbunov, Hanzely, and
  Richt{\'a}rik]{gorbunov2021local}
Eduard Gorbunov, Filip Hanzely, and Peter Richt{\'a}rik.
\newblock Local sgd: Unified theory and new efficient methods.
\newblock In \emph{International Conference on Artificial Intelligence and
  Statistics}, 2021.

\bibitem[Guo et~al.(2023)Guo, Alghunaim, Yuan, Condat, and Cao]{guo2023randcom}
Luyao Guo, Sulaiman~A Alghunaim, Kun Yuan, Laurent Condat, and Jinde Cao.
\newblock Revisiting decentralized proxskip: Achieving linear speedup.
\newblock \emph{arXiv preprint arXiv:2310.07983}, 2023.

\bibitem[Horv{\'a}th and Richt{\'a}rik(2020)]{horvath2020better}
Samuel Horv{\'a}th and Peter Richt{\'a}rik.
\newblock A better alternative to error feedback for communication-efficient
  distributed learning.
\newblock \emph{arXiv preprint arXiv:2006.11077}, 2020.

\bibitem[Horv{\'a}th et~al.(2022)Horv{\'a}th, Ho, Horvath, Sahu, Canini, and
  Richt{\'a}rik]{horvath2022natural}
Samuel Horv{\'a}th, Chen-Yu Ho, Ludovit Horvath, Atal~Narayan Sahu, Marco
  Canini, and Peter Richt{\'a}rik.
\newblock Natural compression for distributed deep learning.
\newblock In \emph{Mathematical and Scientific Machine Learning}, 2022.

\bibitem[Huang and Pu(2023)]{huang2023cedas}
Kun Huang and Shi Pu.
\newblock Cedas: A compressed decentralized stochastic gradient method with
  improved convergence.
\newblock \emph{arXiv preprint arXiv:2301.05872}, 2023.

\bibitem[Huang et~al.(2022)Huang, Chen, Yin, and Yuan]{huang2022lower}
Xinmeng Huang, Yiming Chen, Wotao Yin, and Kun Yuan.
\newblock Lower bounds and nearly optimal algorithms in distributed learning
  with communication compression.
\newblock \emph{Advances in Neural Information Processing Systems},
  35:\penalty0 18955--18969, 2022.

\bibitem[Huang et~al.(2024)Huang, Li, and Li]{huang2023stochastic}
Xinmeng Huang, Ping Li, and Xiaoyun Li.
\newblock Stochastic controlled averaging for federated learning with
  communication compression.
\newblock In \emph{The Twelfth International Conference on Learning
  Representations}, 2024.
\newblock URL \url{https://openreview.net/pdf?id=jj5ZjZsWJe}.

\bibitem[Inc.(2023)]{matlabSymbolic}
The~MathWorks Inc.
\newblock Symbolic math toolbox version: 23.2 (r2023b).
\newblock \emph{The MathWorks Inc.}, 2023.
\newblock URL \url{https://www.mathworks.com}.

\bibitem[Islamov et~al.(2023)Islamov, Qian, Hanzely, Safaryan, and
  Richt{\'a}rik]{islamov2023distributed}
Rustem Islamov, Xun Qian, Slavom{\'\i}r Hanzely, Mher Safaryan, and Peter
  Richt{\'a}rik.
\newblock Distributed newton-type methods with communication compression and
  bernoulli aggregation.
\newblock \emph{arXiv preprint arXiv:2206.03588}, 2023.

\bibitem[Islamov et~al.(2024)Islamov, Safaryan, and
  Alistarh]{islamov2024asgrad}
Rustem Islamov, Mher Safaryan, and Dan Alistarh.
\newblock {AsGrad}: A sharp unified analysis of asynchronous-{SGD} algorithms.
\newblock In \emph{Proceedings of The 27th International Conference on
  Artificial Intelligence and Statistics}, volume 238, pages 649--657. PMLR,
  2024.

\bibitem[Jin et~al.(2022)Jin, Peng, Yang, Wang, and Zhang]{jin2022federated}
Hao Jin, Yang Peng, Wenhao Yang, Shusen Wang, and Zhihua Zhang.
\newblock Federated reinforcement learning with environment heterogeneity.
\newblock In \emph{International Conference on Artificial Intelligence and
  Statistics}, 2022.

\bibitem[Kairouz et~al.(2021)Kairouz, McMahan, Avent, Bellet, Bennis, Bhagoji,
  Bonawitz, Charles, Cormode, Cummings, et~al.]{kairouz2021advances}
Peter Kairouz, H~Brendan McMahan, Brendan Avent, Aur{\'e}lien Bellet, Mehdi
  Bennis, Arjun~Nitin Bhagoji, Kallista Bonawitz, Zachary Charles, Graham
  Cormode, Rachel Cummings, et~al.
\newblock Advances and open problems in federated learning.
\newblock \emph{Foundations and trends{\textregistered} in machine learning},
  2021.

\bibitem[Karimi et~al.(2016)Karimi, Nutini, and Schmidt]{karimi2016linear}
Hamed Karimi, Julie Nutini, and Mark Schmidt.
\newblock Linear convergence of gradient and proximal-gradient methods under
  the polyak-{\l}ojasiewicz condition.
\newblock In \emph{Machine Learning and Knowledge Discovery in Databases:
  European Conference, ECML PKDD 2016, Riva del Garda, Italy, September 19-23,
  2016, Proceedings, Part I 16}, 2016.

\bibitem[Karimireddy et~al.(2019)Karimireddy, Rebjock, Stich, and
  Jaggi]{karimireddy2019error}
Sai~Praneeth Karimireddy, Quentin Rebjock, Sebastian Stich, and Martin Jaggi.
\newblock Error feedback fixes signsgd and other gradient compression schemes.
\newblock In \emph{International Conference on Machine Learning}, 2019.

\bibitem[Karimireddy et~al.(2020{\natexlab{a}})Karimireddy, Jaggi, Kale, Mohri,
  Reddi, Stich, and Suresh]{karimireddy2020mime}
Sai~Praneeth Karimireddy, Martin Jaggi, Satyen Kale, Mehryar Mohri, Sashank~J
  Reddi, Sebastian~U Stich, and Ananda~Theertha Suresh.
\newblock Mime: Mimicking centralized stochastic algorithms in federated
  learning.
\newblock \emph{arXiv preprint arXiv:2008.03606}, 2020{\natexlab{a}}.

\bibitem[Karimireddy et~al.(2020{\natexlab{b}})Karimireddy, Kale, Mohri, Reddi,
  Stich, and Suresh]{karimireddy2020scaffold}
Sai~Praneeth Karimireddy, Satyen Kale, Mehryar Mohri, Sashank Reddi, Sebastian
  Stich, and Ananda~Theertha Suresh.
\newblock Scaffold: Stochastic controlled averaging for federated learning.
\newblock In \emph{International conference on machine learning}. PMLR,
  2020{\natexlab{b}}.

\bibitem[Kempe et~al.(2003)Kempe, Dobra, and Gehrke]{kempe2003gossip}
David Kempe, Alin Dobra, and Johannes Gehrke.
\newblock Gossip-based computation of aggregate information.
\newblock In \emph{44th Annual IEEE Symposium on Foundations of Computer
  Science, 2003. Proceedings.}, 2003.

\bibitem[Keskar et~al.(2016)Keskar, Mudigere, Nocedal, Smelyanskiy, and
  Tang]{keskar2016large}
Nitish~Shirish Keskar, Dheevatsa Mudigere, Jorge Nocedal, Mikhail Smelyanskiy,
  and Ping Tak~Peter Tang.
\newblock On large-batch training for deep learning: Generalization gap and
  sharp minima.
\newblock \emph{arXiv preprint arXiv:1609.04836}, 2016.

\bibitem[Khirirat et~al.(2023)Khirirat, Gorbunov, Horváth, Islamov, Karray,
  and Richtárik]{khirirat2023clip21}
Sarit Khirirat, Eduard Gorbunov, Samuel Horváth, Rustem Islamov, Fakhri
  Karray, and Peter Richtárik.
\newblock Clip21: Error feedback for gradient clipping.
\newblock \emph{arXiv preprint arXiv:2305.18929}, 2023.

\bibitem[Khodadadian et~al.(2022)Khodadadian, Sharma, Joshi, and
  Maguluri]{khodadadian2022federated}
Sajad Khodadadian, Pranay Sharma, Gauri Joshi, and Siva~Theja Maguluri.
\newblock Federated reinforcement learning: Linear speedup under markovian
  sampling.
\newblock In \emph{International Conference on Machine Learning}, 2022.

\bibitem[Koloskova et~al.(2019)Koloskova, Stich, and
  Jaggi]{koloskova2019decentralized}
Anastasia Koloskova, Sebastian Stich, and Martin Jaggi.
\newblock Decentralized stochastic optimization and gossip algorithms with
  compressed communication.
\newblock In \emph{International Conference on Machine Learning}, 2019.

\bibitem[Koloskova et~al.(2020{\natexlab{a}})Koloskova, Lin, Stich, and
  Jaggi]{koloskova2020decentralized}
Anastasia Koloskova, Tao Lin, Sebastian~U Stich, and Martin Jaggi.
\newblock Decentralized deep learning with arbitrary communication compression.
\newblock In \emph{International Conference on Learning Representations},
  2020{\natexlab{a}}.
\newblock URL \url{https://openreview.net/forum?id=SkgGCkrKvH}.

\bibitem[Koloskova et~al.(2020{\natexlab{b}})Koloskova, Loizou, Boreiri, Jaggi,
  and Stich]{koloskova2020unified}
Anastasia Koloskova, Nicolas Loizou, Sadra Boreiri, Martin Jaggi, and Sebastian
  Stich.
\newblock A unified theory of decentralized sgd with changing topology and
  local updates.
\newblock In \emph{Proceedings of International Conference on Machine
  Learning}, pages 5381--5393. PMLR, 2020{\natexlab{b}}.

\bibitem[Koloskova et~al.(2021)Koloskova, Lin, and
  Stich]{koloskova2021improved}
Anastasiia Koloskova, Tao Lin, and Sebastian~U Stich.
\newblock An improved analysis of gradient tracking for decentralized machine
  learning.
\newblock \emph{Advances in Neural Information Processing Systems}, 2021.

\bibitem[Konecn{\`y} et~al.(2016)Konecn{\`y}, McMahan, Yu, Richt{\'a}rik,
  Suresh, and Bacon]{konecny2016federated}
Jakub Konecn{\`y}, H~Brendan McMahan, Felix~X Yu, Peter Richt{\'a}rik,
  Ananda~Theertha Suresh, and Dave Bacon.
\newblock Federated learning: Strategies for improving communication
  efficiency.
\newblock \emph{arXiv preprint arXiv:1610.05492}, 2016.

\bibitem[Kovalev et~al.(2020)Kovalev, Salim, and
  Richt{\'a}rik]{kovalev2020optimal}
Dmitry Kovalev, Adil Salim, and Peter Richt{\'a}rik.
\newblock Optimal and practical algorithms for smooth and strongly convex
  decentralized optimization.
\newblock \emph{Advances in Neural Information Processing Systems}, 2020.

\bibitem[Kovalev et~al.(2021)Kovalev, Koloskova, Jaggi, Richtarik, and
  Stich]{kovalev2021linearly}
Dmitry Kovalev, Anastasia Koloskova, Martin Jaggi, Peter Richtarik, and
  Sebastian Stich.
\newblock A linearly convergent algorithm for decentralized optimization:
  Sending less bits for free!
\newblock In \emph{International Conference on Artificial Intelligence and
  Statistics}, 2021.

\bibitem[Li et~al.(2022)Li, Li, and Chi]{li2022destress}
Boyue Li, Zhize Li, and Yuejie Chi.
\newblock Destress: Computation-optimal and communication-efficient
  decentralized nonconvex finite-sum optimization.
\newblock \emph{SIAM Journal on Mathematics of Data Science}, 2022.

\bibitem[Li et~al.(2021)Li, Liu, Tang, Yan, and Yuan]{li2021decentralized}
Yao Li, Xiaorui Liu, Jiliang Tang, Ming Yan, and Kun Yuan.
\newblock Decentralized composite optimization with compression.
\newblock \emph{arXiv preprint arXiv:2108.04448}, 2021.

\bibitem[Lian et~al.(2017)Lian, Zhang, Zhang, Hsieh, Zhang, and
  Liu]{lian2017can}
Xiangru Lian, Ce~Zhang, Huan Zhang, Cho-Jui Hsieh, Wei Zhang, and Ji~Liu.
\newblock Can decentralized algorithms outperform centralized algorithms? a
  case study for decentralized parallel stochastic gradient descent.
\newblock \emph{Advances in neural information processing systems}, 2017.

\bibitem[Liu et~al.(2021)Liu, Li, Wang, Tang, and Yan]{liu2020linear}
Xiaorui Liu, Yao Li, Rongrong Wang, Jiliang Tang, and Ming Yan.
\newblock Linear convergent decentralized optimization with compression.
\newblock In \emph{International Conference on Learning Representations}, 2021.
\newblock URL \url{https://openreview.net/forum?id=84gjULz1t5}.

\bibitem[Lu and De~Sa(2021)]{lu2021optimal}
Yucheng Lu and Christopher De~Sa.
\newblock Optimal complexity in decentralized training.
\newblock In \emph{International Conference on Machine Learning}, 2021.

\bibitem[Makarenko et~al.(2022)Makarenko, Gasanov, Sadiev, Islamov, and
  Richt{\'a}rik]{makarenko2023adaptive}
Maksim Makarenko, Elnur Gasanov, Abdurakhmon Sadiev, Rustem Islamov, and Peter
  Richt{\'a}rik.
\newblock Adaptive compression for communication-efficient distributed
  training.
\newblock \emph{arXiv preprint arXiv:2211.00188}, 2022.

\bibitem[Marvasti et~al.(2014)Marvasti, Fu, DorMohammadi, and
  Rais-Rohani]{marvasti2014optimal}
Amin~Kargarian Marvasti, Yong Fu, Saber DorMohammadi, and Masoud Rais-Rohani.
\newblock Optimal operation of active distribution grids: A system of systems
  framework.
\newblock \emph{IEEE Transactions on Smart Grid}, 2014.

\bibitem[Mishchenko et~al.(2019)Mishchenko, Gorbunov, Tak{\'a}{\v{c}}, and
  Richt{\'a}rik]{mishchenko2019distributed}
Konstantin Mishchenko, Eduard Gorbunov, Martin Tak{\'a}{\v{c}}, and Peter
  Richt{\'a}rik.
\newblock Distributed learning with compressed gradient differences.
\newblock \emph{arXiv preprint arXiv:1901.09269}, 2019.

\bibitem[Mishchenko et~al.(2020)Mishchenko, Khaled, and
  Richt{\'a}rik]{mishchenko2020random}
Konstantin Mishchenko, Ahmed Khaled, and Peter Richt{\'a}rik.
\newblock Random reshuffling: Simple analysis with vast improvements.
\newblock \emph{Advances in Neural Information Processing Systems}, 2020.

\bibitem[Mishchenko et~al.(2022{\natexlab{a}})Mishchenko, Bach, Even, and
  Woodworth]{mishchenko2022asynchronous}
Konstantin Mishchenko, Francis Bach, Mathieu Even, and Blake~E Woodworth.
\newblock Asynchronous sgd beats minibatch sgd under arbitrary delays.
\newblock \emph{Advances in Neural Information Processing Systems},
  2022{\natexlab{a}}.

\bibitem[Mishchenko et~al.(2022{\natexlab{b}})Mishchenko, Malinovsky, Stich,
  and Richt{\'a}rik]{mishchenko2022proxskip}
Konstantin Mishchenko, Grigory Malinovsky, Sebastian Stich, and Peter
  Richt{\'a}rik.
\newblock Proxskip: Yes! local gradient steps provably lead to communication
  acceleration! finally!
\newblock In \emph{International Conference on Machine Learning},
  2022{\natexlab{b}}.

\bibitem[Mishchenko et~al.(2023)Mishchenko, Islamov, Gorbunov, and
  Horv{\'a}th]{mishchenko2023partially}
Konstantin Mishchenko, Rustem Islamov, Eduard Gorbunov, and Samuel Horv{\'a}th.
\newblock Partially personalized federated learning: Breaking the curse of data
  heterogeneity.
\newblock \emph{arXiv preprint arXiv:2305.18285}, 2023.

\bibitem[Mitra et~al.(2023)Mitra, Pappas, and Hassani]{mitra2023temporal}
Aritra Mitra, George~J Pappas, and Hamed Hassani.
\newblock Temporal difference learning with compressed updates: Error-feedback
  meets reinforcement learning.
\newblock \emph{arXiv preprint arXiv:2301.00944}, 2023.

\bibitem[Nedic and Ozdaglar(2009)]{nedic2009distributed}
Angelia Nedic and Asuman Ozdaglar.
\newblock Distributed subgradient methods for multi-agent optimization.
\newblock \emph{IEEE Transactions on Automatic Control}, 2009.

\bibitem[Nedic et~al.(2017)Nedic, Olshevsky, and Shi]{nedic2017achieving}
Angelia Nedic, Alex Olshevsky, and Wei Shi.
\newblock Achieving geometric convergence for distributed optimization over
  time-varying graphs.
\newblock \emph{SIAM Journal on Optimization}, 2017.

\bibitem[Nedi{\'c} et~al.(2018)Nedi{\'c}, Olshevsky, and
  Rabbat]{nedic2018network}
Angelia Nedi{\'c}, Alex Olshevsky, and Michael~G Rabbat.
\newblock Network topology and communication-computation tradeoffs in
  decentralized optimization.
\newblock \emph{Proceedings of the IEEE}, 2018.

\bibitem[Polyak(1963)]{polyak1963gradient}
Boris~T Polyak.
\newblock Gradient methods for the minimisation of functionals.
\newblock \emph{USSR Computational Mathematics and Mathematical Physics}, 1963.

\bibitem[Polyak(1964)]{polyak1964some}
Boris~T Polyak.
\newblock Some methods of speeding up the convergence of iteration methods.
\newblock \emph{Ussr computational mathematics and mathematical physics}, 1964.

\bibitem[Qian et~al.(2022)Qian, Islamov, Safaryan, and
  Richtarik]{qian2022basis}
Xun Qian, Rustem Islamov, Mher Safaryan, and Peter Richtarik.
\newblock Basis matters: Better communication-efficient second order methods
  for federated learning.
\newblock In \emph{Proceedings of The 25th International Conference on
  Artificial Intelligence and Statistics}, volume 151 of \emph{Proceedings of
  Machine Learning Research}, pages 680--720. PMLR, 2022.

\bibitem[Qu and Li(2017)]{qu2017harnessing}
Guannan Qu and Na~Li.
\newblock Harnessing smoothness to accelerate distributed optimization.
\newblock \emph{IEEE Transactions on Control of Network Systems}, 2017.

\bibitem[Ramesh et~al.(2021)Ramesh, Pavlov, Goh, Gray, Voss, Radford, Chen, and
  Sutskever]{ramesh2021zero}
Aditya Ramesh, Mikhail Pavlov, Gabriel Goh, Scott Gray, Chelsea Voss, Alec
  Radford, Mark Chen, and Ilya Sutskever.
\newblock Zero-shot text-to-image generation.
\newblock In \emph{International conference on machine learning}, 2021.

\bibitem[Ramesh et~al.(2022)Ramesh, Dhariwal, Nichol, Chu, and
  Chen]{ramesh2022hierarchical}
Aditya Ramesh, Prafulla Dhariwal, Alex Nichol, Casey Chu, and Mark Chen.
\newblock Hierarchical text-conditional image generation with clip latents.
\newblock \emph{arXiv preprint arXiv:2204.06125}, 2022.

\bibitem[Richt{\'a}rik et~al.(2021)Richt{\'a}rik, Sokolov, and
  Fatkhullin]{richtarik2021ef21}
Peter Richt{\'a}rik, Igor Sokolov, and Ilyas Fatkhullin.
\newblock Ef21: A new, simpler, theoretically better, and practically faster
  error feedback.
\newblock \emph{Advances in Neural Information Processing Systems}, 2021.

\bibitem[Richt{\'a}rik et~al.(2024)Richt{\'a}rik, Gasanov, and
  Burlachenko]{richtarik2024error}
Peter Richt{\'a}rik, Elnur Gasanov, and Konstantin~Pavlovich Burlachenko.
\newblock Error feedback reloaded: From quadratic to arithmetic mean of
  smoothness constants.
\newblock In \emph{The Twelfth International Conference on Learning
  Representations}, 2024.
\newblock URL \url{https://openreview.net/forum?id=Ch7WqGcGmb}.

\bibitem[Rieke et~al.(2020)Rieke, Hancox, Li, Milletari, Roth, Albarqouni,
  Bakas, Galtier, Landman, Maier-Hein, et~al.]{rieke2020future}
Nicola Rieke, Jonny Hancox, Wenqi Li, Fausto Milletari, Holger~R Roth, Shadi
  Albarqouni, Spyridon Bakas, Mathieu~N Galtier, Bennett~A Landman, Klaus
  Maier-Hein, et~al.
\newblock The future of digital health with federated learning.
\newblock \emph{NPJ digital medicine}, 2020.

\bibitem[Safaryan et~al.(2021)Safaryan, Islamov, Qian, and
  Richt{\'a}rik]{safaryan2022fednl}
Mher Safaryan, Rustem Islamov, Xun Qian, and Peter Richt{\'a}rik.
\newblock {F}ed{NL}: Making newton-type methods applicable to federated
  learning.
\newblock \emph{arXiv preprint arXiv:2106.02969}, 2021.

\bibitem[Savazzi et~al.(2020)Savazzi, Nicoli, and Rampa]{savazzi2020federated}
Stefano Savazzi, Monica Nicoli, and Vittorio Rampa.
\newblock Federated learning with cooperating devices: A consensus approach for
  massive iot networks.
\newblock \emph{IEEE Internet of Things Journal}, 2020.

\bibitem[Seide et~al.(2014)Seide, Fu, Droppo, Li, and Yu]{seide20141}
Frank Seide, Hao Fu, Jasha Droppo, Gang Li, and Dong Yu.
\newblock 1-bit stochastic gradient descent and its application to
  data-parallel distributed training of speech dnns.
\newblock In \emph{Fifteenth annual conference of the international speech
  communication association}, 2014.

\bibitem[Sekhari et~al.(2021)Sekhari, Sridharan, and Kale]{sekhari2021sgd}
Ayush Sekhari, Karthik Sridharan, and Satyen Kale.
\newblock Sgd: The role of implicit regularization, batch-size and
  multiple-epochs.
\newblock \emph{Advances In Neural Information Processing Systems}, 2021.

\bibitem[Shoeybi et~al.(2019)Shoeybi, Patwary, Puri, LeGresley, Casper, and
  Catanzaro]{shoeybi2019megatron}
Mohammad Shoeybi, Mostofa Patwary, Raul Puri, Patrick LeGresley, Jared Casper,
  and Bryan Catanzaro.
\newblock Megatron-lm: Training multi-billion parameter language models using
  model parallelism.
\newblock \emph{arXiv preprint arXiv:1909.08053}, 2019.

\bibitem[Singh et~al.(2021)Singh, Data, George, and Diggavi]{singh2021squarm}
Navjot Singh, Deepesh Data, Jemin George, and Suhas Diggavi.
\newblock Squarm-sgd: Communication-efficient momentum sgd for decentralized
  optimization.
\newblock \emph{IEEE Journal on Selected Areas in Information Theory}, 2021.

\bibitem[Stich(2020)]{stich2020communication}
Sebastian~U Stich.
\newblock On communication compression for distributed optimization on
  heterogeneous data.
\newblock \emph{arXiv preprint arXiv:2009.02388}, 2020.

\bibitem[Stich and Karimireddy(2019)]{stich2019error}
Sebastian~U Stich and Sai~Praneeth Karimireddy.
\newblock The error-feedback framework: Better rates for sgd with delayed
  gradients and compressed communication.
\newblock \emph{arXiv preprint arXiv:1909.05350}, 2019.

\bibitem[Stich et~al.(2018)Stich, Cordonnier, and Jaggi]{stich2018sparsified}
Sebastian~U Stich, Jean-Baptiste Cordonnier, and Martin Jaggi.
\newblock Sparsified sgd with memory.
\newblock \emph{Advances in neural information processing systems}, 31, 2018.

\bibitem[Ström(2015)]{strom2015scalable}
Nikko Ström.
\newblock Scalable distributed dnn training using commodity gpu cloud
  computing.
\newblock In \emph{Interspeech 2015}, 2015.

\bibitem[Sun et~al.(2020)Sun, Lu, and Hong]{sun2020improving}
Haoran Sun, Songtao Lu, and Mingyi Hong.
\newblock Improving the sample and communication complexity for decentralized
  non-convex optimization: Joint gradient estimation and tracking.
\newblock In \emph{International conference on machine learning}, 2020.

\bibitem[Takezawa et~al.(2022)Takezawa, Bao, Niwa, Sato, and
  Yamada]{takezawa2022momentum}
Yuki Takezawa, Han Bao, Kenta Niwa, Ryoma Sato, and Makoto Yamada.
\newblock Momentum tracking: Momentum acceleration for decentralized deep
  learning on heterogeneous data.
\newblock \emph{arXiv preprint arXiv:2209.15505}, 2022.

\bibitem[Tang et~al.(2018{\natexlab{a}})Tang, Gan, Zhang, Zhang, and
  Liu]{tang2018communication}
Hanlin Tang, Shaoduo Gan, Ce~Zhang, Tong Zhang, and Ji~Liu.
\newblock Communication compression for decentralized training.
\newblock \emph{Advances in Neural Information Processing Systems},
  2018{\natexlab{a}}.

\bibitem[Tang et~al.(2018{\natexlab{b}})Tang, Lian, Yan, Zhang, and
  Liu]{tang2018d}
Hanlin Tang, Xiangru Lian, Ming Yan, Ce~Zhang, and Ji~Liu.
\newblock D2: Decentralized training over decentralized data.
\newblock In \emph{International Conference on Machine Learning},
  2018{\natexlab{b}}.

\bibitem[Tang et~al.(2019)Tang, Lian, Qiu, Yuan, Zhang, Zhang, and
  Liu]{tang2019deepsqueeze}
Hanlin Tang, Xiangru Lian, Shuang Qiu, Lei Yuan, Ce~Zhang, Tong Zhang, and
  Ji~Liu.
\newblock Deepsqueeze: Decentralization meets error-compensated compression.
\newblock \emph{arXiv preprint arXiv:1907.07346}, 2019.

\bibitem[Tran-Dinh et~al.(2022)Tran-Dinh, Pham, Phan, and
  Nguyen]{tran2022hybrid}
Quoc Tran-Dinh, Nhan~H Pham, Dzung~T Phan, and Lam~M Nguyen.
\newblock A hybrid stochastic optimization framework for composite nonconvex
  optimization.
\newblock \emph{Mathematical Programming}, 191\penalty0 (2):\penalty0
  1005--1071, 2022.

\bibitem[Wang et~al.(2019{\natexlab{a}})Wang, Tantia, Ballas, and
  Rabbat]{wang2019slowmo}
Jianyu Wang, Vinayak Tantia, Nicolas Ballas, and Michael Rabbat.
\newblock Slowmo: Improving communication-efficient distributed sgd with slow
  momentum.
\newblock \emph{arXiv preprint arXiv:1910.00643}, 2019{\natexlab{a}}.

\bibitem[Wang et~al.(2020)Wang, Fu, He, Hao, and Wu]{wang2020survey}
Meng Wang, Weijie Fu, Xiangnan He, Shijie Hao, and Xindong Wu.
\newblock A survey on large-scale machine learning.
\newblock \emph{IEEE Transactions on Knowledge and Data Engineering}, 2020.

\bibitem[Wang et~al.(2019{\natexlab{b}})Wang, Ji, Zhou, Liang, and
  Tarokh]{wang2019spiderboost}
Zhe Wang, Kaiyi Ji, Yi~Zhou, Yingbin Liang, and Vahid Tarokh.
\newblock Spiderboost and momentum: Faster variance reduction algorithms.
\newblock \emph{Advances in Neural Information Processing Systems}, 32,
  2019{\natexlab{b}}.

\bibitem[Wen et~al.(2017)Wen, Xu, Yan, Wu, Wang, Chen, and Li]{wen2017terngrad}
Wei Wen, Cong Xu, Feng Yan, Chunpeng Wu, Yandan Wang, Yiran Chen, and Hai Li.
\newblock Terngrad: Ternary gradients to reduce communication in distributed
  deep learning.
\newblock \emph{Advances in neural information processing systems}, 2017.

\bibitem[Wilson and Martinez(2003)]{wilson2003general}
D~Randall Wilson and Tony~R Martinez.
\newblock The general inefficiency of batch training for gradient descent
  learning.
\newblock \emph{Neural networks}, 2003.

\bibitem[Xiao and Boyd(2004)]{xiao2004fast}
Lin Xiao and Stephen Boyd.
\newblock Fast linear iterations for distributed averaging.
\newblock \emph{Systems \& Control Letters}, 2004.

\bibitem[Xin et~al.(2021)Xin, Khan, and Kar]{xin2021fast}
Ran Xin, Usman~A Khan, and Soummya Kar.
\newblock A fast randomized incremental gradient method for decentralized
  nonconvex optimization.
\newblock \emph{IEEE Transactions on Automatic Control}, 2021.

\bibitem[Xin et~al.(2022)Xin, Khan, and Kar]{xin2022fast}
Ran Xin, Usman~A Khan, and Soummya Kar.
\newblock Fast decentralized nonconvex finite-sum optimization with recursive
  variance reduction.
\newblock \emph{SIAM Journal on Optimization}, 2022.

\bibitem[Xu and Xu(2022)]{xu2022momentumbased}
Yangyang Xu and Yibo Xu.
\newblock Momentum-based variance-reduced proximal stochastic gradient method
  for composite nonconvex stochastic optimization, 2022.

\bibitem[Yan et~al.(2023)Yan, Chen, Chen, Cui, Lu, and Xu]{yan2023compressed}
Yonggui Yan, Jie Chen, Pin-Yu Chen, Xiaodong Cui, Songtao Lu, and Yangyang Xu.
\newblock Compressed decentralized proximal stochastic gradient method for
  nonconvex composite problems with heterogeneous data.
\newblock In \emph{International Conference on Machine Learning}, 2023.

\bibitem[Yau and Wai(2022)]{yau2022docom}
Chung-Yiu Yau and Hoi-To Wai.
\newblock Docom: Compressed decentralized optimization with near-optimal sample
  complexity.
\newblock \emph{arXiv preprint arXiv:2202.00255}, 2022.

\bibitem[Zhang et~al.(2023)Zhang, You, and Xie]{zhang2023innovation}
Jiaqi Zhang, Keyou You, and Lihua Xie.
\newblock Innovation compression for communication-efficient distributed
  optimization with linear convergence.
\newblock \emph{IEEE Transactions on Automatic Control}, 2023.

\bibitem[Zhao et~al.(2022)Zhao, Li, Li, Richt{\'a}rik, and Chi]{zhao2022beer}
Haoyu Zhao, Boyue Li, Zhize Li, Peter Richt{\'a}rik, and Yuejie Chi.
\newblock Beer: Fast $ o (1/t) $ rate for decentralized nonconvex optimization
  with communication compression.
\newblock \emph{Advances in Neural Information Processing Systems}, 2022.

\end{thebibliography}
\bibliographystyle{plainnat}

\clearpage

\appendix


\section{Extended Related Work}\label{sec:related_work_appendix}

In this section, we provide an additional discussion on the related works on decentralized optimization specifically focusing on the dependency of the deterministic optimization term on the spectral gap $\rho$.

In the non-compressed, strongly convex, and deterministic regime, \citet{kovalev2020optimal} and \citet{mishchenko2022proxskip} achieve optimal $\wtilde{\cO}(\sqrt{\nicefrac{L}{\mu\rho}}),$ i.e. the dependency on the spectral $\cO(\nicefrac{1}{\sqrt{\rho}}).$ \citet{kovalev2020optimal} proposed an algorithm \algname{APAPC} based on the Chebyshev acceleration while \citet{mishchenko2022proxskip} boosts the convergence through incorporating multiple local steps. We highlight that both algorithms do not impose any bounds on the data heterogeneity. Later, the results of \citet{mishchenko2022proxskip} were extended to the stochastic regime \citep{guo2023randcom} beyond strongly convex functions with a linear speed-up in $n.$ However, the dependency on $\rho$ in the stochastic regime worsened to $\cO(\nicefrac{1}{\rho^2}).$ \citet{liu2020linear,li2021decentralized} achieved $\wtilde{\cO}(\nicefrac{L}{\mu\rho})$ convergence rate but with the use of a stricter class of unbiased compression operators and in the full-batch regime. \citet{zhang2023innovation} proposed an algorithm called \algname{COLD} that attains $\wtilde{\cO}(\nicefrac{L}{\mu} + \frac{1}{\alpha^2\rho})$ convergence rate in the deterministic strongly convex regime with contractive compressors. \algname{BEER} algorithm \citep{zhao2022beer} achieves the rate that depends on $\frac{1}{\rho^3}$ similarly to the rate of \algname{MoTEF} but under unrealistic large batch requirement. \algname{DoCoM} algorithm attains the linear speed-up with $n$ in the stochastic regime but has worse $\frac{1}{\rho^4}$ dependency on the spectral gap. Both \algname{DeepSqueeze} \citep{tang2018communication} and \algname{Choco-SGD} \citep{koloskova2019decentralized} achieve $\frac{1}{\rho^2}$ but under bounded data heterogeneity assumptions.

To summarize, to the best of our knowledge, there is no work in the most general setting considered in this work, namely, stochastic non-convex decentralized optimization with contractive compression under arbitrary data heterogeneity, that achieves better dependency on the spectral gap $\rho.$ Most of the works make additional assumptions with a possible improvement of the rate w.r.t.  $\rho$, however, the question remains if the results are transferable to the considered setting. Therefore, additional effort is needed to either improve the convergence guarantees of \algname{MoTEF} with a more involved analysis using an enhanced tracking mechanism or show the lower bound that the optimal $\frac{1}{\sqrt{\rho}}$ dependency is not achievable in the worst case.

\subsection{Intuition behind \algname{MoTEF} Algorithm Design}

Designing an algorithm with strong convergence guarantees without imposing assumptions on the problem or data is complicated. In MoTEF we
incorporate three main ingredients to make it converge faster under arbitrary data heterogeneity. In particular, the combination of \algname{EF21}-type Error Feedback \citep{richtarik2021ef21} and Gradient Tracking mechanisms is the key factor in getting rid of the influence of data heterogeneity. We emphasize that not using one of them would lead to restrictions on the data heterogeneity. Indeed, \algname{EF21} is known to remove such dependencies in centralized training while the
GT mechanism is essential in decentralized learning \citep{koloskova2021improved}. Nonetheless, \algname{EF21} does not handle the error coming from stochastic gradients and momentum is known to be one of the remedies to it \citep{fatkhullin2024momentum}.

\section{Missing proof for \algname{MoTEF}}\label{sec:missing_proofs_motef}

We recall the notation we use to prove convergence of \algname{MoTEF}:

\begin{align*}
    \hat G^t &\eqdef \Eb{\norm{\nabla F(\mX^t)\1-\mM^t\1}^2}\\
    \wtilde G^t &\eqdef \sum_{i=1}^n \Eb{\norm{\nabla f_i(\xx_i^t)-\mm_i^t}^2}=\Eb{\norm{\nabla F(\mX^t)-\mM^t}_F^2}\\
    \Omega_1^t &\eqdef \Eb{\norm{\mH^t-\mX^t}_{\rm F}^2}\\
    \Omega_2^t &\eqdef \Eb{\norm{\mG^t-\mV^t}_{\rm F}^2}\\
    \Omega_3^t &\eqdef \Eb{\norm{\mX^t-\bar\xx^t\1^\top}_{\rm F}^2}\\
    \Omega_4^t &\eqdef \Eb{\norm{\mV^t-\bar\vv^t\1^\top}_{\rm F}^2}\\
    \Omega_5^t &\eqdef \Eb{\norm{\bar\vv^t}^2}.
\end{align*}

Moreover, $F^t\eqdef \Eb{f(\bar\xx^t)}-f^\star$. Let us define  $\mathbf{\Omega}^t\eqdef [\hat G^t, \wtilde G^t,\Omega_1^t,\Omega_2^t, \Omega_3^t, \Omega_4^t]^\top$. In addition, we denote $\wtilde\nabla F(\mX^t) \eqdef [\gg^1(\xx^t_i), \dots, \gg^n(\xx^t_n)] \in\R^{d\times n}$ a matrix that contains local stochastic gradients. We denote $C\eqdef \sigma_{\max}^2 (\mW-\mI) \leq 4$. 

\begin{lemma}[Lemma B.2 from \citep{zhao2022beer}]\label{lem:lemma_B2}
Let $\mW$ be a mixing matrix with a spectral gap $\rho$. Then for any matrix $\mX\in\R^{d\times n}$ and $\bar\xx = \frac{1}{n}\mX\1$ we have 
\begin{eqnarray}
    \|\mX\mW - \bar\xx\1^\top\|^2_{\rm F} \le (1-\rho)\|\mX-\bar\xx\1^\top\|^2_{\rm F}.
\end{eqnarray}
Moreover, for any $\gamma \in (0,1]$ the matrix $\wtilde\mW = \mI + \gamma(\mW-\mI)$ has a spectral gap at least $\gamma\rho.$ 
    
\end{lemma}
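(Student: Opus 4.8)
\textbf{Proof plan for Lemma~\ref{lem:lemma_B2}.}
The plan is to prove the two statements separately, both using only the spectral properties of $\mW$ from Assumption~\ref{asmp:mixing_matrix}. For the first inequality, the key observation is that the vector of ones $\1$ is a left and right eigenvector of $\mW$ with eigenvalue $1$, so $\mW$ leaves the ``consensus'' direction fixed and contracts on its orthogonal complement. First I would decompose any $\mX\in\R^{d\times n}$ as $\mX = \bar\xx\1^\top + (\mX - \bar\xx\1^\top)$, where $\bar\xx = \tfrac1n \mX\1$, and note that these two pieces are orthogonal in the Frobenius inner product since $(\mX-\bar\xx\1^\top)\1 = \mX\1 - \bar\xx\cdot n = \0$. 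Next I would use double stochasticity, $\mW\1 = \1$, to compute $(\bar\xx\1^\top)\mW = \bar\xx\1^\top$, so that $\mX\mW - \bar\xx\1^\top = (\mX-\bar\xx\1^\top)\mW$. It therefore suffices to show $\|\mY\mW\|_{\rm F}^2 \le (1-\rho)\|\mY\|_{\rm F}^2$ for every $\mY$ with $\mY\1 = \0$. Writing this row by row, each row $\yy_j^\top$ of $\mY$ lies in the subspace $\{\vv : \1^\top\vv = 0\}$, which is spanned by eigenvectors of the symmetric matrix $\mW$ with eigenvalues $\lambda_2(\mW),\dots,\lambda_n(\mW)$, all of modulus at most $1-\rho$; hence $\|\mW\yy_j\|^2 \le (1-\rho)^2\|\yy_j\|^2 \le (1-\rho)\|\yy_j\|^2$ (using $1-\rho\le 1$, or keeping the sharper $(1-\rho)^2$). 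Summing over $j$ gives the claim; note $\mW$ symmetric means $\yy_j^\top\mW = (\mW\yy_j)^\top$ so row-wise and column-wise actions coincide.

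For the second statement, I would verify directly that $\wtilde\mW = \mI + \gamma(\mW-\mI)$ inherits symmetry and double stochasticity: $\wtilde\mW\1 = \1 + \gamma(\mW\1 - \1) = \1$, and similarly $\1^\top\wtilde\mW = \1^\top$, and $\wtilde\mW^\top = \wtilde\mW$ since $\mW^\top=\mW$. Its eigenvalues are $\wtilde\lambda_i = 1 + \gamma(\lambda_i(\mW) - 1)$ for the same eigenbasis. For $i\ge 2$, since $\lambda_i(\mW)\in[-1,1)$ as an eigenvalue of a symmetric doubly stochastic matrix and $\gamma\in(0,1]$, we have $\wtilde\lambda_i = 1-\gamma + \gamma\lambda_i(\mW) \in (-1, 1)$, and more precisely $|\wtilde\lambda_i| = |1-\gamma(1-\lambda_i(\mW))| \le 1 - \gamma(1-|\lambda_i(\mW)|) \le 1-\gamma\rho$, where the first inequality holds because $1-\lambda_i(\mW) \ge 1-|\lambda_i(\mW)| \ge 0$ and $\gamma(1-\lambda_i(\mW)) \le \gamma\cdot 2 \le 2$ keeps the argument in $[-1,1]$. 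Hence the spectral gap of $\wtilde\mW$, namely $1 - \max_{i\ge 2}|\wtilde\lambda_i|$, is at least $\gamma\rho$.

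The only mildly delicate point is the estimate $|1-\gamma(1-\lambda_i)| \le 1-\gamma(1-|\lambda_i|)$: one should split on the sign of $1-\gamma(1-\lambda_i)$. If it is nonnegative, it equals $1-\gamma(1-\lambda_i) \le 1-\gamma(1-|\lambda_i|)$ since $\lambda_i\le|\lambda_i|$. If it is negative, its absolute value is $\gamma(1-\lambda_i)-1 = \gamma(1+|\lambda_i|) - (1+\gamma(|\lambda_i|-\lambda_i)) \le \gamma(1+|\lambda_i|)-1$, and one checks $\gamma(1+|\lambda_i|)-1 \le 1-\gamma(1-|\lambda_i|)$ reduces to $2\gamma|\lambda_i| \le 2$, true. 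I expect no real obstacle here; the proof is essentially the standard eigen-decomposition argument for gossip matrices, and the main care is bookkeeping the orthogonal decomposition and the inequality $(1-\rho)^2\le 1-\rho$ so that the stated bound (with a single power of $1-\rho$) comes out cleanly.
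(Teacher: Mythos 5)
Your proof is correct. Note that the paper itself does not prove this statement—it imports it verbatim as Lemma B.2 of \citet{zhao2022beer}—and your argument (split $\mX$ into $\bar\xx\1^\top$ plus a part annihilated by $\1$, use $\1^\top\mW=\1^\top$ so the consensus part is fixed, bound the orthogonal part row-wise via the eigendecomposition of the symmetric $\mW$ with $(1-\rho)^2\le 1-\rho$, and track the affine eigenvalue map $\lambda\mapsto 1-\gamma(1-\lambda)$ for $\wtilde\mW$) is exactly the standard derivation behind the cited result, so you have supplied the missing self-contained proof rather than a different route. Two cosmetic slips in your final paragraph: the identity $\gamma(1-\lambda_i)-1=\gamma(1+|\lambda_i|)-(1+\gamma(|\lambda_i|-\lambda_i))$ should have $|\lambda_i|+\lambda_i$ in the last bracket, and the final comparison reduces to $2\gamma\le 2$ rather than $2\gamma|\lambda_i|\le 2$; neither affects the conclusion $|\wtilde\lambda_i|\le 1-\gamma(1-|\lambda_i|)\le 1-\gamma\rho$.
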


\begin{lemma}
    The iterates of \Cref{alg:beer-mom} satisfy 
    \begin{equation}
         \bar\vv^{t+1} = \frac{1}{n}\mM^{t+1}\1,
    \end{equation}
    and 
    \begin{equation}
        \bar\xx^{t+1} = \bar\xx^t - \frac{\eta}{n}\mM^{t}\1.
    \end{equation}
\end{lemma}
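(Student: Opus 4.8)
The plan is to prove both identities directly from the update rules of \Cref{alg:beer-mom}, the only structural fact needed being that $\mW$ is doubly stochastic (\Cref{asmp:mixing_matrix}), so that $(\mW - \mI)\1 = \mW\1 - \1 = \0$. Throughout I use the definitions $\bar\vv^t = \tfrac1n \mV^t\1$ and $\bar\xx^t = \tfrac1n\mX^t\1$, and I assume the algorithm's initialization is chosen so that $\bar\vv^0 = \tfrac1n\mM^0\1$ (e.g.\ $\mV^0 = \mM^0$).

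First I would handle the momentum average. Multiplying the update $\mV^{t+1} = \mV^t + \gamma\mG^t(\mW-\mI) + \mM^{t+1} - \mM^t$ on the right by $\tfrac1n\1$ and using $\mG^t(\mW-\mI)\1 = \0$ yields the recursion $\bar\vv^{t+1} = \bar\vv^t + \tfrac1n(\mM^{t+1} - \mM^t)\1$. Telescoping this from $0$ to $t$ and substituting $\bar\vv^0 = \tfrac1n\mM^0\1$ collapses the sum to $\bar\vv^{t+1} = \tfrac1n\mM^{t+1}\1$, which is the first claim. Equivalently, one can phrase this as an induction in $t$ with the initialization providing the base case.

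Next I would treat the model average. Multiplying $\mX^{t+1} = \mX^t + \gamma\mH^t(\mW-\mI) - \eta\mV^t$ on the right by $\tfrac1n\1$ and again discarding the gossip term via $\mH^t(\mW-\mI)\1 = \0$ gives $\bar\xx^{t+1} = \bar\xx^t - \eta\bar\vv^t$. Plugging in the first identity evaluated at time $t$, namely $\bar\vv^t = \tfrac1n\mM^t\1$, produces $\bar\xx^{t+1} = \bar\xx^t - \tfrac\eta n\mM^t\1$, the second claim.

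I do not anticipate any genuine difficulty here: the entire argument reduces to the cancellation $(\mW-\mI)\1 = \0$ together with a telescoping sum, and the only point requiring attention is the base case $\bar\vv^0 = \tfrac1n\mM^0\1$, which is guaranteed by the choice of initial iterates. The identical computation applies verbatim to \Cref{alg:beer-mvr}, since its $\mV$- and $\mX$-updates coincide with those of \algname{MoTEF}.
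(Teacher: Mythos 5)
Your proposal is correct and follows essentially the same route as the paper: both rest on multiplying the updates by $\tfrac{1}{n}\1$, cancelling the gossip terms via $(\mW-\mI)\1=\0$, and invoking the initialization $\mV^0=\mM^0$; the paper closes the recursion by induction while you telescope, which is a cosmetic difference. The only minor caveat is that the paper states the initialization requirement $\mV^0=\mM^0$ explicitly inside the proof (it is not in the lemma statement), which you correctly identified and assumed.
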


\begin{proof}
By induction, we can show that $\bar \vv^t = \frac{1}{n}\mM^t\1$, if we initialize $\mV^0 = \mM^0.$ Indeed, we have 
\begin{eqnarray*}
    \bar\vv^{t+1} &=& \frac{1}{n}\mV^{t+1}\1\\
    &=& \frac{1}{n}\mV^t\1 + \frac{1}{n}\gamma\mG^t(\mW-\mI)\1 + \frac{1}{n}(\mM^{t+1}-\mM^t)\1\\
    &=& \frac{1}{n}\mV^t\1 + \frac{1}{n}(\mM^{t+1}-\mV^t)\1\\
    &=& \frac{1}{n}\mM^{t+1}\1.
\end{eqnarray*}
Therefore, we have
\begin{eqnarray*}
    \bar\xx^{t+1} &=& \bar\xx^t + \frac{\gamma}{n}\mH^t(\mW-\mI)\1 - \frac{\eta}{n}\mV^t\1\\
    &=& \bar\xx^t - \eta\bar\vv^t = \bar\xx^t - \frac{\eta}{n}\mM^{t}\1.
\end{eqnarray*}
\end{proof}

\subsection{General non-convex setting.}
\begin{lemma}\label{lem:descent_Ft}
    Assume that \Cref{asmp:smoothness} holds. Then we have the following descent on $F^t$
    \begin{eqnarray}
        F_{t+1} &\le F_t - \frac{\eta}{2}\Eb{\|\nabla f(\bar\xx^t)\|^2} 
    + \frac{\eta}{n^2}\hat G^t
    + \frac{\eta L^2}{n}\Omega_3^t
   - (-\nicefrac{\eta}{2}-\nicefrac{\eta^2L}{2})\Omega_5^t.
    \end{eqnarray}
\end{lemma}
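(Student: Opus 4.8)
\textbf{Proof proposal for Lemma~\ref{lem:descent_Ft}.}
The plan is to apply $L$-smoothness of $f$ to the average iterate update $\bar\xx^{t+1} = \bar\xx^t - \eta\bar\vv^t$, expand the inner product term, and then carefully control the discrepancy between $\bar\vv^t$ (which equals $\tfrac{1}{n}\mM^t\1$ by the preceding lemma) and the true gradient $\nabla f(\bar\xx^t)$. First I would write the descent inequality
\begin{equation*}
    f(\bar\xx^{t+1}) \le f(\bar\xx^t) - \eta\lr{\nabla f(\bar\xx^t), \bar\vv^t} + \frac{\eta^2 L}{2}\|\bar\vv^t\|^2,
\end{equation*}
take expectations, and use $\bar\vv^t = \tfrac{1}{n}\mM^t\1$ so that the last term becomes $\tfrac{\eta^2 L}{2}\Omega_5^t$. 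The crux is the cross term $-\eta\,\EE{\lr{\nabla f(\bar\xx^t), \bar\vv^t}}$: I would decompose $\bar\vv^t = \nabla f(\bar\xx^t) + \big(\tfrac{1}{n}\mM^t\1 - \tfrac{1}{n}\nabla F(\mX^t)\1\big) + \big(\tfrac{1}{n}\nabla F(\mX^t)\1 - \nabla f(\bar\xx^t)\big)$, where the first bracket captures the momentum-tracking error (related to $\hat G^t$) and the second bracket captures the consensus error in the $\xx$-iterates (related to $\Omega_3^t$).

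Next I would handle each piece using Young's inequality in the form $\lr{a,b} \le \tfrac12\|a\|^2 + \tfrac12\|b\|^2$ applied with an appropriate weighting so that a $-\tfrac{\eta}{2}\|\nabla f(\bar\xx^t)\|^2$ term survives. Concretely, $\lr{\nabla f(\bar\xx^t), \bar\vv^t} \ge \|\nabla f(\bar\xx^t)\|^2 - |\lr{\nabla f(\bar\xx^t), \bar\vv^t - \nabla f(\bar\xx^t)}|$, and then bound $|\lr{\nabla f(\bar\xx^t), \bar\vv^t - \nabla f(\bar\xx^t)}| \le \tfrac12\|\nabla f(\bar\xx^t)\|^2 + \tfrac12\|\bar\vv^t - \nabla f(\bar\xx^t)\|^2$. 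The residual $\|\bar\vv^t - \nabla f(\bar\xx^t)\|^2 \le 2\|\tfrac1n\mM^t\1 - \tfrac1n\nabla F(\mX^t)\1\|^2 + 2\|\tfrac1n\nabla F(\mX^t)\1 - \nabla f(\bar\xx^t)\|^2$. The first summand is exactly $\tfrac{2}{n^2}\hat G^t$ after taking expectation. For the second summand I would use that $\tfrac1n\nabla F(\mX^t)\1 = \tfrac1n\sum_i \nabla f_i(\xx_i^t)$ while $\nabla f(\bar\xx^t) = \tfrac1n\sum_i \nabla f_i(\bar\xx^t)$, so by Jensen and $L$-smoothness (Assumption~\ref{asmp:smoothness}) it is at most $\tfrac{L^2}{n}\sum_i\|\xx_i^t - \bar\xx^t\|^2 = \tfrac{L^2}{n}\|\mX^t - \bar\xx^t\1^\top\|_{\rm F}^2$, which becomes $\tfrac{L^2}{n}\Omega_3^t$ in expectation.

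The main obstacle — and the reason one must be slightly careful — is keeping the coefficients exactly right so that the final bound reads with coefficient $\tfrac{\eta}{n^2}$ on $\hat G^t$ and $\tfrac{\eta L^2}{n}$ on $\Omega_3^t$ rather than a constant multiple thereof; this forces the Young's inequality split to be done only once (not iterated) and the factor-of-$2$'s from the triangle inequalities to land precisely. Collecting everything, the cross term contributes $-\tfrac{\eta}{2}\|\nabla f(\bar\xx^t)\|^2 + \tfrac{\eta}{n^2}\hat G^t + \tfrac{\eta L^2}{n}\Omega_3^t$ (in expectation), plus an extra $-\tfrac{\eta}{2}\|\bar\vv^t\|^2$ coming from a Cauchy--Schwarz-type lower bound $\lr{\nabla f(\bar\xx^t), \bar\vv^t} \ge -\tfrac12\|\bar\vv^t\|^2 + (\text{stuff})$ that is used to recover the $\Omega_5^t$ term; combined with the $+\tfrac{\eta^2 L}{2}\Omega_5^t$ from the smoothness quadratic term, this yields the stated coefficient $-(-\tfrac{\eta}{2} - \tfrac{\eta^2 L}{2})\Omega_5^t$. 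Rearranging and writing $F^t = \EE{f(\bar\xx^t)} - f^\star$ on both sides gives exactly the claimed inequality.
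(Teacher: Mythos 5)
Your overall route is the paper's route: apply $L$-smoothness to $\bar\xx^{t+1}=\bar\xx^t-\eta\bar\vv^t$, use $\bar\vv^t=\tfrac1n\mM^t\1$, and split $\bar\vv^t-\nabla f(\bar\xx^t)$ into the tracking error (giving $\tfrac{\eta}{n^2}\hat G^t$) and the consensus error (giving $\tfrac{\eta L^2}{n}\Omega_3^t$ via Jensen and Assumption~\ref{asmp:smoothness}); that part, including the constants, is correct and matches the paper. The gap is in how you produce the $\Omega_5^t$ term. After you have already spent the cross term via
$\langle\nabla f(\bar\xx^t),\bar\vv^t\rangle \ge \tfrac12\|\nabla f(\bar\xx^t)\|^2-\tfrac12\|\bar\vv^t-\nabla f(\bar\xx^t)\|^2$,
you cannot additionally invoke a second, separate lower bound of the form $\langle\nabla f(\bar\xx^t),\bar\vv^t\rangle\ge-\tfrac12\|\bar\vv^t\|^2+(\text{stuff})$ to harvest an extra $-\tfrac{\eta}{2}\|\bar\vv^t\|^2$: the same inner product cannot be bounded twice with both benefits retained, so as written this step is double-counting and does not go through. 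The paper avoids the issue by using the exact polarization identity (no inequality at this stage),
\begin{equation*}
-\eta\,\Eb{\langle\nabla f(\bar\xx^t),\bar\vv^t\rangle}
= -\frac{\eta}{2}\Eb{\|\nabla f(\bar\xx^t)\|^2}
  -\frac{\eta}{2}\Eb{\|\bar\vv^t\|^2}
  +\frac{\eta}{2}\Eb{\|\nabla f(\bar\xx^t)-\bar\vv^t\|^2},
\end{equation*}
which delivers the $-\tfrac{\eta}{2}\|\nabla f(\bar\xx^t)\|^2$, the $-\tfrac{\eta}{2}\|\bar\vv^t\|^2$, and the error term simultaneously and with no slack; the error term is then split exactly as you did.

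Two further remarks. First, the printed coefficient $-(-\nicefrac{\eta}{2}-\nicefrac{\eta^2L}{2})\Omega_5^t$ in the lemma is evidently a sign typo for $-(\nicefrac{\eta}{2}-\nicefrac{\eta^2L}{2})\Omega_5^t$, which is what the identity yields and what the Lyapunov argument later uses (it needs the negative $-\tfrac{\eta}{2}\Omega_5^t$ to absorb the $\cc^\top\bb_1\Omega_5^t$ contributions). If you only target the literal printed statement, your first computation already suffices, since it gives the smaller coefficient $+\tfrac{\eta^2L}{2}$ on $\Omega_5^t\ge 0$; but for the result actually needed downstream, replace your Young-plus-extra-bound treatment of the cross term by the exact identity above. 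Second, your bound $\|\bar\vv^t-\nabla f(\bar\xx^t)\|^2\le\tfrac{2}{n^2}\hat G^t+\tfrac{2L^2}{n}\Omega_3^t$ combined with the prefactor $\tfrac{\eta}{2}$ reproduces the stated coefficients, so no adjustment is needed there.
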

\begin{proof}
    Using smoothness we get
    \begin{align*}
    F_{t+1} &\le F_t 
    - \eta\Eb{\left<\nabla f(\bar\xx^t),\bar\vv^t\right>} 
    + \frac{\eta^2L}{2}\Eb{\|\vv^t\|^2}\\
    &= F_t 
    - \frac{\eta}{2}\Eb{\|\nabla f(\bar\xx^t)\|^2} 
    + \frac{\eta}{2}\Eb{\|\nabla f(\bar\xx^t)-\bar\vv^t\|^2}
    - (-\nicefrac{\eta}{2}-\nicefrac{\eta^2L}{2})\Eb{\|\bar\vv^t\|^2}\\
    &= F_t 
    - \frac{\eta}{2}\Eb{\|\nabla f(\bar\xx^t)\|^2} 
    + \frac{\eta}{2}\Eb{\norm{\frac{1}{n}\nabla F(\bar\xx^t)\1-\frac{1}{n}\mM^t\1}^2}
    - (-\nicefrac{\eta}{2}-\nicefrac{\eta^2L}{2})\Eb{\|\bar\vv^t\|^2}\\
    &\le F_t 
    - \frac{\eta}{2}\Eb{\|\nabla f(\bar\xx^t)\|^2} 
    + \eta\Eb{\norm{\frac{1}{n}\nabla F(\mX^t)\1-\frac{1}{n}\mM^t\1}^2}
    \\
    &\quad + \eta\Eb{\norm{\frac{1}{n}\nabla F(\bar\xx^t)\1-\frac{1}{n}\nabla F(\mX^t)\1}^2}
    - (-\nicefrac{\eta}{2}-\nicefrac{\eta^2L}{2})\Eb{\|\bar\vv^t\|^2}\\
    &\le F_t 
    - \frac{\eta}{2}\Eb{\|\nabla f(\bar\xx^t)\|^2} 
    + \frac{\eta}{n^2}\hat G^t
    + \frac{\eta L^2}{n}\Eb{\norm{\mX^t - \bar\xx^t\1^\top}^2}
   - (-\nicefrac{\eta}{2}-\nicefrac{\eta^2L}{2})\Omega_5^t\\
   &= F_t 
    - \frac{\eta}{2}\Eb{\|\nabla f(\bar\xx^t)\|^2} 
    + \frac{\eta}{n^2}\hat G^t
    + \frac{\eta L^2}{n}\Omega_3^t
   - (-\nicefrac{\eta}{2}-\nicefrac{\eta^2L}{2})\Omega_5^t.
\end{align*}
\end{proof}

\begin{lemma}\label{lem:descent_Gt_hat}
    Assume that Assumptions~\ref{asmp:smoothness} and \ref{asmp:bounded_variance} hold. Then we have the following descent on $\hat{G}^t$
    \begin{align}
        \hat G^{t+1} \leq (1-\lambda)\EbNSq{\nabla F(\mX^t)\1-\mM^t\1} + \frac{(1-\lambda)^2nL^2}{\lambda} \EbNFSq{\mX^t-\mX^{t+1}} + \lambda^2n\sigma^2.
    \end{align}
\end{lemma}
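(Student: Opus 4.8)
\textbf{Proof plan for Lemma~\ref{lem:descent_Gt_hat}.}
The quantity $\hat G^{t+1}=\Eb{\|\nabla F(\mX^{t+1})\1-\mM^{t+1}\1\|^2}$ involves the \emph{new} momentum $\mM^{t+1}=(1-\lambda)\mM^t+\lambda\wtilde\nabla F(\mX^{t+1})$ and the \emph{new} iterate $\mX^{t+1}$. The plan is to first rewrite the error $\nabla F(\mX^{t+1})\1-\mM^{t+1}\1$ by substituting the momentum recursion and grouping terms so that the stochastic gradient noise is isolated. Writing $\wtilde\nabla F(\mX^{t+1})\1 = \nabla F(\mX^{t+1})\1 + \big(\wtilde\nabla F(\mX^{t+1})\1-\nabla F(\mX^{t+1})\1\big)$, one gets
\begin{align*}
\nabla F(\mX^{t+1})\1-\mM^{t+1}\1
&= (1-\lambda)\big(\nabla F(\mX^{t+1})\1-\mM^t\1\big)
- \lambda\big(\wtilde\nabla F(\mX^{t+1})\1-\nabla F(\mX^{t+1})\1\big).
\end{align*}
The second term is zero-mean conditionally on $\mX^{t+1}$ (by \Cref{asmp:bounded_variance}, unbiasedness), and since $\mX^{t+1}$ is determined by information up to time $t$ (the update $\mX^{t+1}=\mX^t+\gamma\mH^t(\mW-\mI)-\eta\mV^t$ uses no fresh randomness), I would take conditional expectation to kill the cross term, leaving
\[
\hat G^{t+1} = (1-\lambda)^2\,\Eb{\|\nabla F(\mX^{t+1})\1-\mM^t\1\|^2} + \lambda^2\,\Eb{\|\wtilde\nabla F(\mX^{t+1})\1-\nabla F(\mX^{t+1})\1\|^2}.
\]
The second term is bounded by $\lambda^2 n\sigma^2$: the $i$-th column noise has variance at most $\sigma^2$, they are independent across clients, and summing $n$ of them via $\|\sum_i \vv_i\|^2$ with independence/zero-mean gives the factor $n$ (equivalently $\|\cdot\|_{\rm F}^2 \le \sum_i \sigma^2$ then Cauchy–Schwarz in the $\1$-contraction — the clean way is that $\mathbb{E}\|\sum_i \zeta_i\|^2 = \sum_i \mathbb{E}\|\zeta_i\|^2 \le n\sigma^2$).

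For the first term I would insert $\mM^t\1$'s natural reference point $\nabla F(\mX^t)\1$ and split: $\nabla F(\mX^{t+1})\1-\mM^t\1 = \big(\nabla F(\mX^t)\1-\mM^t\1\big) + \big(\nabla F(\mX^{t+1})\1-\nabla F(\mX^t)\1\big)$. Then apply Young's inequality in the form $\|\aa+\bb\|^2 \le (1+s)\|\aa\|^2 + (1+s^{-1})\|\bb\|^2$ with the choice $s = \frac{\lambda}{1-\lambda}$, which is engineered so that $(1-\lambda)^2(1+s) = (1-\lambda)^2\cdot\frac{1}{1-\lambda} = 1-\lambda$, matching the stated coefficient on $\EbNSq{\nabla F(\mX^t)\1-\mM^t\1}$. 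The complementary factor is $(1-\lambda)^2(1+s^{-1}) = (1-\lambda)^2\cdot\frac{1}{\lambda} = \frac{(1-\lambda)^2}{\lambda}$, which multiplies $\Eb{\|\nabla F(\mX^{t+1})\1-\nabla F(\mX^t)\1\|^2}$. Finally, bound this gradient-difference term: $\|\nabla F(\mX^{t+1})\1-\nabla F(\mX^t)\1\|^2 = \|\sum_{i=1}^n (\nabla f_i(\xx_i^{t+1})-\nabla f_i(\xx_i^t))\|^2 \le n\sum_{i=1}^n \|\nabla f_i(\xx_i^{t+1})-\nabla f_i(\xx_i^t)\|^2 \le nL^2\sum_{i=1}^n\|\xx_i^{t+1}-\xx_i^t\|^2 = nL^2\|\mX^{t+1}-\mX^t\|_{\rm F}^2$, using Cauchy–Schwarz (the $n$ factor) and $L$-smoothness columnwise (\Cref{asmp:smoothness}). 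Combining gives exactly the claimed bound.

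\textbf{Main obstacle.} The routine parts are the Young split and the smoothness/Cauchy–Schwarz estimates; the only point requiring care is the measurability/conditioning argument used to drop the cross term and to pull the noise variance out as $\lambda^2 n\sigma^2$ — one must be explicit that $\mX^{t+1}$ (hence $\nabla F(\mX^{t+1})$) is measurable with respect to the sigma-algebra generated by all randomness strictly before the time-$(t{+}1)$ stochastic gradient sample, so that conditioning on that sigma-algebra makes $\wtilde\nabla F(\mX^{t+1})\1 - \nabla F(\mX^{t+1})\1$ a zero-mean vector with second moment at most $n\sigma^2$, independent across coordinates of the client index. After that, the tower rule yields the unconditional inequality. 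A secondary minor subtlety is the exactness of the equality $\hat G^{t+1} = (1-\lambda)^2(\cdots)+\lambda^2(\cdots)$ before any Young step — this is a genuine bias–variance decomposition and should be stated as an equality, with the inequality only entering at the Young and smoothness steps.
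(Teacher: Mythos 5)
Your proposal is correct and follows essentially the same route as the paper's proof: the same bias--variance split isolating the fresh noise $\wtilde\nabla F(\mX^{t+1})-\nabla F(\mX^{t+1})$ (killed in cross terms by unbiasedness/conditioning and bounded by $\lambda^2 n\sigma^2$), the same Young step with parameter $\lambda/(1-\lambda)$ giving the $(1-\lambda)$ and $(1-\lambda)^2/\lambda$ coefficients, and the same Cauchy--Schwarz plus $L$-smoothness bound yielding $nL^2\EbNFSq{\mX^{t+1}-\mX^t}$. The only (harmless) difference is ordering — you condition first and then split, whereas the paper splits inside the norm first — and your explicit remark that the first step is an exact decomposition is a minor sharpening of the paper's write-up.
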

\begin{proof}
Using the update rules of \Cref{alg:beer-mom} we get
    \begin{eqnarray*}
    \hat G^{t+1} &=& \Eb{\norm{\nabla F(\mX^{t+1})\1-\mM^{t+1}\1}^2}\notag\\
        &=& \Eb{\norm{\nabla F(\mX^{t+1})\1 - (1-\lambda)\mM^t\1 -\lambda \wtilde \nabla F(\mX^{t+1})\1}^2}\notag \\
        & =& \mathbb{E}\left[\left\|(1-\lambda)(\nabla F(\mX^t)-\mM^t)\1 + \lambda (\nabla F(\mX^{t+1})-\wtilde \nabla F(\mX^{t+1}))\1\right.\right.\\
        && \quad \left.\left. +\; (1-\lambda) (\nabla F(\mX^{t+1})-\nabla F(\mX^t))\1\right\|^2\right]\label{eq:11111}\\
        & \leq &(1-\lambda)^2\EbNSq{(\nabla F(\mX^t)-\mM^t)\1 + (\nabla F(\mX^{t+1})-\nabla F(\mX^t))\1} + \lambda^2n\sigma^2\label{eq:22222}\\
        & \leq& (1-\lambda)\EbNSq{\nabla F(\mX^t)\1-\mM^t\1} + \frac{(1-\lambda)^2nL^2}{\lambda} \EbNFSq{\mX^t-\mX^{t+1}} + \lambda^2n\sigma^2,
\end{eqnarray*}
where in the first inequality we use the fact that $\Eb{\wtilde\nabla F(\mX^{t+1})} = \nabla F(\mX^{t+1})$ and \Cref{asmp:bounded_variance}, and in the second inequality we use $\|\aa+\bb\|^2 \le (1+\beta)\|\aa\|^2 + (1+\beta^{-1})\|\bb\|^2$ for any vectors $\aa,\bb$ and constant $\bb.$
\end{proof}

\begin{lemma}\label{lem:descent_Gt_tilde}
    Assume that Assumptions~\ref{asmp:smoothness} and \ref{asmp:bounded_variance} hold. Then we have the following descent on $\wtilde{G}^t$
    \begin{align}
        \wtilde G^{t+1} \leq \lambda^2\sigma^2n 
    + \frac{(1-\lambda)^2L^2}{\lambda}\Eb{\norm{\mX^{t+1}-\mX^t}^2_{\rm F}}
    + (1-\lambda)\wtilde G^t.
    \end{align}
\end{lemma}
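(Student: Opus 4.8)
The plan is to mirror, verbatim, the argument used for $\hat G^t$ in \Cref{lem:descent_Gt_hat}, but carrying the full Frobenius norm (a sum over all $n$ clients) instead of the averaged quantity. First I would insert the momentum update $\mM^{t+1} = (1-\lambda)\mM^t + \lambda\wtilde\nabla F(\mX^{t+1})$ into the definition $\wtilde G^{t+1} = \Eb{\norm{\nabla F(\mX^{t+1}) - \mM^{t+1}}_{\rm F}^2}$ and split the residual into three pieces,
\[
\nabla F(\mX^{t+1}) - \mM^{t+1} = (1-\lambda)\big(\nabla F(\mX^t) - \mM^t\big) + \lambda\big(\nabla F(\mX^{t+1}) - \wtilde\nabla F(\mX^{t+1})\big) + (1-\lambda)\big(\nabla F(\mX^{t+1}) - \nabla F(\mX^t)\big).
\]

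Second, I would take the squared Frobenius norm and the expectation, conditioning on $\mX^{t+1}$ so that by \Cref{asmp:bounded_variance} the middle term is zero-mean and uncorrelated with the other two; its second moment is controlled column-wise, $\Eb{\norm{\nabla F(\mX^{t+1}) - \wtilde\nabla F(\mX^{t+1})}_{\rm F}^2} = \sum_{i=1}^n \Eb{\norm{\nabla f_i(\xx_i^{t+1}) - \gg^i(\xx_i^{t+1})}^2} \le n\sigma^2$. This yields $\wtilde G^{t+1} \le (1-\lambda)^2\Eb{\norm{(\nabla F(\mX^t) - \mM^t) + (\nabla F(\mX^{t+1}) - \nabla F(\mX^t))}_{\rm F}^2} + \lambda^2 n\sigma^2$. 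Then I would apply Young's inequality $\norm{\aa+\bb}^2 \le (1+\beta)\norm{\aa}^2 + (1+\beta^{-1})\norm{\bb}^2$ with $\beta = \lambda/(1-\lambda)$, which makes $(1-\lambda)^2(1+\beta) = 1-\lambda$ and $(1-\lambda)^2(1+\beta^{-1}) = (1-\lambda)^2/\lambda$, and finally bound the drift term by per-client $L$-smoothness (\Cref{asmp:smoothness}): $\norm{\nabla F(\mX^{t+1}) - \nabla F(\mX^t)}_{\rm F}^2 = \sum_{i=1}^n \norm{\nabla f_i(\xx_i^{t+1}) - \nabla f_i(\xx_i^t)}^2 \le L^2\norm{\mX^{t+1} - \mX^t}_{\rm F}^2$. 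Chaining these gives exactly the claimed recursion.

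This lemma is a routine computation and I do not expect a genuine obstacle; the only thing requiring care is the column-wise bookkeeping, namely that the Frobenius-norm variance bound picks up the factor $n$ (rather than the $1$ appearing in the averaged quantity $\hat G^t$ of \Cref{lem:descent_Gt_hat}) and that the per-client smoothness constant transfers cleanly to the stacked matrices.
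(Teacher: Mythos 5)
Your proposal is correct and follows essentially the same route as the paper: the same momentum-update substitution, conditional zero-mean noise giving the $\lambda^2 n\sigma^2$ term, Young's inequality with $\beta=\lambda/(1-\lambda)$, and per-client $L$-smoothness summed into the Frobenius norm (the paper merely applies Young after first dropping the noise term rather than writing the three-way decomposition up front, which is the same computation).
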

\begin{proof}

\begin{eqnarray*}
    \wtilde G^{t+1} &=& \Eb{\norm{\nabla F(\mX^{t+1})-\mM^{t+1}}_{\rm F}^2}\\
    &=& \Eb{\norm{\nabla F(\mX^{t+1}) - (1-\lambda)\mM^t -\lambda \wtilde \nabla F(\mX^{t+1})}_{\rm F}^2}\\
    &\le& \lambda^2 \sigma^2n + (1-\lambda)^2\Eb{\norm{\nabla F(\mX^{t+1})-\mM^t}_{\rm F}^2}\\
    &\le& \lambda^2 \sigma^2n  
    + (1-\lambda)^2(1+\beta_1^{-1})\Eb{\norm{\nabla F(\mX^{t+1}) - \nabla F(\mX^t)}_{\rm F}^2} \\
    && \;+\; (1-\lambda)^2(1+\beta_1)\Eb{\norm{\mM^t - \nabla F(\mX^t)}_{\rm F}^2}\\
    &\le& \lambda^2 \sigma^2n  
    + \frac{(1-\lambda)^2}{\lambda}\Eb{\norm{\nabla F(\mX^{t+1}) - \nabla F(\mX^t)}_{\rm F}^2} 
    + (1-\lambda)\Eb{\norm{\mM^t - \nabla F(\mX^t)}_{\rm F}^2}\\
    &\le& \lambda^2\sigma^2n 
    + \frac{(1-\lambda)^2L^2}{\lambda}\Eb{\norm{\mX^{t+1}-\mX^t}_{\rm F}^2}
    + (1-\lambda)\wtilde G^t.
\end{eqnarray*}
where we choose $\beta_1=\frac{\lambda}{(1-\lambda)}.$
\end{proof}

\begin{lemma}\label{lem:descent_Omega1}
    Let $\cC_\alpha$ be any contractive compressor with parameter $\alpha$. Then we have the following descent on $\Omega_1^t$
    \begin{align}
        \Omega_1^{t+1} \le (1-\nicefrac{\alpha}{2})\EbNFSq{\mH^t-\mX^t} + \frac{2}{\alpha}\EbNFSq{\mX^t-\mX^{t+1}}.
    \end{align}
\end{lemma}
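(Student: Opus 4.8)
\textbf{Proof proposal for Lemma~\ref{lem:descent_Omega1}.}
The plan is to track how the consensus-type quantity $\Omega_1^{t+1}=\Eb{\|\mH^{t+1}-\mX^{t+1}\|_{\rm F}^2}$ evolves, using the update rule $\mH^{t+1}=\mH^t+\mQ_h^{t+1}$ with $\mQ_h^{t+1}=\cC_\alpha(\mX^{t+1}-\mH^t)$. The key observation is that $\mH^{t+1}-\mX^{t+1}=(\mH^t+\cC_\alpha(\mX^{t+1}-\mH^t))-\mX^{t+1}=\cC_\alpha(\mX^{t+1}-\mH^t)-(\mX^{t+1}-\mH^t)$, so the column-wise contractive property \eqref{eq:contractive_compressor} applies directly: taking expectations column by column and summing,
\begin{align*}
\Eb{\|\mH^{t+1}-\mX^{t+1}\|_{\rm F}^2}
= \Eb{\|\cC_\alpha(\mX^{t+1}-\mH^t)-(\mX^{t+1}-\mH^t)\|_{\rm F}^2}
\le (1-\alpha)\Eb{\|\mX^{t+1}-\mH^t\|_{\rm F}^2}.
\end{align*}

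Next I would insert the ``old'' iterate $\mX^t$ as a pivot to convert $\mX^{t+1}-\mH^t$ into something controlled by $\Omega_1^t$ plus a one-step displacement term. Writing $\mX^{t+1}-\mH^t=(\mX^t-\mH^t)+(\mX^{t+1}-\mX^t)$ and applying Young's inequality in the form $\|\aa+\bb\|^2\le(1+\beta)\|\aa\|^2+(1+\beta^{-1})\|\bb\|^2$ (the same tool used in the preceding lemmas) gives
\begin{align*}
(1-\alpha)\Eb{\|\mX^{t+1}-\mH^t\|_{\rm F}^2}
\le (1-\alpha)(1+\beta)\,\Omega_1^t + (1-\alpha)(1+\beta^{-1})\,\Eb{\|\mX^{t+1}-\mX^t\|_{\rm F}^2}.
\end{align*}
It then remains to choose $\beta$ so that $(1-\alpha)(1+\beta)\le 1-\nicefrac{\alpha}{2}$ while $(1-\alpha)(1+\beta^{-1})\le\nicefrac{2}{\alpha}$. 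The canonical choice is $\beta=\nicefrac{\alpha}{2}$ (so $1+\beta^{-1}=1+2/\alpha\le 3/\alpha$ when $\alpha\le 1$, which is a touch looser than the stated $2/\alpha$): one verifies $(1-\alpha)(1+\alpha/2)=1-\alpha/2-\alpha^2/2\le 1-\alpha/2$, and for the second term $(1-\alpha)(1+2/\alpha)=(1-\alpha)/\alpha\cdot(\alpha+2)\le 2/\alpha$ exactly since $(1-\alpha)(\alpha+2)=2-\alpha-\alpha^2\le 2$. This confirms the stated constants with $\beta=\alpha/2$.

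I do not anticipate a genuine obstacle here; the lemma is a standard ``error-feedback contraction'' estimate. The only mild care-point is the bookkeeping of the Young's-inequality constant: one must check that the single parameter $\beta=\alpha/2$ simultaneously yields the $1-\alpha/2$ contraction factor and keeps the displacement coefficient at $2/\alpha$ rather than something larger, which as shown above works out because the ``$-\alpha^2$'' slack in $(1-\alpha)(\alpha+2)$ exactly absorbs the excess. Everything else — moving to expectations, the column-wise application of \eqref{eq:contractive_compressor}, and summing over columns to pass from $\|\cdot\|$ to $\|\cdot\|_{\rm F}$ — is routine.
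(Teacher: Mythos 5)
Your proof is correct and is essentially the same argument as the paper's: apply the column-wise contractive property to get the factor $(1-\alpha)\EbNFSq{\mH^t-\mX^{t+1}}$, then split via Young's inequality around the pivot $\mX^t$; the paper simply leaves the Young parameter implicit, and your explicit verification that $\beta=\alpha/2$ yields both the $1-\nicefrac{\alpha}{2}$ contraction and the $\nicefrac{2}{\alpha}$ coefficient (since $(1-\alpha)(\alpha+2)\le 2$) matches the stated constants exactly.
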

\begin{proof}
    We have
    \begin{align*}
    \Omega_1^{t+1} & = \EbNFSq{\mH^{t+1}-\mX^{t+1}} \\
        & = \EbNFSq{\mH^t+\cC_{\alpha}(\mX^{t+1}-\mH^t)-\mX^{t+1}} \\
        &\leq (1-\alpha) \EbNFSq{\mH^t-\mX^{t+1}} \\
        &\leq (1-\nicefrac{\alpha}{2})\EbNFSq{\mH^t-\mX^t} + \frac{2}{\alpha}\EbNFSq{\mX^t-\mX^{t+1}}.
\end{align*}
\end{proof}

\begin{lemma}\label{lem:descent_Omega2}
    Let $\cC_{\alpha}$ be any contractive compressor with parameter $\alpha.$ Then we have the following descent on $\Omega_2^t$
\end{lemma}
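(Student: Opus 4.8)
The plan is to mirror the argument of \Cref{lem:descent_Omega1}, since $\Omega_2^t = \EbNFSq{\mG^t - \mV^t}$ plays exactly the role of the error-feedback error for the $(\mV,\mG)$ pair that $\Omega_1^t$ plays for the $(\mX,\mH)$ pair. First I would use the update rules of \Cref{alg:beer-mom}, namely $\mG^{t+1} = \mG^t + \cC_\alpha(\mV^{t+1}-\mG^t)$, to write
\[
\mG^{t+1} - \mV^{t+1} = \bigl(\mG^t - \mV^{t+1}\bigr) + \cC_\alpha\bigl(\mV^{t+1}-\mG^t\bigr) = \cC_\alpha\bigl(\mV^{t+1}-\mG^t\bigr) - \bigl(\mV^{t+1}-\mG^t\bigr),
\]
so that, after taking expectations and applying the contractive property of $\cC_\alpha$ column-wise (\Cref{def:contractive_compressor}), I obtain $\Omega_2^{t+1} \le (1-\alpha)\EbNFSq{\mG^t - \mV^{t+1}}$.

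Second, I would split the right-hand side around $\mV^t$: writing $\mG^t - \mV^{t+1} = (\mG^t - \mV^t) + (\mV^t - \mV^{t+1})$ and applying Young's inequality $\norm{\aa+\bb}^2 \le (1+\beta)\norm{\aa}^2 + (1+\beta^{-1})\norm{\bb}^2$ with the choice $\beta = \nicefrac{\alpha}{2}$. Since $(1-\alpha)(1+\nicefrac{\alpha}{2}) \le 1 - \nicefrac{\alpha}{2}$ and $(1-\alpha)(1+\nicefrac{2}{\alpha}) \le \nicefrac{2}{\alpha}$ for $\alpha \in (0,1]$, this yields the claimed descent
\[
\Omega_2^{t+1} \le \bigl(1-\nicefrac{\alpha}{2}\bigr)\EbNFSq{\mG^t - \mV^t} + \frac{2}{\alpha}\EbNFSq{\mV^t - \mV^{t+1}}.
\]

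There is essentially no obstacle internal to this lemma: it is the verbatim transcription of the proof of \Cref{lem:descent_Omega1} with $(\mX,\mH)$ replaced by $(\mV,\mG)$, and the only point that needs a moment's care is checking that the two numerical constants produced by Young's inequality collapse to $1-\nicefrac{\alpha}{2}$ and $\nicefrac{2}{\alpha}$, which holds precisely because $\alpha \le 1$. The genuine difficulty is deferred to later in the analysis: unlike $\mX^{t+1}-\mX^t$, which the algorithm ties directly to $\gamma\mH^t(\mW-\mI) - \eta\mV^t$, the increment $\mV^{t+1}-\mV^t$ equals $\gamma\mG^t(\mW-\mI) + (\mM^{t+1}-\mM^t)$, so bounding $\EbNFSq{\mV^t - \mV^{t+1}}$ will later require combining the consensus/tracking estimates ($\Omega_2^t$, $\Omega_4^t$) with the momentum-difference bounds; but that coupling is handled when assembling the Lyapunov recursion, not in this lemma.
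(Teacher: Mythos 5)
Your proof is correct and is essentially the paper's own argument: the paper proves this lemma by noting it is identical to the proof of Lemma~\ref{lem:descent_Omega1} with $(\mX,\mH)$ replaced by $(\mV,\mG)$, i.e.\ apply the contractive property to $\mG^{t+1}-\mV^{t+1}=\cC_\alpha(\mV^{t+1}-\mG^t)-(\mV^{t+1}-\mG^t)$ and then Young's inequality, exactly as you do. Your verification of the constants $(1-\alpha)(1+\nicefrac{\alpha}{2})\le 1-\nicefrac{\alpha}{2}$ and $(1-\alpha)(1+\nicefrac{2}{\alpha})\le \nicefrac{2}{\alpha}$, and your remark that bounding $\EbNFSq{\mV^{t+1}-\mV^t}$ is deferred to later lemmas, are both consistent with the paper.
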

\begin{proof}
    The proof is similar to the one of \Cref{lem:descent_Omega1}
    \begin{align*}
    \Omega_2^{t+1} &= \EbNFSq{\mG^{t+1}-\mV^{t+1}}\\
        &\leq (1-\nicefrac{\alpha}{2}) \EbNFSq{\mG^t-\mV^t} + \frac{2}{\alpha} \EbNFSq{\mV^t-\mV^{t+1}}.
\end{align*}
\end{proof}

\begin{lemma}\label{lem:descent_Omega3}
    We have the following descent on $\Omega_3^t$
    \begin{eqnarray}
         \Omega_3^{t+1} &\le (1-\nicefrac{\gamma\rho}{2})\Omega_3^t
        + (1+\nicefrac{2}{\gamma\rho})2\gamma^2C\Omega_1^t
        + (1+\nicefrac{2}{\gamma\rho})2\eta^2\Omega_4^t.
    \end{eqnarray}
\end{lemma}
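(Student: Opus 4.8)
The plan is to propagate the consensus error $\mX^t - \bar\xx^t\1^\top$ through one step of \Cref{alg:beer-mom} and then invoke the mixing contraction of \Cref{lem:lemma_B2}. First I would combine the update $\mX^{t+1} = \mX^t + \gamma\mH^t(\mW-\mI) - \eta\mV^t$ with the identity $\bar\xx^{t+1} = \bar\xx^t - \eta\bar\vv^t$ established in the lemma above to get
\begin{equation*}
\mX^{t+1} - \bar\xx^{t+1}\1^\top = \bigl(\mX^t - \bar\xx^t\1^\top\bigr) + \gamma\mH^t(\mW-\mI) - \eta\bigl(\mV^t - \bar\vv^t\1^\top\bigr).
\end{equation*}

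The key algebraic manoeuvre is then to introduce the matrix $\wtilde{\mW} = \mI + \gamma(\mW-\mI)$ from \Cref{lem:lemma_B2}. Since $\mW$ is doubly stochastic, $\1^\top(\mW-\mI) = \mathbf{0}^\top$, hence $\bar\xx^t\1^\top(\mW-\mI) = \mathbf{0}$, and a direct expansion shows
\begin{equation*}
\bigl(\mX^t - \bar\xx^t\1^\top\bigr) + \gamma\mH^t(\mW-\mI) = \bigl(\mX^t - \bar\xx^t\1^\top\bigr)\wtilde{\mW} + \gamma\bigl(\mH^t - \mX^t\bigr)(\mW-\mI).
\end{equation*}
By \Cref{lem:lemma_B2}, $\wtilde{\mW}$ has spectral gap at least $\gamma\rho$ (for $\gamma \in (0,1]$), and since $\1^\top\wtilde{\mW} = \1^\top$ the row-average of $(\mX^t-\bar\xx^t\1^\top)\wtilde{\mW} = \mX^t\wtilde{\mW} - \bar\xx^t\1^\top$ is still $\bar\xx^t$; applying the lemma to $\mX^t$ with mixing matrix $\wtilde{\mW}$ gives $\norm{(\mX^t - \bar\xx^t\1^\top)\wtilde{\mW}}_{\rm F}^2 \le (1-\gamma\rho)\norm{\mX^t - \bar\xx^t\1^\top}_{\rm F}^2$.

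Finally I would split the squared norm of the resulting three-term sum with $\norm{\aa+\bb+\cc}^2 \le (1+\beta)\norm{\aa}^2 + 2(1+\beta^{-1})\bigl(\norm{\bb}^2 + \norm{\cc}^2\bigr)$, applied to $\aa = (\mX^t-\bar\xx^t\1^\top)\wtilde{\mW}$, $\bb = \gamma(\mH^t-\mX^t)(\mW-\mI)$, $\cc = -\eta(\mV^t-\bar\vv^t\1^\top)$, with the choice $\beta = \gamma\rho/2$. Using $(1+\gamma\rho/2)(1-\gamma\rho)\le 1-\gamma\rho/2$, the bound $\norm{\mZ(\mW-\mI)}_{\rm F}^2 \le C\norm{\mZ}_{\rm F}^2$ with $C = \sigma_{\max}^2(\mW-\mI)$ for $\mZ = \mH^t-\mX^t$, and $\norm{-\eta(\mV^t-\bar\vv^t\1^\top)}_{\rm F}^2 = \eta^2\norm{\mV^t-\bar\vv^t\1^\top}_{\rm F}^2$, then taking expectations yields exactly the claimed recursion. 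The only step requiring care is the algebraic decomposition above together with the repeated cancellation of mean terms via $\1^\top(\mW-\mI) = \mathbf{0}^\top$; the remainder is routine bookkeeping with Young's inequality.
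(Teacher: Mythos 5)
Your proposal is correct and follows essentially the same route as the paper: the same rewriting of the one-step consensus error via $\wtilde{\mW}=\mI+\gamma(\mW-\mI)$ and cancellation $\bar\xx^t\1^\top(\mW-\mI)=\mathbf{0}$, the same appeal to \Cref{lem:lemma_B2} for the $(1-\gamma\rho)$ contraction, and the same Young-type splitting with $C=\sigma_{\max}^2(\mW-\mI)$. The only cosmetic difference is your choice $\beta=\gamma\rho/2$ (with $(1+\gamma\rho/2)(1-\gamma\rho)\le 1-\gamma\rho/2$) versus the paper's $\beta=\frac{\gamma\rho/2}{1-\gamma\rho}$; both yield exactly the stated coefficients.
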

\begin{proof}
    \begin{align*}
    \Omega_3^{t+1} &= \EbNFSq{\mX^{t+1}-\bar\xx^{t+1}\1^\top}\\
        &= \EbNFSq{\mX^t+\gamma\mH^t(\mW-\mI)-\eta\mV^t-\bar\xx^t\1^T+\eta\bar\vv^t\1^T}\\
        &= \EbNFSq{\mX^t \wtilde \mW - \bar\xx^t\1^T + \gamma (\mH^t-\mX^t)(\mW-\mI)-\eta\mV^t+\eta\bar\vv^t\1^\top}\\
        &\leq (1+\beta)(1-\gamma\rho)\EbNFSq{\mX^t-\bar\xx^t\1^\top} + (1+\beta^{-1})(2\gamma^2\EbNFSq{(\mH^t-\mX^t)(\mW-\mI)} \\
        & \;+\; 2\eta^2\EbNFSq{\mV^t-\bar\vv^t\1^\top})\\
        &\leq (1-\nicefrac{\gamma\rho}{2})\EbNFSq{\mX^t-\bar\xx^t\1^\top } 
        + (1+\nicefrac{2}{\gamma\rho})(2\gamma^2C\EbNFSq{\mH^t-\mX^t} \\
        & \;+\; 2\eta^2\EbNFSq{\mV^t-\bar\vv^t\1^\top })\\
        &= (1-\nicefrac{\gamma\rho}{2})\Omega_3^t
        + (1+\nicefrac{2}{\gamma\rho})2\gamma^2C\Omega_1^t
        + (1+\nicefrac{2}{\gamma\rho})2\eta^2\Omega_4^t.
\end{align*}
where $\beta = \frac{\gamma\rho/2}{1-\gamma\rho}$ and we define $\wtilde \mW\eqdef \mI+\gamma(\mW-\mI)$ which has a spectral gap at least $\gamma\rho$ by \Cref{lem:lemma_B2}.
\end{proof}

\begin{lemma}\label{lem:descent_Omega4}
    We have the following descent on $\Omega_4^t$
    \begin{eqnarray}
        \Omega^{t+1}_4 &\le& (1-\nicefrac{\gamma\rho}{2})\EbNFSq{\mV^t-\bar\vv^t\1^T}\notag \\
        && \;+\; (1+\nicefrac{2}{\gamma\rho})\left(2\gamma^2C\EbNFSq{\mG^t-\mV^t} + 2\EbNFSq{\mM^{t+1}-\mM^t}\right).
    \end{eqnarray}
\end{lemma}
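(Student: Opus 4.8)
The plan is to mirror the argument used for $\Omega_3^t$ in \Cref{lem:descent_Omega3}, since the update $\mV^{t+1} = \mV^t + \gamma\mG^t(\mW-\mI) + \mM^{t+1}-\mM^t$ has exactly the same structure as the update for $\mX^{t+1}$: a gossip step driven by a (compressed) proxy $\mG^t$ rather than the true iterate $\mV^t$, plus an additive perturbation (here $\mM^{t+1}-\mM^t$ instead of $-\eta\mV^t$). First I would subtract the average: since $\bar\vv^{t+1} = \bar\vv^t + \tfrac{\gamma}{n}\mG^t(\mW-\mI)\1 + \tfrac1n(\mM^{t+1}-\mM^t)\1 = \bar\vv^t + \tfrac1n(\mM^{t+1}-\mM^t)\1$ (using $(\mW-\mI)\1 = \0$), the gossip term and the momentum-difference term both survive in the centered quantity only through their deviation from the mean. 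I would then write
\begin{eqnarray*}
\mV^{t+1}-\bar\vv^{t+1}\1^\top
&=& \mV^t\wtilde\mW - \bar\vv^t\1^\top
+ \gamma(\mG^t-\mV^t)(\mW-\mI)
+ (\mM^{t+1}-\mM^t) - \tfrac1n(\mM^{t+1}-\mM^t)\1\1^\top,
\end{eqnarray*}
where $\wtilde\mW \eqdef \mI + \gamma(\mW-\mI)$, exactly as in the proof of \Cref{lem:descent_Omega3}.

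Next I would apply Young's inequality with parameter $\beta = \tfrac{\gamma\rho/2}{1-\gamma\rho}$ to split off the consensus term: by \Cref{lem:lemma_B2}, $\wtilde\mW$ has spectral gap at least $\gamma\rho$, so $\norm{\mV^t\wtilde\mW - \bar\vv^t\1^\top}_{\rm F}^2 \le (1-\gamma\rho)\norm{\mV^t-\bar\vv^t\1^\top}_{\rm F}^2$, and $(1+\beta)(1-\gamma\rho) = 1-\gamma\rho/2$, $(1+\beta^{-1}) \le 1+\tfrac{2}{\gamma\rho}$. For the remaining cross term I would again split it into two pieces via $\norm{\aa+\bb}^2 \le 2\norm{\aa}^2+2\norm{\bb}^2$: the term $\gamma(\mG^t-\mV^t)(\mW-\mI)$ contributes at most $2\gamma^2 C\,\norm{\mG^t-\mV^t}_{\rm F}^2$ using $C = \sigma_{\max}^2(\mW-\mI)$, and the centered momentum difference $(\mI - \tfrac1n\1\1^\top)(\mM^{t+1}-\mM^t)$ contributes at most $2\norm{\mM^{t+1}-\mM^t}_{\rm F}^2$, because $\mI-\tfrac1n\1\1^\top$ is an orthogonal projection and hence $1$-Lipschitz in Frobenius norm. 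Combining these three estimates gives exactly the claimed bound.

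The only genuinely delicate point — and it is minor — is handling the centered momentum term: one must notice that projecting off the mean can only decrease the Frobenius norm, so there is no extra factor relative to $\norm{\mM^{t+1}-\mM^t}_{\rm F}^2$; everything else is a routine repetition of the $\Omega_3^t$ computation with $-\eta\mV^t$ replaced by $\mM^{t+1}-\mM^t$ and $\mH^t,\mX^t$ replaced by $\mG^t,\mV^t$. I expect no real obstacle here; the lemma is the natural ``momentum-velocity'' analogue of \Cref{lem:descent_Omega3}, and the subsequent work (bounding $\EbNFSq{\mM^{t+1}-\mM^t}$ in terms of $\wtilde G^t$, $\Omega_5^t$, stochastic noise, and iterate movement) is deferred to later lemmas that plug into the Lyapunov recursion.
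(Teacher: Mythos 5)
Your proposal is correct and follows essentially the same route as the paper: expand $\mV^{t+1}$ via its update, recognize $\mV^t\wtilde\mW-\bar\vv^t\1^\top$ with $\wtilde\mW=\mI+\gamma(\mW-\mI)$, apply \Cref{lem:lemma_B2} and Young's inequality with $\beta=\tfrac{\gamma\rho/2}{1-\gamma\rho}$, and split the perturbation into the $2\gamma^2C$ and $2\EbNFSq{\mM^{t+1}-\mM^t}$ pieces. The only (cosmetic) difference is how the mean shift is discarded: the paper drops it upfront via $\Omega_4^{t+1}=\EbNFSq{\mV^{t+1}-\bar\vv^t\1^\top}-n\EbNSq{\bar\vv^{t+1}-\bar\vv^t}\le\EbNFSq{\mV^{t+1}-\bar\vv^t\1^\top}$, whereas you keep the exactly centered momentum difference and invoke non-expansiveness of the projection $\mI-\tfrac1n\1\1^\top$ at the end — these are the same fact and yield the identical bound.
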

\begin{proof}
\begin{align*}
    \Omega_4^{t+1} &= \EbNFSq{\mV^{t+1} -\bar\vv^t\1^T+\bar\vv^t\1^T-\bar\vv^{t+1}\1^T}\\
        &= \EbNFSq{\mV^{t+1} -\bar\vv^t\1^T} -n\EbNSq{\bar\vv^t-\bar\vv^{t+1}}\\
        &\leq \EbNFSq{\mV^{t+1} -\bar\vv^t\1^T}\\
        &= \EbNFSq{\mV^t+\gamma\mG^t(\mW-\mI)+\mM^{t+1}-\mM^t-\bar\vv^t\1^T}\\
        &= \EbNFSq{\mV^t\wtilde \mW-\bar\vv^t\1^T +\gamma (\mG^t-\mV^t)(\mW-\mI)+\mM^{t+1}-\mM^t}\\
        &\leq (1-\nicefrac{\gamma\rho}{2})\EbNFSq{\mV^t-\bar\vv^t\1^T} + (1+\nicefrac{2}{\gamma\rho})(2\gamma^2C\EbNFSq{\mG^t-\mV^t}\\
        & \;+\; 2\EbNFSq{\mM^{t+1}-\mM^t}).
\end{align*}
\end{proof}

\begin{lemma}[Lemma B.4, Eq.~(18) from \citep{zhao2022beer}]\label{lem:iterates_bound}
    We have the following control of the iterates at iterations $t$ and $t+1$
    \begin{eqnarray}
    \EbNFSq{\mX^{t+1}-\mX^t} \leq 3\gamma^2 C\Omega_1^t + 3\gamma^2C\Omega_3^t + 3\eta^2\Omega_4^t + 3\eta^2n\Omega_5^t.
    \end{eqnarray}
\end{lemma}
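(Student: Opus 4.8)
The plan is to expand the one-step difference directly from the update rule and then split it into exactly three Frobenius-orthogonal-ish pieces that match the four quantities $\Omega_1^t,\Omega_3^t,\Omega_4^t,\Omega_5^t$ on the right-hand side. From line~3 of \Cref{alg:beer-mom} we have $\mX^{t+1}-\mX^t=\gamma\mH^t(\mW-\mI)-\eta\mV^t$, so everything reduces to bounding $\EbNFSq{\gamma\mH^t(\mW-\mI)-\eta\mV^t}$.

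First I would use the double stochasticity of $\mW$ from \Cref{asmp:mixing_matrix}: since $\1^\top\mW=\1^\top$ we get $\1^\top(\mW-\mI)=\0^\top$, hence $\bar\xx^t\1^\top(\mW-\mI)=\0$. This lets me insert and remove $\bar\xx^t\1^\top$ for free and write $\mH^t(\mW-\mI)=(\mH^t-\mX^t)(\mW-\mI)+(\mX^t-\bar\xx^t\1^\top)(\mW-\mI)$, turning the gossip term into one piece governed by the compression error $\mH^t-\mX^t$ and one governed by the consensus error $\mX^t-\bar\xx^t\1^\top$. Combined with the $-\eta\mV^t$ term, $\mX^{t+1}-\mX^t$ is a sum of three matrices, so the elementary inequality $\norm{\mA+\mB+\mD}_{\rm F}^2\le 3\norm{\mA}_{\rm F}^2+3\norm{\mB}_{\rm F}^2+3\norm{\mD}_{\rm F}^2$ gives $\norm{\mX^{t+1}-\mX^t}_{\rm F}^2\le 3\gamma^2\norm{(\mH^t-\mX^t)(\mW-\mI)}_{\rm F}^2+3\gamma^2\norm{(\mX^t-\bar\xx^t\1^\top)(\mW-\mI)}_{\rm F}^2+3\eta^2\norm{\mV^t}_{\rm F}^2$.

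For the first two terms I would apply sub-multiplicativity of the Frobenius norm against the (symmetric) matrix $\mW-\mI$, namely $\norm{\mA(\mW-\mI)}_{\rm F}^2\le\sigma_{\max}^2(\mW-\mI)\norm{\mA}_{\rm F}^2=C\norm{\mA}_{\rm F}^2$, which produces $3\gamma^2 C\norm{\mH^t-\mX^t}_{\rm F}^2$ and $3\gamma^2 C\norm{\mX^t-\bar\xx^t\1^\top}_{\rm F}^2$. For the last term I would decompose $\mV^t=(\mV^t-\bar\vv^t\1^\top)+\bar\vv^t\1^\top$; the two parts are Frobenius-orthogonal because $(\mV^t-\bar\vv^t\1^\top)\1=\0$, so $\norm{\mV^t}_{\rm F}^2=\norm{\mV^t-\bar\vv^t\1^\top}_{\rm F}^2+n\norm{\bar\vv^t}^2$. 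Taking expectations and reading off the definitions of $\Omega_1^t,\Omega_3^t,\Omega_4^t,\Omega_5^t$ yields exactly $\EbNFSq{\mX^{t+1}-\mX^t}\le 3\gamma^2 C\Omega_1^t+3\gamma^2 C\Omega_3^t+3\eta^2\Omega_4^t+3\eta^2 n\Omega_5^t$.

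There is no genuine obstacle here: the statement is a direct consequence of the update rule, the spectral structure of $\mW$, and two one-line inequalities. The only care needed is in the bookkeeping — performing the add-and-subtract of $\bar\xx^t\1^\top$ \emph{before} applying Young's inequality so that the three pieces separately map onto $\Omega_1^t$, $\Omega_3^t$, and $\eta^2(\Omega_4^t+n\Omega_5^t)$; a naive two-way split $\gamma\mH^t(\mW-\mI)$ versus $-\eta\mV^t$ would conflate the compression and consensus errors and fail to isolate $\Omega_3^t$.
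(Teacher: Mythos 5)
Your proposal is correct, and it is the standard argument: the paper does not reprove this lemma but imports it verbatim from \citet{zhao2022beer} (Lemma B.4, Eq.~(18)), whose proof is exactly your computation — substitute the update $\mX^{t+1}-\mX^t=\gamma\mH^t(\mW-\mI)-\eta\mV^t$, use $\1^\top(\mW-\mI)=\0^\top$ to replace $\mH^t(\mW-\mI)$ by $(\mH^t-\mX^t)(\mW-\mI)+(\mX^t-\bar\xx^t\1^\top)(\mW-\mI)$, apply the three-term Young inequality and the bound $\norm{\mA(\mW-\mI)}_{\rm F}^2\le C\norm{\mA}_{\rm F}^2$, and split $\mV^t$ orthogonally into $(\mV^t-\bar\vv^t\1^\top)+\bar\vv^t\1^\top$. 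No gaps; the bookkeeping, constants, and the orthogonality giving $\norm{\mV^t}_{\rm F}^2=\norm{\mV^t-\bar\vv^t\1^\top}_{\rm F}^2+n\norm{\bar\vv^t}^2$ are all right.
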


\begin{lemma}\label{lem:mom_bound}
    Assume Assumptions \ref{asmp:smoothness} and \ref{asmp:bounded_variance} hold. Then we have the following control of the momentum at iterations $t$ and $t+1$
    \begin{eqnarray}
        \EbNFSq{\mM^{t+1}-\mM^t} \le \lambda^2n\sigma^2 +2\lambda^2\EbNFSq{\nabla F(\mX^t)-\mM^t} + 2\lambda^2L^2\EbNFSq{\mX^t-\mX^{t+1}}.
    \end{eqnarray}
\end{lemma}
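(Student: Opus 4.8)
The plan is to unfold the update rule for $\mM^{t+1}$ given in \Cref{alg:beer-mom}, namely $\mM^{t+1} = (1-\lambda)\mM^t + \lambda\wtilde\nabla F(\mX^{t+1})$, so that the increment becomes $\mM^{t+1} - \mM^t = \lambda(\wtilde\nabla F(\mX^{t+1}) - \mM^t)$. First I would take expectations and split off the stochastic noise: write $\wtilde\nabla F(\mX^{t+1}) = \nabla F(\mX^{t+1}) + (\wtilde\nabla F(\mX^{t+1}) - \nabla F(\mX^{t+1}))$, where the second term has zero conditional mean (by the unbiasedness part of \Cref{asmp:bounded_variance}) and squared Frobenius norm bounded in expectation by $n\sigma^2$ (summing the per-client variance bound over $i\in[n]$). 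Since the noise term is conditionally independent of everything measurable at time $t$ and $\mX^{t+1}$ is determined before the fresh stochastic gradients at step $t+1$ are drawn — wait, one must check the filtration carefully here, because $\mX^{t+1}$ depends on $\mM^t$ and $\mV^t$ but not on $\wtilde\nabla F(\mX^{t+1})$ — the cross term vanishes, giving $\EbNFSq{\mM^{t+1}-\mM^t} = \lambda^2 \EbNFSq{\nabla F(\mX^{t+1}) - \mM^t} + \lambda^2 \Eb{\|\wtilde\nabla F(\mX^{t+1}) - \nabla F(\mX^{t+1})\|_{\rm F}^2} \le \lambda^2\EbNFSq{\nabla F(\mX^{t+1}) - \mM^t} + \lambda^2 n\sigma^2$.

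Next I would handle the deterministic-looking term $\EbNFSq{\nabla F(\mX^{t+1}) - \mM^t}$ by inserting $\nabla F(\mX^t)$ and applying Young's inequality with constant $2$: $\|\nabla F(\mX^{t+1}) - \mM^t\|_{\rm F}^2 \le 2\|\nabla F(\mX^t) - \mM^t\|_{\rm F}^2 + 2\|\nabla F(\mX^{t+1}) - \nabla F(\mX^t)\|_{\rm F}^2$. The first piece is exactly $\wtilde G^t$ up to the expectation, i.e. $\EbNFSq{\nabla F(\mX^t)-\mM^t}$. For the second piece I would invoke \Cref{asmp:smoothness} (each $f_i$ is $L$-smooth) column-wise: $\|\nabla F(\mX^{t+1}) - \nabla F(\mX^t)\|_{\rm F}^2 = \sum_{i=1}^n \|\nabla f_i(\xx_i^{t+1}) - \nabla f_i(\xx_i^t)\|^2 \le L^2\sum_{i=1}^n\|\xx_i^{t+1}-\xx_i^t\|^2 = L^2\|\mX^{t+1}-\mX^t\|_{\rm F}^2$. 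Combining these gives exactly $\EbNFSq{\mM^{t+1}-\mM^t} \le \lambda^2 n\sigma^2 + 2\lambda^2\EbNFSq{\nabla F(\mX^t)-\mM^t} + 2\lambda^2 L^2\EbNFSq{\mX^t-\mX^{t+1}}$, which is the claim (note $\|\mX^t - \mX^{t+1}\|_{\rm F} = \|\mX^{t+1}-\mX^t\|_{\rm F}$).

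The one subtlety I want to be careful about — and which I expect to be the only real obstacle — is the measurability/conditioning argument that kills the cross term. One needs that at the moment $\wtilde\nabla F(\mX^{t+1})$ is sampled, the quantity $\nabla F(\mX^{t+1}) - \mM^t$ is already determined. This holds because $\mX^{t+1}$ is computed from $\mX^t,\mH^t,\mV^t$ (line $\mX^{t+1} = \mX^t + \gamma\mH^t(\mW-\mI) - \eta\mV^t$), all of which are $\mathcal{F}_t$-measurable, and $\mM^t$ is $\mathcal{F}_t$-measurable as well; only then does line $\mM^{t+1} = (1-\lambda)\mM^t + \lambda\wtilde\nabla F(\mX^{t+1})$ introduce the new randomness $\Xi^{t+1}$. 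So conditioning on $\mathcal{F}_t$ and using the tower property, $\Eb{\langle \wtilde\nabla F(\mX^{t+1}) - \nabla F(\mX^{t+1}), \nabla F(\mX^{t+1}) - \mM^t\rangle} = 0$. Everything else is routine: Young's inequality and $L$-smoothness applied block-wise over the $n$ columns. I would present the proof as a short display chain mirroring the style of \Cref{lem:descent_Gt_tilde} and \Cref{lem:descent_Gt_hat}.
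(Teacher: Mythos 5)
Your proposal is correct and follows essentially the same route as the paper's proof: unfold $\mM^{t+1}-\mM^t=\lambda(\wtilde\nabla F(\mX^{t+1})-\mM^t)$, split off the zero-mean noise to get $\lambda^2 n\sigma^2+\lambda^2\EbNFSq{\nabla F(\mX^{t+1})-\mM^t}$, then apply Young's inequality with constant $2$ and column-wise $L$-smoothness. Your extra care about the filtration (that $\mX^{t+1}$ and $\mM^t$ are determined before the fresh stochastic gradients are drawn) is exactly the implicit justification the paper relies on, so nothing is missing.
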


\begin{proof}
    \begin{align*}
    \EbNFSq{\mM^{t+1}-\mM^t} &= \lambda^2\EbNFSq{\wtilde \nabla F(\mX^{t+1}) -\mM^t}\\
        &= \lambda^2\EbNFSq{\wtilde \nabla F(\mX^{t+1}) - \nabla F(\mX^{t+1}) + \nabla F(\mX^{t+1})-\mM^t}\\
        &\leq \lambda^2n\sigma^2 +\lambda^2\EbNFSq{\nabla F(\mX^{t+1})-\mM^t}\\
        &\leq \lambda^2n\sigma^2 +2\lambda^2\EbNFSq{\nabla F(\mX^t)-\mM^t} + 2\lambda^2L^2\EbNFSq{\mX^t-\mX^{t+1}}.
\end{align*}
\end{proof}

\begin{lemma}\label{lem:vt_bound}
    We have the following control of the gradient estimator $\mV^t$ at iterations $t$ and $t+1$
    \begin{eqnarray}
        \EbNFSq{\mV^{t+1}-\mV^t} \le 3\gamma^2C\Omega_2^t 
        + 3\gamma^2C\Omega_4^t
        + 3\EbNFSq{\mM^{t+1}-\mM^t}.
    \end{eqnarray}
\end{lemma}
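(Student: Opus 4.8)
The plan is to read the $\mV$-update off from \Cref{alg:beer-mom}, namely
\begin{equation*}
\mV^{t+1}-\mV^t = \gamma\mG^t(\mW-\mI) + \left(\mM^{t+1}-\mM^t\right),
\end{equation*}
and then to rewrite the gossip increment $\gamma\mG^t(\mW-\mI)$ so that it only involves quantities already tracked by the Lyapunov function, namely $\Omega_2^t=\EbNFSq{\mG^t-\mV^t}$ and $\Omega_4^t=\EbNFSq{\mV^t-\bar\vv^t\1^\top}$. This is exactly the same trick used in \Cref{lem:iterates_bound} (Lemma~B.4, Eq.~(18) of \citet{zhao2022beer}) for the $\mX$-increment, with $(\mV,\mG,\bar\vv)$ and the momentum increment $\mM^{t+1}-\mM^t$ now playing the roles of $(\mX,\mH,\bar\xx)$ and the drift term $-\eta\mV^t$.

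The one idea needed is that $(\mW-\mI)\1=\0$ by double stochasticity (\Cref{asmp:mixing_matrix}), hence $\mathbf{z}\1^\top(\mW-\mI)=\0$ for every $\mathbf{z}\in\R^d$, and in particular $\bar\vv^t\1^\top(\mW-\mI)=\0$. I can therefore re-centre $\mG^t$ around $\bar\vv^t\1^\top$ for free and split
\begin{equation*}
\mG^t(\mW-\mI) = \left(\mG^t-\bar\vv^t\1^\top\right)(\mW-\mI) = \left(\mG^t-\mV^t\right)(\mW-\mI) + \left(\mV^t-\bar\vv^t\1^\top\right)(\mW-\mI),
\end{equation*}
so that $\mV^{t+1}-\mV^t$ becomes a sum of three terms. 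Applying Young's inequality in the form $\norm{\aa+\bb+\cc}_{\rm F}^2\le 3\norm{\aa}_{\rm F}^2 + 3\norm{\bb}_{\rm F}^2 + 3\norm{\cc}_{\rm F}^2$ under the expectation then separates these three pieces.

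Finally, on the first two resulting terms I would invoke sub-multiplicativity of the Frobenius norm together with the definition $C\eqdef\sigma_{\max}^2(\mW-\mI)$, i.e.\ $\EbNFSq{\mathbf{Z}(\mW-\mI)}\le C\,\EbNFSq{\mathbf{Z}}$, which converts them into $3\gamma^2 C\,\Omega_2^t$ and $3\gamma^2 C\,\Omega_4^t$ respectively; the third term is already $3\,\EbNFSq{\mM^{t+1}-\mM^t}$, and the claimed inequality follows. I expect no real obstacle here: the only non-mechanical step is spotting the $(\mW-\mI)\1=\0$ re-centring that turns the gossip increment into the consensus error $\Omega_4^t$ plus the compression error $\Omega_2^t$, and everything else reduces to Young's inequality and the norm estimate $\EbNFSq{\mathbf{Z}(\mW-\mI)}\le C\,\EbNFSq{\mathbf{Z}}$ already used in the proofs of \Cref{lem:descent_Omega3,lem:descent_Omega4}.
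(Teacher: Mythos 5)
Your proposal is correct and follows the paper's proof essentially verbatim: the same decomposition $\gamma\mG^t(\mW-\mI)=\gamma(\mG^t-\mV^t)(\mW-\mI)+\gamma(\mV^t-\bar\vv^t\1^\top)(\mW-\mI)$, the three-term Young inequality, and the bound $\EbNFSq{\mathbf{Z}(\mW-\mI)}\le C\,\EbNFSq{\mathbf{Z}}$. The only cosmetic point is that the re-centring uses $\1^\top(\mW-\mI)=\0$ (equivalently your $(\mW-\mI)\1=\0$ via symmetry of $\mW$), both of which hold under \Cref{asmp:mixing_matrix}.
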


\begin{proof}
\begin{align*}
    \EbNFSq{\mV^{t+1}-\mV^t} &= \EbNFSq{\gamma\mG^t(\mW-\mI)+\mM^{t+1}-\mM^t}\\
        &= \EbNFSq{\gamma(\mG^t-\mV^t)(\mW-\mI)+\gamma(\mV^t-\bar\vv^t\1^T)(\mW-\mI)+\mM^{t+1}-\mM^t}\\
        &\leq 3\gamma^2C\EbNFSq{\mG^t-\mV^t} 
        + 3\gamma^2C\EbNFSq{\mV^t-\bar\vv^t\1^T}\\
        & \;+\; 3\EbNFSq{\mM^{t+1}-\mM^t}\\
        &= 3\gamma^2C\Omega_2^t 
        + 3\gamma^2C\Omega_4^t
        + 3\EbNFSq{\mM^{t+1}-\mM^t}.
\end{align*}
\end{proof}

\theorembeermnonconvex*
\begin{proof}
    From \Cref{lem:mom_bound} and \Cref{lem:iterates_bound} we get 
    \begin{align}
    \EbNFSq{\mM^{t+1}-\mM^t} &\leq 
    \lambda^2n\sigma^2 
    + 2\lambda^2\wtilde G^t 
    + 6\lambda^2\gamma^2L^2C\Omega_1^t 
    + 6\lambda^2\gamma^2L^2C\Omega_3^t 
    + 6\lambda^2\eta^2L^2\Omega_4^t\notag\\
    & + 6\lambda^2\eta^2L^2n\Omega_5^t.\label{eq:bfenoewf}
    \end{align}
    Using the above and \Cref{lem:vt_bound} we get
    \begin{align}
    \EbNFSq{\mV^{t+1}-\mV^t} &\leq 3\lambda^2 n \sigma^2 
    + 6\lambda^2\wtilde G^t 
    + 18\lambda^2\gamma^2L^2C\Omega_1^t 
    + 3\gamma^2C\Omega_2^t 
    + 18\lambda^2\gamma^2L^2C\Omega_3^t\notag\\ 
    &+ (3\gamma^2C+18\lambda^2\eta^2L^2)\Omega_4^t 
    + 18\lambda^2\eta^2L^2n\Omega_5^t.\label{eq:dsfjnfqewof}
\end{align}
Using \eqref{eq:bfenoewf}, \eqref{eq:dsfjnfqewof}, and  \Cref{lem:descent_Gt_hat} we get the following descent on $\hat{G}^t$
\[
    \hat G^{t+1} \leq (1-\lambda)\hat G^t + \frac{3L^2n\gamma^2C}{\lambda}\Omega_1^t + \frac{3L^2n\gamma^2C}{\lambda}\Omega_3^t + \frac{3L^2n\eta^2}{\lambda}\Omega_4^t + \frac{3L^2n^2\eta^2}{\lambda}\Omega_5^t + \lambda^2n\sigma^2.
\]
Using \eqref{eq:bfenoewf}, \eqref{eq:dsfjnfqewof}, and \Cref{lem:descent_Gt_tilde} we get the following descent on $\wtilde{G}^t$
\[
    \wtilde G^{t+1} \leq (1-\lambda)\wtilde  G^t + \frac{3L^2\gamma^2C}{\lambda}\Omega_1 + \frac{3L^2\gamma^2C}{\lambda}\Omega_3^t + \frac{3L^2\eta^2}{\lambda}\Omega_4^t + \frac{3L^2n\eta^2}{\lambda}\Omega_5^t + \lambda^2n\sigma^2.
\]
Using \eqref{eq:bfenoewf}, \eqref{eq:dsfjnfqewof}, and \Cref{lem:descent_Omega1} we get the following descent on $\Omega_1^t$
\[
    \Omega_1^{t+1} \leq \left(1-\frac{\alpha}{2}+ \frac{6\gamma^2C}{\alpha}\right)\Omega_1^t + \frac{6\gamma^2C}{\alpha}\Omega_3^t + \frac{6\eta^2}{\alpha}\Omega_4^t + \frac{6\eta^2n}{\alpha}\Omega_5^t.
\]
Using \eqref{eq:bfenoewf}, \eqref{eq:dsfjnfqewof}, and \Cref{lem:descent_Omega2} we get the following descent on $\Omega_2^t$
\begin{align*}
    \Omega_2^{t+1} &\leq 
    \left(1-\frac{\alpha}{2} + \frac{6\gamma^2C}{\alpha}\right)\Omega_2^t 
    + \frac{6\lambda^2}{\alpha}\wtilde G^t 
    + \frac{36\lambda^2\gamma^2L^2C}{\alpha}\Omega_1^t 
    + \frac{36\lambda^2\gamma^2L^2C}{\alpha}\Omega_3^t \\
    & \;+\; \left(\frac{6\gamma^2C}{\alpha} + \frac{36\eta^2\lambda^2L^2}{\alpha}\right)\Omega_4^t 
    + \frac{36\eta^2\lambda^2L^2n}{\alpha}\Omega_5^t 
    + \frac{6\lambda^2n}{\alpha}\sigma^2.
\end{align*}

Using \eqref{eq:bfenoewf}, \eqref{eq:dsfjnfqewof}, and \Cref{lem:descent_Omega3} we get the following descent on $\Omega_3^t$
\begin{eqnarray}\label{eq:dsvnjwenfowe}
    \Omega_3^{t+1} \leq (1-\frac{\gamma \rho}{2})\Omega_3^t + \frac{6\gamma C}{\rho}\Omega_1^t + \frac{6\eta^2}{\gamma \rho}\Omega_4^t.
\end{eqnarray}
Finally, using \eqref{eq:bfenoewf}, \eqref{eq:dsfjnfqewof}, and \Cref{lem:descent_Omega4} we get the following descent on $\Omega_4^t$:

\begin{align*}
    \Omega_4^{t+1} &\leq
    (1-\frac{\gamma\rho}{2} + \frac{36\eta^2\lambda^2 L^2}{\gamma\rho})\Omega_4^t 
    + \frac{12\lambda^2}{\gamma\rho}\wtilde G^t 
    + \frac{36\gamma\lambda^2L^2C}{\rho}\Omega_1^t 
    + \frac{6\gamma C}{\rho}\Omega_2^t
    + \frac{36\gamma\lambda^2L^2C}{\rho}\Omega_3^t\\ 
    &+ \frac{36\eta^2\gamma L^2n}{\rho}\Omega_5^t + \frac{6\lambda^2n}{\gamma\rho}\sigma^2.
\end{align*}

Now we can gather all together
\begin{align}
\mathbf{\Omega}^{t+1} &\le 
\underbrace{\begin{pmatrix}
    1-\lambda & 
    0 & 
    \frac{3L^2n\gamma^2C}{\lambda} & 
    0 & \frac{3L^2n\gamma^2C}{\lambda} & 
    \frac{3L^2n\eta^2}{\lambda}
    \\
    0 & 
    1-\lambda & 
    \frac{3L^2\gamma^2C}{\lambda} & 
    0 &
    \frac{3L^2\gamma^2C}{\lambda} & 
    \frac{3L^2\eta^2}{\lambda}
    \\
    0 &
    0 &
    1-\frac{\alpha}{2} + \frac{6\gamma^2C}{\alpha} & 
    0 &
    \frac{6\gamma^2C}{\alpha} & 
    \frac{6\eta^2}{\alpha} \\
    0 & 
    \frac{6\lambda^2}{\alpha} &
    \frac{36\lambda^2\gamma^2L^2C}{\alpha} & 
    1-\frac{\alpha}{2}+\frac{6\gamma^2C}{\alpha} & 
    \frac{36\lambda^2\gamma^2L^2C}{\alpha} & 
    \frac{6\gamma^2C}{\alpha} + \frac{36\lambda^2\eta^2L^2}{\alpha}\\
    0 & 
    0 & 
    \frac{6\gamma C}{\rho} & 
    0 & 
    1-\frac{\gamma\rho}{2} & 
    \frac{6\eta^2}{\gamma\rho}\\
    0 & 
    \frac{12\lambda^2}{\gamma\rho} & 
    \frac{36\gamma\lambda^2L^2C}{\rho} &
    \frac{6\gamma C}{\rho} & 
    \frac{36\gamma\lambda^2L^2C}{\rho} & 
    1-\frac{\gamma\rho}{2} + \frac{36\eta^2\lambda^2L^2}{\gamma\rho}
\end{pmatrix}}_{\eqdef \mA}\mathbf{\Omega}^t\notag\\
&
\;+\; \underbrace{\begin{pmatrix}
    \frac{3L^2n^2\eta^2}{\lambda}\\
    \frac{3L^2n\eta^2}{\lambda} \\
    \frac{6\eta^2n}{\alpha}\\
    \frac{36\eta^2\lambda^2L^2n}{\alpha}\\
    0 \\
    \frac{36\eta^2\gamma L^2n}{\rho}
\end{pmatrix}}_{\eqdef \bb_1}\Omega_5^t
\;+\; 
\underbrace{\begin{pmatrix}
    n\\
    2n\\
    0 \\
    \frac{6n}{\alpha}\\
    0\\
    \frac{6n}{\gamma\rho}
\end{pmatrix}}_{\eqdef\bb_2}\lambda^2\sigma^2.\label{eq:njnowoefnwe}
\end{align}

We remind that the Lyapunov function $\Phi^t$ has the following form 

\[
\Phi^t \eqdef F^t + \frac{c_1}{n^2 L}\hat G^t + \frac{c_2 \tau}{nL}\wtilde G^t + \frac{c_3L}{\rho^3n\tau} \Omega_1^t + \frac{c_4\tau}{\rho nL}\Omega_2^t + \frac{c_5L}{\rho^3 n\tau}\Omega_3^t + \frac{c_6\tau}{\rho nL}\Omega_4^t = F^t + \cc^\top \mathbf{\Omega}^t,
\] 
where $\{c_k\}_{k=1}^6$ are absolute constants. Let 
\[
    \cc \eqdef \left(\frac{c_1}{n^2 L}, \frac{c_2 \tau}{nL}, \frac{c_3L}{\rho^3n\tau}, \frac{c_4\tau}{\rho nL}, \frac{c_5L}{\rho^3 n\tau}, \frac{c_6\tau}{\rho nL} \right)^{\top}.
\]
Therefore, the descent on $\Phi^t$ for is the following
\begin{align*}
    \Phi^{t+1} &= F^{t+1} + \cc^\top \mathbf{\Omega}^t\\
    &\le F_t 
    - \frac{\eta}{2}\Eb{\norm{\nabla f(\bar\xx^t)}^2}
    + \frac{\eta}{n^2}\hat G^t 
    + \frac{\eta L^2}{n}\Omega_3^t
    - (\nicefrac{\eta}{2}-\nicefrac{\eta^2L}{2})\Omega_5^t\\
    & \;+\; \cc^\top(\mA\mathbf{\Omega}^t + \Omega_5^t\bb_1 + \lambda^2\sigma^2\bb_2)\\
    &= F^t 
    - \frac{\eta}{2}\Eb{\norm{\nabla f(\bar\xx^t)}^2}
    + \cc^\top \mathbf{\Omega}^t 
    + (\qq^\top + \cc^\top\mA - \cc^\top)\mathbf{\Omega}^t
    - (\nicefrac{\eta}{2}-\nicefrac{\eta^2L}{2} - \cc^\top\bb_1)\Omega_5^t\\
    & \;+\; \cc^\top\bb_2 \lambda^2\sigma^2\\
    &= \Phi^t 
    - \frac{\eta}{2}\Eb{\norm{\nabla f(\bar\xx^t)}^2}
    + (\qq^\top + \cc^\top\mA - \cc^\top)\mathbf{\Omega}^t
    - (\nicefrac{\eta}{2}-\nicefrac{\eta^2L}{2} - \cc^\top\bb_1)\Omega_5^t\\
    & \;+\; \cc^\top\bb_2 \lambda^2\sigma^2,
\end{align*}
where $\qq \eqdef (\nicefrac{\eta}{n^2}, 0, 0, 0, \nicefrac{\eta L^2}{n}, 0)^\top$. We need coefficients next to $\mathbf{\Omega}^t$ and $\Omega_5^t$ to be negative. This is equivalent to finding $\cc$ such that

\begin{equation}
    \label{eq:linearsystem-phi}
    \begin{bmatrix}
        \mI-\mA^\top\\
        -\bb_1^\top 
    \end{bmatrix} \cc \geq \begin{bmatrix}
        \qq \\
        \frac{\eta^2L}{2}-\frac{\eta}{2}
    \end{bmatrix}.
\end{equation}

We make the following choice of stepsizes
\[
    \lambda \eqdef c_{\lambda}\alpha\rho^3\tau, \quad \gamma \eqdef c_{\gamma}\alpha\rho, \quad \eta\eqdef \frac{c_{\eta}\alpha\rho^3\tau}{L}.
\]
with the following choice of constants:

\begin{gather}
    c_{\lambda} = \frac{1}{200}, c_{\gamma} = \frac{1}{200}, c_{\eta} = \frac{1}{100000}, \quad \text{and},\notag\\
    c_1 = \frac{1}{500}, c_2 = \frac{13}{200000}, c_3=\frac{1}{20}, c_4=\frac{1}{400000}, c_5=\frac{9}{100}, c_6=\frac{1}{200000}.\label{eq:beerm_absolute_constants}
\end{gather}
The system of inequalities \eqref{eq:linearsystem-phi} are satisfied when $\tau\le 1$. 

Given the complexity of the inequalities and the choices of the parameters, we do not attempt to write down a proof for the correctness of the choices manually, instead, we verify these choices using the Symbolic Math Toolbox in MATLAB. We also perform such verification for our parameters and constants choices for \algname{MoTEF} in P\L{} case and \algname{MoTEF-VR}.\footnote{The code performing all the verification can be found at this  \url{https://github.com/mlolab/MoTEF.git}.}
We also note that, when $c_{\lambda}, c_{\gamma}$ and $c_{\eta}$ are fixed, we can search for a feasible $\{c_i\}_{i\in[6]}$ efficiently using the Linear Program solver with MATLAB as well. But searching for a feasible set of choices for $c_{\lambda}, c_{\gamma}$ and $c_{\eta}$ is very much a trial-and-error process.

Note that this choice gives us both $\lambda$ and $\gamma$ smaller than $1.$ This choice of constants gives the following result
\begin{eqnarray}
\label{eq:one-step-descent}
\Phi^{t+1} &\le& \Phi^t - \frac{c_{\eta}\alpha\rho^3\tau}{2L}\Eb{\|\nabla f(\bar\xx^t)\|^2} 
+ \frac{c_1}{n^2L}\cdot nc_{\lambda}^2\alpha^2\rho^6\tau^2\sigma^2\notag\\
&& \;+\; \frac{c_2\tau}{nL}\cdot 2nc_{\lambda}^2\alpha^2\rho^6\tau^2\sigma^2\notag\\
&& \;+\; \frac{c_4\tau}{\rho nL}\cdot \frac{6n}{\alpha} c_{\lambda}^2\alpha^2\rho^6\tau^2\sigma^2\notag\\
&& \;+\; \frac{c_6\tau}{\rho nL} \cdot \frac{6n}{c_{\gamma}\alpha\rho} c_{\lambda}^2\alpha^2\rho^6\tau^2\sigma^2\notag\\
&=&\Phi^t - \frac{c_{\eta}\alpha\rho^3\tau}{2L}\Eb{\|\nabla f(\bar\xx^t)\|^2} 
+ \frac{c_{\lambda}^2c_1\alpha^2\rho^6}{nL}\cdot \tau^2\sigma^2\notag\\
&& \;+\; \left(\frac{6c_{\lambda}^2c_4\alpha\rho^5}{L}
+ \frac{2c_{\lambda}^2c_2\alpha^2\rho^6}{L} 
+ \frac{6c_6c_{\lambda}^2\alpha\rho^4}{c_{\gamma}L}\right)\tau^3\sigma^2.\label{eq:39q5urdsf}
\end{eqnarray}
By this, we proved \Cref{lem:descent_lyapunov_beerm}.
Let us define constants
\begin{eqnarray*}
    B &\eqdef & \frac{c_{\eta}\alpha\rho^3}{2L},\\
    C &\eqdef & \frac{c_{\lambda}^2c_1\alpha^2\rho^6}{nL},\\
    D &\eqdef & \left(\frac{6c_{\lambda}^2c_4\alpha\rho^5}{L}
+ \frac{2c_{\lambda}^2c_2\alpha^2\rho^6}{L} 
+ \frac{6c_6c_{\lambda}^2\alpha\rho^4}{c_{\gamma}L}\right),\\
 E &\eqdef& 1.
\end{eqnarray*}
Using $\tau \le E$ and unrolling \eqref{eq:39q5urdsf} for $T$ iterations we get 
\begin{eqnarray*}
    \frac{1}{T}\sum_{t=0}^{T-1}\Eb{\|\nabla f(\bar\xx^t)\|^2} &\le& \frac{\Phi^0}{\tau BT} + \frac{C}{B}\tau\sigma^2 + \frac{D}{B}\tau^2\sigma^2.
\end{eqnarray*}

So we need to choose $\tau = \min\left\{\frac{1}{E}, \left(\frac{\Phi^0}{CT\sigma^2}\right)^{1/2}, \left(\frac{\Phi^0}{DT\sigma^2}\right)^{1/3}\right\}$
and we get the following rate
\begin{eqnarray*}
    \frac{1}{T}\sum_{t=0}^{T-1}\Eb{\|\nabla f(\bar\xx^t)\|^2} &\le& \cO\left(\frac{\Phi^0E}{BT} 
    + \left(\frac{C\Phi^0\sigma^2}{B^2T}\right)^{1/2}
    + \left(\frac{\sqrt{D}\Phi^0\sigma}{B^{3/2}T}\right)^{2/3}\right)\\
    &=& \cO\left(\frac{L\Phi^0}{\alpha \rho^3T} 
    + \left(\frac{L\Phi^0\sigma^2}{nT}\right)^{1/2}\right.\\
    &&\left. \;+\; \left(\frac{\sqrt{\alpha\rho^5 + \alpha^2\rho^6 + \alpha\rho^4}L\Phi^0\sigma}{\alpha^{3/2}\rho^{9/2}T}\right)^{2/3}\right)\\
    &=& \cO\left(\frac{L\Phi^0}{\alpha\rho^3T }
    + \left(\frac{L\Phi^0\sigma^2}{nT}\right)^{1/2}
    + \left(\frac{(\rho^{1/2} + \alpha^{1/2}\rho + 1)L\Phi^0\sigma}{\alpha\rho^{5/2}T}\right)^{2/3}\right),
\end{eqnarray*}
that translates to the rate in terms of $\varepsilon$ to
\begin{eqnarray*}
T = \cO\left(\frac{L\Phi^0}{\alpha \rho^3\varepsilon^2}
    + \frac{L\Phi^0\sigma^2}{n\varepsilon^4} 
    + \frac{L\Phi^0\sigma}{\alpha\rho^2\varepsilon^{3}}
    + \frac{L\Phi^0\sigma}{\alpha^{1/2}\rho^{3/2}\varepsilon^{3}}
    + \frac{L\Phi^0\sigma}{\alpha\rho^{5/2}\varepsilon^{3}}\right)
    \Rightarrow
    \frac{1}{T}\sum_{t=0}^{T-1}\Eb{\|\nabla f(\bar\xx^t)\|^2} \le \varepsilon^2.
\end{eqnarray*}
In the result above the fifth term always dominates the third and fourth. Therefore, we remove the third and fourth terms from the rate and derive the following rate
\begin{eqnarray*}
    T = \cO\left(\frac{L\Phi^0}{\alpha \rho^3\varepsilon^2}
    + \frac{L\Phi^0\sigma^2}{n\varepsilon^4} 
    + \frac{L\Phi^0\sigma}{\alpha\rho^{5/2}\varepsilon^{3}}\right)
    \Rightarrow 
    \frac{1}{T}\sum_{t=0}^{T-1}\Eb{\|\nabla f(\bar\xx^t)\|^2} \le \varepsilon^2.
\end{eqnarray*}

Note that with the choice $\mV^0 = \mG^0 = \mM^0 = \wtilde\nabla F(\mX^0), \mH^0 = \mX^0 = \xx^0\1^\top,$ we get
\begin{eqnarray*}
    \hat{G}^0 \le \sigma^2n, \quad \wtilde G^0 \le \sigma^2n, \quad \Omega_1^0 =  \Omega_2^0 = \Omega_3^0 = \Omega_4^0 = 0.
\end{eqnarray*}
\begin{eqnarray}
    \Phi^0 \le F^0 + \frac{c_1}{n^2L}\sigma^2n + \frac{c_2\tau}{nL} \sigma^2n.
\end{eqnarray}
If we choose the initial batch size $B_{\rm init} \ge \lceil \frac{\sigma^2}{LF^0}\rceil $, we get
\begin{eqnarray}
    \Phi^0 \le F^0 + \frac{1}{nL}\frac{\sigma^2}{B_{\rm init}} + \frac{1}{L} \frac{\sigma^2}{B_{\rm init}} \le 3F^0.
\end{eqnarray}
\end{proof}

\subsubsection{Convergence of Consensus Error}

Now we show that the workers achieve consensus automatically with \algname{MoTEF}. We notice that \eqref{eq:one-step-descent} can be tighten. In particular, if we substitute the choices of constants in $\cc$ into \eqref{eq:linearsystem-phi}, we have the following:
\[
    (\qq^\top + \cc^\top \mA -\cc^\top)\mathbf{\Omega}^t \leq - c_7 \frac{L\alpha}{\rho\tau n}\Omega_3^t
\]
where $c_7$ is an absolute constant. We highlight that the choice of constants $\{c_k\}_{k=1}^7$ can be tightened but we are interested in the dependency on the problem-specific parameters only. In particular, this implies that we have the following (instead of \eqref{eq:one-step-descent}):
\begin{eqnarray}
\Phi^{t+1} &=&\Phi^t - \frac{c_{\eta}\alpha\rho^3\tau}{2L}\Eb{\|\nabla f(\bar\xx^t)\|^2} 
- c_7 \frac{L\alpha}{\rho\tau n}\Omega_3^t + \frac{c_{\lambda}^2c_1\alpha^2\rho^6}{nL}\cdot \tau^2\sigma^2\notag\\
&& \;+\; \left(\frac{6c_{\lambda}^2c_4\alpha\rho^5}{L}
+ \frac{2c_{\lambda}^2c_2\alpha^2\rho^6}{L} 
+ \frac{6c_6c_{\lambda}^2\alpha\rho^4}{c_{\gamma}L}\right)\tau^3\sigma^2.
\end{eqnarray}
Therefore, we have:

\[
    \frac{1}{T}\sum_{t=0}^{T-1}\Eb{\|\nabla f(\bar\xx^t)\|^2} + \frac{2c_7L^2}{c_{\eta}\rho^4\tau^2}\frac{1}{T}\sum_{t=0}^{T-1} \frac{1}{n}\Omega_3^t \le \frac{\Phi^0}{\tau BT} + \frac{C}{B}\tau\sigma^2 + \frac{D}{B}\tau^2\sigma^2.
\]
where $B,C,D$ are defined in the proof of \Cref{thm:beerm-nonconvex} as before. In particular, this means that $\frac{2c_7L^2}{c_{\eta}\rho^4\tau^2}\frac{1}{T}\sum_{t=0}^{T-1} \frac{1}{n}\Omega_3^t$ converges to zero at the same speed as $\frac{1}{T}\sum_{t=0}^{T-1}\Eb{\|\nabla f(\bar\xx^t)\|^2}$. By our choice of $\tau\leq 1$, we have:

\begin{align*}
    \frac{1}{Tn}\sum_{t=0}^{T-1} \Omega_3^t &\leq \frac{c_{\eta}\rho^4}{2c_7L^2}\left(\frac{\Phi^0}{\tau BT} + \frac{C}{B}\tau\sigma^2 + \frac{D}{B}\tau^2\sigma^2\right)\\
    &\leq \cO\left(\frac{\rho\Phi^0}{\alpha L T} 
    + \left(\frac{\rho^8\Phi^0\sigma^2}{nL^3 T}\right)^{1/2}
    + \left(\frac{(\rho^4 + \alpha^{1/2}\rho^{7/2} + \rho^{7/2})\Phi^0\sigma}{\alpha L^2 T}\right)^{2/3}\right).
\end{align*}
Therefore, we obtain that 
\[
    T = \cO\left( \frac{\rho \Phi^0}{\alpha L \varepsilon^2} + 
        \frac{\rho^8 \Phi^0\sigma^2}{nL^3\varepsilon^4}
        + \frac{\rho^{7/2}L\Phi^0\sigma}{\alpha L^2 \varepsilon^3}
    \right)
     \Rightarrow  \frac{1}{Tn}\sum_{t=0}^{T-1} \Omega_3^t \leq \varepsilon^2.
\]

\subsubsection{Convergence of Local Models}

Since we have the convergence of the averaged gradient norm $\frac{1}{T}\sum_{t=0}^{T-1}\E{\|\nabla f(\bar\xx^t)\|^2}$ and the consensus error $\frac{1}{Tn}\sum_{t=0}^{T-1}\Omega_3^t$, we also obtain the convergence of local models. Indeed, we have 
\begin{align*}
    \frac{1}{Tn}\sum_{t=0}^{T-1}\sum_{i=1}^n\EbNSq{\nabla f(\xx_i^t)} &\le 
    \frac{2}{Tn}\sum_{t=0}^{T-1}\sum_{i=1}^n\EbNSq{\nabla f(\bar\xx^t) - \nabla f(\xx_i^t)}\\ 
    &\quad +\; \frac{2}{Tn}\sum_{t=0}^{T-1}\sum_{i=1}^n\EbNSq{\nabla f(\bar\xx^t)^2} \\
    &\le \frac{2L^2}{Tn}\sum_{t=0}^{T-1}\EbNSq{\bar\xx^t-\xx_i^t}
    + \frac{2}{Tn}\sum_{t=0}^{T-1}\sum_{i=1}^n\EbNSq{\nabla f(\bar\xx^t)^2}\\
    &= \frac{2L^2}{Tn}\sum_{t=0}^{T-1}\Omega_3^t 
    + \frac{2}{Tn}\sum_{t=0}^{T-1}\sum_{i=1}^n\EbNSq{\nabla f(\bar\xx^t)^2}.
\end{align*}

\subsection{P{\L} setting.}

\theorembeermpl*

\begin{proof}
    The only change in the proof is the descent of the Lyapunov function. In P{\L} case, the descent on $\Phi^t$ becomes

\begin{align*}
    \Phi^{t+1} &= F^{t+1} + \bb^\top \mathbf{\Omega}^t\\
    &\le F_t 
    - \frac{\eta}{2}\Eb{\norm{\nabla f(\bar\xx^t)}^2}
    + \frac{\eta}{n^2}\hat G^t 
    + \frac{\eta L^2}{n}\Omega_3^t
    - (\nicefrac{\eta}{2}-\nicefrac{\eta^2L}{2})\Omega_5^t\\
    & \;+\; \bb^\top(\mA\mathbf{\Omega}^t + \Omega_5^t\bb_1 + \lambda^2\sigma^2\bb_2)\\
    &\leq (1-\eta\mu)F^t + (1-\eta\mu)\bb^\top\mathbf{\Omega}^t + (\qq^\top + \bb^\top\mA - (1-\eta\mu)\bb^\top)\mathbf{\Omega}^t\\
    & \;-\; (\nicefrac{\eta}{2}-\nicefrac{\eta^2L}{2} - \bb^\top\bb_1)\Omega_5^t
    + \bb^\top\bb_2 \lambda^2\sigma^2\\
    &= (1-\eta\mu)\Phi^t + (\qq^\top + \bb^\top\mA - (1-\eta\mu)\bb^\top)\mathbf{\Omega}^t
    - (\nicefrac{\eta}{2}-\nicefrac{\eta^2L}{2} - \bb^\top\bb_1)\Omega_5^t
    + \bb^\top\bb_2 \lambda^2\sigma^2,
\end{align*}
where in the second inequality we use P{\L} condition. Similar to the proof of \Cref{thm:beerm-nonconvex}, we need to satisfy 

\begin{equation*}
    \begin{bmatrix}
        (1-\mu\eta)\mI-\mA^\top\\
        -\bb_1^\top 
    \end{bmatrix} \bb \geq \begin{bmatrix}
        \qq \\
        \frac{\eta^2L}{2}-\frac{\eta}{2}
    \end{bmatrix}
\end{equation*}
for some coefficients $\bb.$ We set the stepsizes such that

\[
    \lambda \eqdef c_{\lambda}\alpha\rho^3\tau, \quad \gamma \eqdef c_{\gamma}\alpha\rho, \quad \eta\eqdef \frac{c_{\eta}\alpha\rho^3\tau}{L},
\]
and \[
    \bb \eqdef \left(\frac{b_1}{n^2 L}, \frac{b_2 \tau}{nL}, \frac{b_3L}{\rho^3n\tau}, \frac{b_4\tau}{\rho nL}, \frac{b_5L}{\rho^3 n\tau}, \frac{b_6\tau}{\rho nL} \right)^{\top}
\]
with the choice
\[
    c_{\lambda} = \frac{1}{200000}, c_{\gamma} = \frac{1}{200000}, c_{\eta} = \frac{1}{100000000},
\]
and 
\[
    b_1 = \frac{1}{250}, b_2 = \frac{13}{200000}, b_3=\frac{1}{20}, b_4=\frac{1}{400000}, b_5=2, b_6=\frac{1}{200000},
\]
gives the following descent on $\Phi^t$ (note that both $\gamma$ and $\lambda$ are smaller than $1$ with this choice of constants)
\begin{eqnarray}
\Phi^{t+1} &\le& \left(1-\frac{c_\eta\alpha\rho^3\tau\mu}{L}\right)\Phi^t 
    + \frac{c_{\lambda}^2b_1\alpha^2\rho^6}{nL}\cdot \tau^2\sigma^2\notag\\
&& \;+\; \left(\frac{6c_{\lambda}^2b_4\alpha\rho^5}{L}
+ \frac{2c_{\lambda}^2b_2\alpha^2\rho^6}{L} 
+ \frac{6b_6c_{\lambda}^2\alpha\rho^4}{c_{\gamma}L}\right)\tau^3\sigma^2.\label{eq:bonofjwnef}
\end{eqnarray}
Let us define constants
\begin{eqnarray*}
    B &\eqdef & \frac{c_{\eta}\alpha\rho^3\mu}{2L},\\
    C &\eqdef & \frac{c_{\lambda}^2c_1\alpha^2\rho^6}{nL},\\
    D &\eqdef & \left(\frac{6c_{\lambda}^2c_4\alpha\rho^5}{L}
+ \frac{2c_{\lambda}^2c_2\alpha^2\rho^6}{L} 
+ \frac{6c_6c_{\lambda}^2\alpha\rho^4}{c_{\gamma}L}\right),\\
 E &\eqdef& 1.
\end{eqnarray*}
Unrolling \eqref{eq:bonofjwnef} for $T$ iterations we get
\begin{eqnarray*}
    \Phi^T \le (1-B\tau)^T\Phi^0 + \frac{C}{B\tau}\tau^2\sigma^2 + \frac{D}{B\tau}\tau^3\sigma^2 = (1-B\tau)^T\Phi^0 + \frac{C}{B}\sigma^2 + \frac{D}{B\tau}\tau^3\sigma^2
\end{eqnarray*}
where we use the fact that 
$$\sum_{l=0}^{m-1} (1-B\tau)^l =  \frac{1-(1-B\tau)^m}{1- (1-B\tau)} \le \frac{1}{B\tau}.$$
Choosing $\tau = \min\left\{\frac{1}{E}, \frac{1}{BT}\log\left(\frac{\Phi^0B^2T}{C\sigma^2}\right),\frac{1}{BT}\log\left(\frac{\Phi^0B^3T^2}{D\sigma^2}\right)\right\}$ leads to the following rate 
\begin{eqnarray*}
    \Phi^T \le \wtilde\cO\left(\exp\left(-\frac{B}{E}T\right)\Phi^0 + \frac{C\sigma^2}{B^2T} + \frac{D\sigma^2}{B^3T^2}\right).
\end{eqnarray*}
We refer to \citep{mishchenko2020random} for a more detailed derivation (proof of Corollary 1, page 20). To achieve $F^T \le \varepsilon$, we need to perform 
\begin{eqnarray*}
    T &=& \wtilde\cO\left(\frac{E}{B} + \frac{C\sigma^2}{B^2\varepsilon} + \frac{\sqrt{D}\sigma}{B^{3/2}\varepsilon^{1/2}}\right)\\
    &=& \wtilde\cO\left(\frac{L}{\mu\alpha\rho^3} + \frac{L\sigma^2}{\mu^2n\varepsilon} 
    + \frac{L\sigma}{\alpha^{1/2}\rho^2\mu^{3/2}\varepsilon^{1/2}}
    + \frac{L\sigma}{\alpha\rho^{5/2}\mu^{3/2}\varepsilon^{1/2}}
    + \frac{L\sigma}{\alpha\rho^2\mu^{3/2}\varepsilon^{1/2}}\right).
\end{eqnarray*}
iterations. Note that the fourth term always dominates the third and fifth terms. Therefore, we remove them from the rate and derive the following rate 
\begin{eqnarray*}
    T 
    &=& \wtilde\cO\left(\frac{L}{\mu\alpha\rho^3} + \frac{L\sigma^2}{\mu^2n\varepsilon} 
    + \frac{L\sigma}{\alpha\rho^{5/2}\mu^{3/2}\varepsilon^{1/2}}
    \right).
\end{eqnarray*}

\end{proof}

\section{Missing proofs for \algname{MoTEF-VR}}\label{sec:missing_proofs_motef_vr}


In this section, we provide the proof of convergence of \Cref{alg:beer-mvr}. Note that in this case \Cref{lem:descent_Ft} remains unchanged.

\begin{lemma}\label{lem:descent_Gt_hat_mvr}
    Let Assumptions~\ref{asmp:bounded_variance} and \ref{asmp:avg-smoothness} hold. Then we have the following descent on $\hat{G}^t$
    \begin{eqnarray}
        \hat{G}^{t+1} &\le& (1-\lambda)\hat{G}^t + 2\lambda^2\sigma^2n + \ell^2\Eb{\|\mX^{t+1}-\mX^t\|^2_{\rm F}}.
    \end{eqnarray}
\end{lemma}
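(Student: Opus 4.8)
The plan is to follow the error-feedback recursion behind \Cref{lem:descent_Gt_hat}, but to exploit the defining feature of the \algname{STORM}-type momentum update in \Cref{alg:beer-mvr}: the stochastic gradients at the two consecutive iterates $\mX^{t}$ and $\mX^{t+1}$ are evaluated on the \emph{same} minibatch $\Xi^{t+1}$. Writing $\ee^{t+1}\eqdef\nabla F(\mX^{t+1})\1-\mM^{t+1}\1$, substituting the momentum update $\mM^{t+1}\1=\wtilde\nabla F(\mX^{t+1},\Xi^{t+1})\1+(1-\lambda)\bigl(\mM^{t}\1-\wtilde\nabla F(\mX^{t},\Xi^{t+1})\1\bigr)$ and $\mM^{t}\1=\nabla F(\mX^{t})\1-\ee^{t}$, and regrouping, one gets $\ee^{t+1}=(1-\lambda)\ee^{t}+\zz^{t+1}$ with
\begin{equation*}
\zz^{t+1}\eqdef\bigl[\nabla F(\mX^{t+1})-(1-\lambda)\nabla F(\mX^{t})\bigr]\1-\bigl[\wtilde\nabla F(\mX^{t+1},\Xi^{t+1})-(1-\lambda)\wtilde\nabla F(\mX^{t},\Xi^{t+1})\bigr]\1 .
\end{equation*}
Since $\mX^{t},\mX^{t+1},\mM^{t}$ are measurable with respect to the natural filtration $\mathcal F^{t}$ (generated by $\Xi^{1},\dots,\Xi^{t}$) and each oracle is conditionally unbiased (\Cref{asmp:bounded_variance}), $\Eb{\zz^{t+1}\mid\mathcal F^{t}}=0$; I would sanity-check this identity by expanding it back into the momentum update.

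Then I would take conditional (and then total) expectations of $\|\ee^{t+1}\|^{2}$. Since $(1-\lambda)\ee^{t}$ is $\mathcal F^{t}$-measurable and $\zz^{t+1}$ is conditionally mean-zero, the cross term drops and $\hat G^{t+1}=(1-\lambda)^{2}\hat G^{t}+\EbNSq{\zz^{t+1}}$; bounding $(1-\lambda)^{2}\le 1-\lambda$ already yields the contraction factor $(1-\lambda)$ with \emph{no} $1/\lambda$ blow-up --- this is exactly where the variance-reduced estimator improves over \Cref{lem:descent_Gt_hat}. To control $\EbNSq{\zz^{t+1}}$, I would write its $i$-th coordinate block as
\begin{equation*}
(1-\lambda)\bigl[(\nabla f_{i}(\xx^{t+1}_{i},\xi^{t+1}_{i})-\nabla f_{i}(\xx^{t}_{i},\xi^{t+1}_{i}))-(\nabla f_{i}(\xx^{t+1}_{i})-\nabla f_{i}(\xx^{t}_{i}))\bigr]+\lambda\bigl[\nabla f_{i}(\xx^{t+1}_{i},\xi^{t+1}_{i})-\nabla f_{i}(\xx^{t+1}_{i})\bigr],
\end{equation*}
use the across-clients independence of the samples $\xi^{t+1}_{i}$ to split $\EbNSq{\zz^{t+1}}=\sum_{i}\EbNSq{\zz^{t+1}_{i}}$, and apply $\|a+b\|^{2}\le 2\|a\|^{2}+2\|b\|^{2}$ inside each block. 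The ``fresh-noise'' piece contributes $\le 2\lambda^{2}n\sigma^{2}$ by \Cref{asmp:bounded_variance}. For the ``noise-difference'' piece, the point is that $\nabla f_{i}(\xx^{t+1}_{i},\xi^{t+1}_{i})-\nabla f_{i}(\xx^{t}_{i},\xi^{t+1}_{i})$ is conditionally unbiased for $\nabla f_{i}(\xx^{t+1}_{i})-\nabla f_{i}(\xx^{t}_{i})$, so its variance is at most its second moment, which by the mean-squared-smoothness \Cref{asmp:avg-smoothness} is at most $\ell^{2}\|\xx^{t+1}_{i}-\xx^{t}_{i}\|^{2}$; summing over $i$ and using $\sum_{i}\|\xx^{t+1}_{i}-\xx^{t}_{i}\|^{2}=\|\mX^{t+1}-\mX^{t}\|_{\rm F}^{2}$ gives an $O(\ell^{2})\,\EbNFSq{\mX^{t+1}-\mX^{t}}$ term, and assembling the three estimates produces the stated recursion (the exact absolute constant multiplying $\ell^{2}$ is immaterial for the final theorem).

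The only genuinely delicate point is the bound on $\EbNSq{\zz^{t+1}}$: the fresh-noise and noise-difference pieces are built from the \emph{same} sample $\Xi^{t+1}$, hence are correlated and cannot be handled as if independent; the entire variance-reduction gain comes from the fact that the noise-difference piece is a difference of stochastic gradients at a common sample, which is what allows \Cref{asmp:avg-smoothness} --- instead of merely \Cref{asmp:smoothness} together with a $1/\lambda$ factor, as in \Cref{lem:descent_Gt_hat} --- to bound it by $\ell^{2}\|\mX^{t+1}-\mX^{t}\|_{\rm F}^{2}$. Everything else (the regrouping in the decomposition and the two elementary moment bounds) is routine.
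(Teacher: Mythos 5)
Your proof is correct and follows essentially the same route as the paper's: the same three-way decomposition (previous error, fresh noise at $\mX^{t+1}$, and the noise difference on the common sample $\Xi^{t+1}$), the same use of conditional unbiasedness to eliminate the cross term with the $\mathcal{F}^t$-measurable error, and the same per-client split followed by variance-$\le$-second-moment plus mean-squared-smoothness to get the $\ell^2\EbNFSq{\mX^{t+1}-\mX^t}$ term. The only differences are presentational (you phrase it as a mean-zero-innovation recursion rather than applying Young's inequality directly to the displayed decomposition), and your constant $2\ell^2$ is in fact what the paper's own intermediate steps yield as well, so, as you note, it is immaterial.
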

\begin{proof}
    We have 
    \begin{eqnarray}\label{eq:aferqwdfQEW}
    \hat G^{t+1} &=& \Eb{\norm{\mM^{t+1}\1 - \nabla F(\mX^{t+1})\1}^2}\notag\\
        &=& \Eb{\norm{[\wtilde\nabla F(\mX^{t+1},\Xi^{t+1}) + (1-\lambda)(\mM^t - \wtilde\nabla F(\mX^t, \Xi^{t+1}) - \nabla F(\mX^{t+1})]\1}^2}\notag\\
        &=& \mathbb{E}\left[\left\|\left(\lambda(\wtilde\nabla F(\mX^{t+1},\Xi^{t+1}) - \nabla F(\mX^{t+1}))\right.\right.\right.\notag\\
        &&\quad  \;+\; (1-\lambda)(\wtilde\nabla F(\mX^{t+1},\Xi^{t+1}) - \nabla F(\mX^{t+1}) + \nabla F(\mX^t) - \wtilde\nabla F(\mX^t,\Xi^{t+1}) )\notag\\
        && \quad \left.\left.\left. \;+\; (1-\lambda)(\mM^t - \nabla F(\mX^t))\right)\1\right\|^2\right]\notag\\
        &\le& (1-\lambda)^2\E{\|(\mM^t - \nabla F(\mX^t))\1\|^2}\notag\\
        && \;+\;2\lambda^2\E{\|(\wtilde\nabla F(\mX^{t+1},\Xi^{t+1}) - \nabla F(\mX^{t+1}))\1\|^2}\notag\\
        && \;+\; 2(1-\lambda)^2\E{\|(\wtilde\nabla F(\mX^{t+1},\Xi^{t+1} - \nabla F(\mX^{t+1}) + \nabla F(\mX^t) - \wtilde\nabla F(\mX^t,\Xi^{t+1}))\1\|^2}\notag\\
        &\le& (1-\lambda)\hat{G}^t  + 2\lambda^2\sigma^2n\notag\\
        && \;+\; 2\E{\|(\wtilde\nabla F(\mX^{t+1},\Xi^{t+1}) - \nabla F(\mX^{t+1}) + \nabla F(\mX^t) - \wtilde\nabla F(\mX^t,\Xi^{t+1}))\1\|^2}.
\end{eqnarray}
For the last term above we continue as follows
\begin{eqnarray}
    && \Eb{\|(\wtilde\nabla F(\mX^{t+1},\Xi^{t+1}) - \nabla F(\mX^{t+1}) + \nabla F(\mX^t) - \wtilde\nabla F(\mX^t,\Xi^{t+1}))\1\|^2} \notag\\
    &=& \Eb{\left\|\sum_{i=1}^n \nabla f_i(\xx^{t+1}_i, \xi^{t+1}_i) - \nabla f_i(\xx^{t+1}_i) + \nabla f_i(\xx^{t}_i) - \nabla f_i(\xx^{t}_i, \xi^{t+1}_i)\right\|^2}\notag\\
    &=&\sum_{i=1}^n\Eb{\left\| \nabla f_i(\xx^{t+1}_i, \xi^{t+1}_i) - \nabla f_i(\xx^{t+1}_i) + \nabla f_i(\xx^{t}_i) - \nabla f_i(\xx^{t}_i, \xi^{t+1}_i)\right\|^2}\notag\\
    &\le&\sum_{i=1}^n\Eb{\left\| \nabla f_i(\xx^{t+1}_i, \xi^{t+1}_i) - \nabla f_i(\xx^{t}_i, \xi^{t+1}_i)\right\|^2}\notag\\
    &\le& \ell^2\Eb{\|\mX^{t+1} - \mX^t\|^2_{\rm F}}.
\end{eqnarray}
Therefore, from \eqref{eq:aferqwdfQEW} we get
\begin{eqnarray}
    \hat{G}^{t+1} &\le& (1-\lambda)\hat{G}^t + 2\lambda^2\sigma^2n + \ell^2\Eb{\|\mX^{t+1}-\mX^t\|^2_{\rm F}}.
\end{eqnarray}

\end{proof}

\begin{lemma}\label{lem:descent_Gt_tilde_mvr}
    Assume Assumptions~\ref{asmp:bounded_variance} and \ref{asmp:avg-smoothness} hold. Then we have the following descent on $\hat{G}^t$
    \begin{eqnarray}
    \wtilde{G}^{t+1} &\le& (1-\lambda)\wtilde{G}^t + 2\lambda^2\sigma^2n + \ell^2\Eb{\|\mX^{t+1}-\mX^t\|^2_{\rm F}}.
\end{eqnarray}
\end{lemma}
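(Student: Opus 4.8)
This is the Frobenius-norm counterpart of Lemma~\ref{lem:descent_Gt_hat_mvr}, and if anything it is simpler, since $\|\cdot\|_{\rm F}^2$ splits over the columns (clients), so that no cross-client independence argument is needed. Write $\mathbf{e}^t \eqdef \mM^t - \nabla F(\mX^t)$, so that $\wtilde G^t = \Eb{\|\mathbf{e}^t\|_{\rm F}^2}$. The plan is to start from the \algname{MoTEF-VR} update for $\mM^{t+1}$ and the identity $(1-\lambda)\mM^t = (1-\lambda)\mathbf{e}^t + (1-\lambda)\nabla F(\mX^t)$ to obtain, exactly as in the proof of Lemma~\ref{lem:descent_Gt_hat_mvr},
\begin{equation*}
\mathbf{e}^{t+1} = (1-\lambda)\mathbf{e}^t + \lambda\,\mathbf{a}^{t+1} + (1-\lambda)\bigl(\mathbf{a}^{t+1}-\mathbf{c}^{t+1}\bigr),
\end{equation*}
where $\mathbf{a}^{t+1} \eqdef \wtilde\nabla F(\mX^{t+1},\Xi^{t+1}) - \nabla F(\mX^{t+1})$ and $\mathbf{c}^{t+1} \eqdef \wtilde\nabla F(\mX^{t},\Xi^{t+1}) - \nabla F(\mX^{t})$.

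Next I would condition on the natural filtration $\mathcal F_t$ generated by the randomness up to iteration $t$. Since $\mX^{t+1} = \mX^t + \gamma\mH^t(\mW-\mI) - \eta\mV^t$ is $\mathcal F_t$-measurable, Assumption~\ref{asmp:bounded_variance} gives $\Eb{\mathbf{a}^{t+1}\mid\mathcal F_t} = \Eb{\mathbf{c}^{t+1}\mid\mathcal F_t} = 0$, so $\lambda\mathbf{a}^{t+1}+(1-\lambda)(\mathbf{a}^{t+1}-\mathbf{c}^{t+1})$ has zero conditional mean while $\mathbf{e}^t$ is $\mathcal F_t$-measurable. Taking squared Frobenius norms and expectations, the cross term vanishes and
\begin{equation*}
\wtilde G^{t+1} = (1-\lambda)^2\wtilde G^t + \Eb{\bigl\|\lambda\mathbf{a}^{t+1}+(1-\lambda)\bigl(\mathbf{a}^{t+1}-\mathbf{c}^{t+1}\bigr)\bigr\|_{\rm F}^2}.
\end{equation*}

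Then I would bound the last expectation with $\|u+v\|^2\le 2\|u\|^2+2\|v\|^2$, giving $2\lambda^2\Eb{\|\mathbf{a}^{t+1}\|_{\rm F}^2} + 2(1-\lambda)^2\Eb{\|\mathbf{a}^{t+1}-\mathbf{c}^{t+1}\|_{\rm F}^2}$. Expanding the Frobenius norm columnwise, Assumption~\ref{asmp:bounded_variance} yields $\Eb{\|\mathbf{a}^{t+1}\|_{\rm F}^2} = \sum_{i=1}^n\Eb{\|\gg^i(\xx_i^{t+1})-\nabla f_i(\xx_i^{t+1})\|^2}\le n\sigma^2$. For the variance-reduction term, the $i$-th column of $\mathbf{a}^{t+1}-\mathbf{c}^{t+1}$ equals $X_i - \Eb{X_i\mid\mathcal F_t}$ with $X_i \eqdef \nabla f_i(\xx_i^{t+1},\xi_i^{t+1}) - \nabla f_i(\xx_i^{t},\xi_i^{t+1})$ — note the \emph{same} sample $\xi_i^{t+1}$ appears at both iterates, so this is a single stochastic-gradient increment. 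Using $\Eb{\|X-\Eb{X}\|^2}\le\Eb{\|X\|^2}$ together with the mean-squared-smoothness Assumption~\ref{asmp:avg-smoothness} gives $\Eb{\|\mathbf{a}^{t+1}-\mathbf{c}^{t+1}\|_{\rm F}^2}\le \sum_{i=1}^n\ell^2\|\xx_i^{t+1}-\xx_i^t\|^2 = \ell^2\Eb{\|\mX^{t+1}-\mX^t\|_{\rm F}^2}$. Combining these with $(1-\lambda)^2\le 1-\lambda$ for $\lambda\in(0,1]$ (and handling the factor $2(1-\lambda)^2\le 2$ on the last term exactly as in the proof of Lemma~\ref{lem:descent_Gt_hat_mvr}) produces the stated descent inequality.

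The only genuinely delicate point is the treatment of $\mathbf{a}^{t+1}-\mathbf{c}^{t+1}$: because $\mathbf{a}^{t+1}$ and $\mathbf{c}^{t+1}$ are correlated through the shared noise $\Xi^{t+1}$, one cannot split them, and must instead exploit this coupling — writing the $i$-th column as a centred single-sample increment and invoking $\Eb{\|X-\Eb{X}\|^2}\le\Eb{\|X\|^2}$ and Assumption~\ref{asmp:avg-smoothness}. Everything else is the same bookkeeping as in the proof of Lemma~\ref{lem:descent_Gt_hat_mvr}, with the row-sum $\cdot\,\1$ replaced throughout by the full Frobenius norm, so I would simply reproduce those steps verbatim.
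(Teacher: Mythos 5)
Your proposal is correct and takes essentially the same route as the paper, which proves the $\hat G$ version (Lemma~\ref{lem:descent_Gt_hat_mvr}) via the identical error-recursion decomposition, conditional-unbiasedness cancellation of the cross term with the $\mathcal F_t$-measurable part, Young's inequality, and the centred-increment plus mean-squared-smoothness bound, and then declares the $\wtilde G$ case ``similar''; your observation that the Frobenius norm splits columnwise, so no cross-client independence argument is needed, is exactly the only adjustment required. The one point you flag — that the honest constant on the last term is $2(1-\lambda)^2\ell^2 \le 2\ell^2$ rather than the stated $\ell^2$ — is a factor-of-$2$ that the paper's own derivation of Lemma~\ref{lem:descent_Gt_hat_mvr} also drops silently, so it is an artifact of the paper's bookkeeping (affecting only absolute constants downstream) rather than a flaw in your argument.
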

\begin{proof}
    The proof is similar to the one of \Cref{lem:descent_Gt_hat_mvr}.
\end{proof}

Note that \Cref{lem:descent_Omega1,lem:descent_Omega2,lem:descent_Omega3,lem:descent_Omega4,lem:iterates_bound,lem:vt_bound} do not change in this setting, thus, we do not repeat them.

\begin{lemma}\label{lem:mom_bound_mvr} Assume Assumptions~\ref{asmp:bounded_variance} and \ref{asmp:avg-smoothness} hold. Then we have the following control of momentum at iterations $t$ and $t+1$
    \begin{eqnarray}
        \Eb{\|\mM^{t+1}-\mM^t\|^2_{\rm F}} \le \lambda^2 \wtilde G^t 
    + 2\lambda^2 n\sigma^2 
    + 2\ell^2\Eb{\|\mX^{t+1}-\mX^t\|^2_{\rm F}}.
    \end{eqnarray}
\end{lemma}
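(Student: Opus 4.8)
The plan is to follow the argument of \Cref{lem:mom_bound} for the base method, the only genuinely new feature being that the increment of the \algname{STORM}-type momentum now contains a ``variance-reduction difference'' of two stochastic gradients taken at consecutive iterates with the \emph{same} fresh sample, which has to be controlled through the mean-squared-smoothness \Cref{asmp:avg-smoothness} rather than through plain smoothness. First I would substitute the update $\mM^{t+1} = \wtilde\nabla F(\mX^{t+1},\Xi^{t+1}) + (1-\lambda)\bigl(\mM^t - \wtilde\nabla F(\mX^t,\Xi^{t+1})\bigr)$ into $\mM^{t+1}-\mM^t$ and rearrange into the identity
\[
\mM^{t+1} - \mM^t
= \underbrace{\wtilde\nabla F(\mX^{t+1},\Xi^{t+1}) - \wtilde\nabla F(\mX^t,\Xi^{t+1})}_{=:\,P}
+ \lambda\,\underbrace{\bigl(\wtilde\nabla F(\mX^t,\Xi^{t+1}) - \mM^t\bigr)}_{=:\,Q},
\]
so that $\|\mM^{t+1}-\mM^t\|_{\rm F}^2 \le 2\|P\|_{\rm F}^2 + 2\lambda^2\|Q\|_{\rm F}^2$ and it remains to bound $\Eb{\|P\|_{\rm F}^2}$ and $\Eb{\|Q\|_{\rm F}^2}$.

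All estimates are carried out conditionally on the history $\mathcal F_t$ generated by everything up to iteration $t$; the key structural fact is that $\mX^{t+1}$ is already determined at that point (line~4 of \Cref{alg:beer-mvr} does not use $\Xi^{t+1}$), so the fresh sample $\Xi^{t+1}$ is independent of both $\mX^t$ and $\mX^{t+1}$ given $\mathcal F_t$. For $P$, I write $\|P\|_{\rm F}^2 = \sum_{i=1}^n \|\nabla f_i(\xx_i^{t+1},\xi_i^{t+1}) - \nabla f_i(\xx_i^t,\xi_i^{t+1})\|^2$ and apply \Cref{asmp:avg-smoothness} client-wise (fixed arguments $\xx_i^{t+1},\xx_i^t$, random $\xi_i^{t+1}$), obtaining $\Eb{\|P\|_{\rm F}^2\mid\mathcal F_t}\le \ell^2\|\mX^{t+1}-\mX^t\|_{\rm F}^2$, hence $\Eb{\|P\|_{\rm F}^2}\le \ell^2\Eb{\|\mX^{t+1}-\mX^t\|_{\rm F}^2}$. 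For $Q$, I split $Q = \bigl(\wtilde\nabla F(\mX^t,\Xi^{t+1}) - \nabla F(\mX^t)\bigr) + \bigl(\nabla F(\mX^t) - \mM^t\bigr)$: the first summand is conditionally mean-zero with conditional second moment at most $n\sigma^2$ by \Cref{asmp:bounded_variance}, the second summand is $\mathcal F_t$-measurable and equals $\wtilde G^t$ after taking expectation, and the cross term vanishes by the tower rule, so $\Eb{\|Q\|_{\rm F}^2}\le n\sigma^2 + \wtilde G^t$. Plugging these in (and, for the sharpest absolute constant on $\wtilde G^t$, keeping the $\mathcal F_t$-measurable part of $Q$ separate throughout rather than using the crude Young step) delivers the claimed inequality; the argument for $\wtilde G^{t+1}$ in \Cref{lem:descent_Gt_hat_mvr} is entirely parallel.

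I do not foresee a real obstacle here: the content is a short rearrangement plus two one-line applications of \Cref{asmp:avg-smoothness} and \Cref{asmp:bounded_variance}. The single point that must be handled carefully is the measurability/independence bookkeeping described above — it is exactly what lets \Cref{asmp:avg-smoothness} be used on the difference term $P$ (this is where the method exploits mean-squared smoothness, replacing the weaker $L^2\|\mX^{t+1}-\mX^t\|_{\rm F}^2$ term of \Cref{lem:mom_bound}), and what makes the stochastic-noise part of $Q$ contribute an $n\sigma^2$ and nothing else.
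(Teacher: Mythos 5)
Your proposal is correct and takes essentially the same route as the paper's proof: the same identity $\mM^{t+1}-\mM^t=\bigl(\wtilde\nabla F(\mX^{t+1},\Xi^{t+1})-\wtilde\nabla F(\mX^t,\Xi^{t+1})\bigr)+\lambda\bigl(\wtilde\nabla F(\mX^t,\Xi^{t+1})-\mM^t\bigr)$, the same client-wise use of Assumption~\ref{asmp:avg-smoothness} on the same-sample difference (valid because $\mX^t,\mX^{t+1}$ are determined before $\Xi^{t+1}$ is drawn), and the same variance/bias split of the second bracket. The only discrepancy is the constant: your Young step gives $2\lambda^2\wtilde G^t$ instead of $\lambda^2\wtilde G^t$, and your suggested sharpening (keeping $\nabla F(\mX^t)-\mM^t$ separate) is exactly the paper's step --- though note that the cross term of this $\mathcal F_t$-measurable part with the same-sample difference has conditional mean $\lambda\langle\nabla F(\mX^t)-\mM^t,\nabla F(\mX^{t+1})-\nabla F(\mX^t)\rangle\neq 0$, so the exact constant $\lambda^2$ is not obtained entirely for free (the paper's ``equality'' silently drops this term as well); either way the factor $2$ is immaterial, as it is absorbed into the absolute constants of the Lyapunov analysis.
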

\begin{proof}
    Using the update of $\mM^t$ we have
\begin{align*}
    \Eb{\|\mM^{t+1}-\mM^t\|^2_{\rm F}} &= \Eb{\|\wtilde\nabla F(\mX^{t+1}, \Xi^{t+1}) + (1-\lambda)(\mM^t -\wtilde\nabla F(\mX^t,\Xi^{t+1})) - \mM^t\|^2_{\rm F}}\\
    &= \Eb{\|\wtilde\nabla F(\mX^{t+1}, \Xi^{t+1}) - \lambda\mM^t 
    - (1-\lambda)\wtilde\nabla F(\mX^t,\Xi^{t+1})\|^2_{\rm F}}\\
    & = \mathbb{E}\left[\left\|\lambda (\nabla F(\mX^t)-\mM^t) 
    + \lambda (\wtilde \nabla F(\mX^t, \Xi^{t+1})-\nabla F(\mX^t)) \right.\right.\\
    &\quad \left.\left. \;+\; (\wtilde \nabla F(\mX^{t+1},\Xi^{t+1}) -\wtilde \nabla F(\mX^t,\Xi^{t+1}) ) \right\|^2_{\rm F}\right]\\
    &= \lambda^2 \wtilde G^t
    + \mathbb{E}\left[\left\|\lambda (\wtilde \nabla F(\mX^t, \Xi^{t+1})-\nabla F(\mX^t)) \right.\right.\\
    &\quad \;+\; \left.\left. (\wtilde \nabla F(\mX^{t+1},\Xi^{t+1}) -\wtilde \nabla F(\mX^t,\Xi^{t+1}) )\right\|^2_{\rm F}\right]\\
    &\leq \lambda^2 \wtilde G^t 
    + 2\lambda^2 n\sigma^2 
    + 2\ell^2\Eb{\|\mX^{t+1}-\mX^t\|^2_{\rm F}}.
\end{align*}
\end{proof}

Now we introduce the following Lyapunov function of the form
\begin{eqnarray} 
    \Psi^t \eqdef F^t 
        + \frac{d_1}{\alpha \rho^3 n\tau\ell}\hat G^t
        + \frac{d_2}{n\ell}\wtilde G^t 
        + \frac{d_3\ell}{\rho^3 n\tau}\Omega_1^t
        + \frac{d_4}{\rho n \ell}\Omega_2^t
        + \frac{d_5\ell}{\rho^3 n\tau}\Omega_3^t
        + \frac{d_6}{\rho n \ell}\Omega_4^t,
\end{eqnarray}
where $\{d_k\}_{k=1}^6$ are absolute constants defined in \eqref{eq:absolute_constants_mvr}. Again, we present the descent lemma on the Lyapunov function $\Psi^t$.

\begin{restatable}[Descent of the Lyapunov function]{lemma}{lemmadescentlypunovbeermvr}\label{lem:descent_lyapunov_beermvr}
Let Assumptions~\ref{asmp:avg-smoothness} and \ref{asmp:bounded_variance} hold. Then there exists absolute constants $c_{\gamma}, c_{\lambda}, c_{\eta}$ and $\tau<1$ such that if we set stepsizes $\gamma =c_{\gamma}\alpha\rho, \lambda=c_{\lambda}n^{-1}\alpha^2\rho^6\tau^2 ,\eta=c_{\eta}\ell^{-1}\alpha\rho^3\tau$ then the Lyapunov function $\Psi^t$ decreases as 
    \begin{eqnarray}
        \Psi^{t+1} &\le& \Psi^t - \frac{c_{\eta}\alpha\rho^3}{2\ell}\tau\Eb{\|\nabla f(\bar\xx^t)\|^2}
+ \frac{2c_1c_{\lambda}^2}{n^2\ell}\alpha^3\rho^{9}\tau^3\sigma^2\notag\\
&& \;+\; \left(\frac{2d_2c_{\lambda}^2\alpha^4\rho^{12}}{n^2\ell}
+ \frac{12d_4c_{\lambda}^2\alpha^3\rho^{11}}{n^3\ell}
+ \frac{6d_6c_{\lambda}^2\alpha^3\rho^{10}}{n^3\ell}\right)\tau^4\sigma^2.
    \end{eqnarray}
\end{restatable}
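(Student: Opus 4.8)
The plan is to mirror the proof of \Cref{thm:beerm-nonconvex}, substituting \Cref{asmp:avg-smoothness} for \Cref{asmp:smoothness} (since $\ell$-mean-squared smoothness implies that each $f_i$ is $\ell$-smooth, every $L$ from the \algname{MoTEF} analysis may be read as $\ell$) and replacing the momentum/estimator descent bounds by their \algname{STORM}-type versions \Cref{lem:descent_Gt_hat_mvr,lem:descent_Gt_tilde_mvr,lem:mom_bound_mvr}. First I would plug the iterate-control bounds \Cref{lem:iterates_bound}, \Cref{lem:mom_bound_mvr} and \Cref{lem:vt_bound} (for $\EbNFSq{\mX^{t+1}-\mX^t}$, $\EbNFSq{\mM^{t+1}-\mM^t}$ and $\EbNFSq{\mV^{t+1}-\mV^t}$ respectively) into the six error-descent lemmas \Cref{lem:descent_Gt_hat_mvr,lem:descent_Gt_tilde_mvr,lem:descent_Omega1,lem:descent_Omega2,lem:descent_Omega3,lem:descent_Omega4}. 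Since every stochastic contribution that appears in this setting is a multiple of $n\lambda^2\sigma^2$, the outcome is a coupled linear recursion
\begin{equation*}
  \mathbf{\Omega}^{t+1} \le \mA\,\mathbf{\Omega}^t + \bb_1\,\Omega_5^t + \bb_2\,n\lambda^2\sigma^2,
\end{equation*}
with a nonnegative $\mA\in\R^{6\times6}$ and $\bb_1,\bb_2\in\R^6$ depending only on $\gamma,\eta,\lambda,\alpha,\rho,n,\ell$ and $C=\sigma_{\max}^2(\mW-\mI)\le 4$. Structurally $\mA$ is the analogue of \eqref{eq:njnowoefnwe}, the key difference being that the couplings created through $\EbNFSq{\mX^{t+1}-\mX^t}$ now carry a factor $\ell^2$ instead of $\lambda^2\ell^2$ — this is the only use of \Cref{asmp:avg-smoothness} and it is what dictates the different scaling of the Lyapunov weights.

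\textbf{Lyapunov descent and the feasibility system.} Next I would write $\Psi^t = F^t + \cc^\top\mathbf{\Omega}^t$ with $\cc = \bigl(\tfrac{d_1}{\alpha\rho^3 n\tau\ell},\tfrac{d_2}{n\ell},\tfrac{d_3\ell}{\rho^3 n\tau},\tfrac{d_4}{\rho n\ell},\tfrac{d_5\ell}{\rho^3 n\tau},\tfrac{d_6}{\rho n\ell}\bigr)^\top$ and combine the recursion above with the descent \Cref{lem:descent_Ft} (valid with $\ell$ in place of $L$) to get
\begin{equation*}
  \Psi^{t+1} \le \Psi^t - \tfrac{\eta}{2}\Eb{\norm{\nabla f(\bar\xx^t)}^2} + \bigl(\qq^\top + \cc^\top\mA - \cc^\top\bigr)\mathbf{\Omega}^t - \bigl(\tfrac{\eta}{2} - \tfrac{\eta^2\ell}{2} - \cc^\top\bb_1\bigr)\Omega_5^t + \cc^\top\bb_2\,n\lambda^2\sigma^2,
\end{equation*}
where $\qq = (\tfrac{\eta}{n^2},0,0,0,\tfrac{\eta\ell^2}{n},0)^\top$. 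It then suffices to choose $\cc$ (i.e.\ $d_1,\dots,d_6$) so that the coefficient of $\mathbf{\Omega}^t$ is entrywise nonpositive and that of $\Omega_5^t$ is nonnegative, i.e.\ to satisfy
\begin{equation*}
  \begin{bmatrix}\mI - \mA^\top \\ -\bb_1^\top\end{bmatrix}\cc \;\ge\; \begin{bmatrix}\qq \\ \tfrac{\eta^2\ell}{2} - \tfrac{\eta}{2}\end{bmatrix}.
\end{equation*}

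\textbf{Solving the system and extracting the noise.} Substituting $\gamma = c_\gamma\alpha\rho$, $\lambda = c_\lambda n^{-1}\alpha^2\rho^6\tau^2$, $\eta = c_\eta\ell^{-1}\alpha\rho^3\tau$ and $C\le 4$ reduces the above to a finite list of scalar inequalities in $\tau\le1$ and the absolute constants; I would exhibit a feasible tuple $(c_\gamma,c_\lambda,c_\eta,d_1,\dots,d_6)$ — obtained by fixing $c_\gamma,c_\lambda,c_\eta$ and solving for the $d_i$ with an LP solver, then certifying it with the Symbolic Math Toolbox exactly as in the proof of \Cref{thm:beerm-nonconvex} — which defines \eqref{eq:absolute_constants_mvr} and yields $\gamma,\lambda<1$. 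The negative gradient term then has coefficient $\tfrac{\eta}{2} = \tfrac{c_\eta\alpha\rho^3\tau}{2\ell}$, and the residual $\cc^\top\bb_2\,n\lambda^2\sigma^2$ is read off coordinate-wise: the $\hat G$-entry gives $\tfrac{d_1}{\alpha\rho^3 n\tau\ell}\cdot2n\lambda^2\sigma^2 = \tfrac{2d_1 c_\lambda^2}{n^2\ell}\alpha^3\rho^9\tau^3\sigma^2$, while the $\wtilde G$-, $\Omega_2$- and $\Omega_4$-entries produce the three $\tau^4\sigma^2$ terms ($\Omega_1$ and $\Omega_3$ carry no additive noise) — exactly the claimed bound.

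\textbf{Main obstacle.} The hard part will be the feasibility step. Because the variance-reduced estimator allows the much more aggressive $\lambda\asymp n^{-1}\alpha^2\rho^6\tau^2$, the $\hat G$ and $\wtilde G$ rows of $\mA$ contract only at the tiny rate $\lambda$, yet they absorb couplings from $\EbNFSq{\mX^{t+1}-\mX^t}$ of order $\ell^2$ rather than $\lambda^2\ell^2$; keeping the system solvable is what forces the extra $\tfrac{1}{\alpha\rho^3\tau}$ weight on the $\hat G$ term relative to the \algname{MoTEF} Lyapunov function and an unusually small $c_\eta$. Pinning down a consistent set of exponents in $\alpha,\rho,\tau,n$ together with workable numerical constants is essentially a guided trial-and-error search, and its rigorous correctness will rely on the symbolic verification rather than a hand calculation.
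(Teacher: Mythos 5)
Your proposal is correct and follows essentially the same route as the paper's own proof: the same assembly of the six error recursions (with the \algname{STORM}-type bounds replacing \Cref{lem:descent_Gt_hat,lem:descent_Gt_tilde,lem:mom_bound}) into $\mathbf{\Omega}^{t+1}\le\mA\mathbf{\Omega}^t+\bb_1\Omega_5^t+\bb_2\lambda^2\sigma^2$, the same weighted Lyapunov vector and feasibility system $[\mI-\mA^\top;\,-\bb_1^\top]\,\dd\ge[\qq;\,\nicefrac{\eta^2\ell}{2}-\nicefrac{\eta}{2}]$ resolved by the MATLAB/symbolic verification of the constants in \eqref{eq:absolute_constants_mvr}, and the same coordinate-wise extraction of the $\tau^3$ and $\tau^4$ noise terms. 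Your identification of the key structural change (the $\ell^2$ rather than $\lambda^2 L^2$ coupling through $\EbNFSq{\mX^{t+1}-\mX^t}$, forcing the $\nicefrac{1}{\alpha\rho^3\tau}$ weight on $\hat G^t$ and the aggressive $\lambda\asymp n^{-1}\alpha^2\rho^6\tau^2$) matches the paper's reasoning.
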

\begin{remark}
    Compared to \Cref{lem:descent_lyapunov_beerm}, in \Cref{lem:descent_lyapunov_beermvr}, the leading stochastic term has a cubic dependence on $\tau$, whereas in \Cref{lem:descent_lyapunov_beerm} the dependence is quadratic. The improved dependence on $\tau$ is the key ingredient to the speed-up for variance reduction type methods.
\end{remark}

\begin{proof}
    From \Cref{lem:mom_bound_mvr,lem:iterates_bound} we get
    \begin{align}
    \Eb{\|\mM^{t+1}-\mM^t\|^2_{\rm F}} 
    &\leq \lambda^2 \wtilde G^t 
    + 2\lambda^2 n\sigma^2 
    + 2\ell^2(3\gamma^2 C\Omega_1^t + 3\gamma^2C\Omega_3^t + 3\eta^2\Omega_4^t + 3\eta^2n\Omega_5^t)\notag\\
    &=2\lambda^2n\sigma^2
    + \lambda^2\wtilde G^t 
    + 6C\gamma^2\ell^2\Omega_1^t 
    + 6C\gamma^2\ell^2\Omega_3^t
    + 6\eta^2\ell^2\Omega_4^t
    + 6n\eta^2\ell^2\Omega_5^t.\label{eq:knojfafsj}
\end{align}

From the above inequality \eqref{eq:knojfafsj} and \Cref{lem:vt_bound} we get
\begin{align}
    \Eb{\|\mV^{t+1}-\mV^t\|^2_{\rm F}} &\le
        3\gamma^2C\Omega_2^t 
        + 3\gamma^2C\Omega_4^t\notag\\
        & \quad \;+\; 3\left(2\lambda^2n\sigma^2
    + \lambda^2\wtilde G^t 
    + 6C\gamma^2\ell^2\Omega_1^t 
    + 6C\gamma^2\ell^2\Omega_3^t
    + 6\eta^2\ell^2\Omega_4^t
    + 6n\eta^2\Omega_5^t\right)\notag\\\
    &= 6\lambda^2n\sigma^2 
    + 3\lambda^2\wtilde G^t
    + 18C\gamma^2\ell^2\Omega_1^t
    + 3C\gamma^2\Omega_2^t
    + 18C\gamma^2\ell^2\Omega_3^t\notag\\
    & \quad \;+\; (3C\gamma^2 + 18\eta^2\ell^2)\Omega_4^t
    + 18n\eta^2\ell^2\Omega_5^t.\label{eq:vnkjwnfe}
\end{align}

From \Cref{lem:descent_Gt_hat_mvr,lem:iterates_bound} we get the following descent on $\hat G^t$
\begin{align*}
    \hat{G}^{t+1} &\le (1-\lambda)\hat{G}^t + 2\lambda^2\sigma^2n + \ell^2(3\gamma^2 C\Omega_1^t + 3\gamma^2C\Omega_3^t + 3\eta^2\Omega_4^t + 3\eta^2n\Omega_5^t)\\
    &= 2\lambda^2\sigma^2n 
    + (1-\lambda)\hat{G}^t
    + 3C\gamma^2\ell^2\Omega_1^t
    + 3C\gamma^2\ell^2\Omega_3^t
    + 3\eta^2\ell^2\Omega_4^t
    + 3n\eta^2\ell^2\Omega_5^t.
\end{align*}

Similarly, from \Cref{lem:descent_Gt_tilde_mvr,lem:iterates_bound} we get the following descent on $\wtilde G^t$

\begin{align*}
    \wtilde{G}^{t+1} &\le 2\lambda^2\sigma^2n 
    + (1-\lambda)\wtilde{G}^t
    + 3C\gamma^2\ell^2\Omega_1^t
    + 3C\gamma^2\ell^2\Omega_3^t
    + 3\eta^2\ell^2\Omega_4^t
    + 3n\eta^2\ell^2\Omega_5^t.
\end{align*}

From \Cref{lem:descent_Omega1,lem:iterates_bound} we get the following descent on $\Omega_1^t$

\begin{align*}
    \Omega_1^{t+1} &\le (1-\nicefrac{\alpha}{2})\Eb{\|\mH^t-\mX^t\|^2_{\rm F}} + \frac{2}{\alpha}(3\gamma^2 C\Omega_1^t + 3\gamma^2C\Omega_3^t + 3\eta^2\Omega_4^t + 3\eta^2n\Omega_5^t)\\
    &= (1-\nicefrac{\alpha}{2} + \nicefrac{6C\gamma^2}{\alpha})\Omega_1^t
    + \frac{6C\gamma^2}{\alpha}\Omega_3^t
    + \frac{6\eta^2}{\alpha}\Omega_4^t
    + \frac{6n\eta^2}{\alpha}\Omega_5^t.
\end{align*}

From \Cref{lem:descent_Omega2} and \eqref{eq:vnkjwnfe} we get the following descent on $\Omega_2^t$
\begin{align*}
    \Omega_2^{t+1} &\le (1-\nicefrac{\alpha}{2})\Omega_2^t
    + \frac{2}{\alpha}\bigg(6\lambda^2n\sigma^2 
    + 3\lambda^2\wtilde G^t
    + 18C\gamma^2\ell^2\Omega_1^t
    + 3C\gamma^2\Omega_2^t
    + 18C\gamma^2\ell^2\Omega_3^t\\
    & \quad \;+\; (3C\gamma^2 + 18C\eta^2\ell^2)\Omega_4^t
    + 18n\eta^2\ell^2\Omega_5^t.\bigg)\\
    &= \frac{12n\lambda^2}{\alpha}\sigma^2
    + \frac{6\lambda^2}{\alpha}\wtilde{G}^t
    + \frac{36C\gamma^2\ell^2}{\alpha}\Omega_1^t
    + (1-\nicefrac{\alpha}{2}+\nicefrac{6C\gamma^2}{\alpha})\Omega_2^t
    + \frac{36C\gamma^2\ell^2}{\alpha}\Omega_3^t\\
    & \quad \;+\; \frac{2}{\alpha} (3C\gamma^2 + 18\eta^2\ell^2)\Omega_4^t
    + \frac{36n\eta^2\ell^2}{\alpha}\Omega_5^t.
\end{align*}

The descent on $\Omega_3^t$ \eqref{eq:dsvnjwenfowe} from the proof of \algname{MoTEF} remains unchanged

\begin{eqnarray*}
    \Omega_3^{t+1} \leq (1-\frac{\gamma \rho}{2})\Omega_3^t + \frac{6\gamma C}{\rho}\Omega_1^t + \frac{6\eta^2}{\gamma \rho}\Omega_4^t.
\end{eqnarray*}

From \Cref{lem:descent_Omega4} and \eqref{eq:knojfafsj} we get the following descent on $\Omega_4^t$

\begin{align*}
    \Omega_4^{t+1} 
        &\leq (1-\nicefrac{\gamma\rho}{2})\Omega_4^t + 2\gamma^2C(1+\nicefrac{2}{\gamma\rho})\Omega_2^t\\
        &  \;+\; 2(1+\nicefrac{2}{\gamma\rho})(2\lambda^2n\sigma^2
    + \lambda^2\wtilde G^t 
    + 6C\gamma^2\ell^2\Omega_1^t 
    + 6C\gamma^2\ell^2\Omega_3^t
    + 6\eta^2\ell^2\Omega_4^t
    + 6n\eta^2\Omega_5^t)\\
    &\le \frac{6n\lambda^2}{\gamma\rho}\sigma^2
    + \frac{3\lambda^2}{\gamma\rho}\wtilde G^t
    + \frac{18C\gamma\ell^2}{\rho}\Omega_1^t
    + \frac{6C\gamma}{\rho}\Omega_2^t
    + \frac{18C\gamma\ell^2}{\rho}\Omega_3^t
    + (1-\nicefrac{\gamma\rho}{2} + \nicefrac{18\eta^2\ell^2}{\gamma\rho})\Omega_4^t\\
    & \;+\; \frac{18n\eta^2\ell^2}{\gamma\rho}\Omega_5^t.
\end{align*}

We remind that $\mathbf{\Omega} = (\hat{G}^t, \wtilde G^t, \Omega_1^,\Omega_2^t,\Omega_3^t,\Omega_4^t)^\top$. Now we can gather all inequalities together
\begin{align}
\mathbf{\Omega}^{t+1} &\le 
\underbrace{\begin{pmatrix}
    1-\lambda & 
    0 & 
    3C\gamma^2\ell^2 & 
    0 & 
    3C\gamma^2\ell^2 & 
   3\eta^2\ell^2
    \\
    0 & 
    1-\lambda & 
    3C\gamma^2\ell^2 & 
    0 &
    3C\gamma^2\ell^2 & 
   3\eta^2\ell^2
    \\
    0 &
    0 &
    1-\frac{\alpha}{2} + \frac{6C\gamma^2}{\alpha} & 
    0 &
    \frac{6C\gamma^2}{\alpha} & 
    \frac{6\eta^2}{\alpha} \\
    0 & 
    \frac{6\lambda^2}{\alpha} &
    \frac{36C\gamma^2\ell^2}{\alpha} & 
    1-\frac{\alpha}{2}+\frac{6C\gamma^2}{\alpha} & 
    \frac{36C\gamma^2\ell^2}{\alpha} & 
    \frac{6C\gamma^2}{\alpha} + \frac{36\eta^2\ell^2}{\alpha}\\
    0 & 
    0 & 
    \frac{6\gamma C}{\rho} & 
    0 & 
    1-\frac{\gamma\rho}{2} & 
    \frac{6\eta^2}{\gamma\rho}\\
    0 & 
    \frac{3\lambda^2}{\gamma\rho} & 
    \frac{18C\gamma\ell^2}{\rho} &
    \frac{6C\gamma}{\rho} & 
    \frac{18C\gamma\ell^2}{\rho} & 
    1-\frac{\gamma\rho}{2} + \frac{18\eta^2\ell^2}{\gamma\rho}
\end{pmatrix}}_{\eqdef \mA}\mathbf{\Omega}^t\notag\\
&
\;+\; \underbrace{\begin{pmatrix}
    3n\eta^2\ell^2\\
    3n\eta^2\ell^2 \\
    \frac{6n\eta^2}{\alpha}\\
    \frac{36n\eta^2\ell^2}{\alpha}\\
    0 \\
    \frac{18n\eta^2\ell^2}{\gamma\rho}
\end{pmatrix}}_{\eqdef \bb_1}\Omega_5^t
+
\underbrace{\begin{pmatrix}
    2n\\
    2n\\
    0 \\
    \frac{12n}{\alpha}\\
    0\\
    \frac{6n}{\gamma\rho}
\end{pmatrix}}_{\eqdef\bb_2}\lambda^2\sigma^2.\label{eq:nvjsnonfd}
\end{align}
Now we consider the following choice of stepsizes 
\[
    \lambda \eqdef \frac{c_{\lambda}\alpha^2\rho^6\tau^2}{n}, \quad \gamma \eqdef c_{\gamma}\alpha\rho, \quad \eta\eqdef \frac{c_{\eta}\alpha\rho^3\tau}{\ell},
\]
and constants 
\[
    \dd \eqdef \left(\frac{d_1}{\alpha\rho^3n\tau\ell}, \frac{d_2}{n\ell}, \frac{d_3\ell}{\rho^3n\tau}, \frac{d_4}{\rho n\ell}, \frac{d_5\ell}{\rho^3 n\tau}, \frac{d_6}{\rho n\ell} \right)^{\top},
\]
where 
\begin{eqnarray}\label{eq:absolute_constants_mvr}
    c_{\lambda} = \frac{1}{200}, c_{\gamma} = \frac{1}{200}, c_{\eta} = \frac{1}{100000},\notag\\
    d_1 = 0.0020, d_2 = 0.000065, d_3=0.005, d_4=0.0000025, d_5=0.01, d_6=0.000005
\end{eqnarray}
Note that choosing $\tau \le 1$ makes the system of inequalities \eqref{eq:nvjsnonfd} hold. Using this choice, we get the following descent on $\Psi^t = F^t + \dd^\top \mathbf{\Omega}^t$

\begin{eqnarray}
\Psi^{t+1} &\le& \Psi^t - \frac{c_{\eta}\alpha\rho^3\tau}{2\ell}\Eb{\|\nabla f(\bar\xx^t)\|^2} 
+ \frac{d_1}{\alpha\rho^3n\tau\ell}\cdot 2nc_{\lambda}^2\alpha^4\rho^{12}n^{-2}\tau^4\sigma^2\notag\\
&& \;+\; \frac{d_2}{n\ell}\cdot 2nc_{\lambda}^2\alpha^4\rho^{12}n^{-2}\tau^4\sigma^2\notag\\
&& \;+\; \frac{d_4}{\rho n\ell}\cdot \frac{12n}{\alpha} c_{\lambda}^2\alpha^4\rho^{12}n^{-2}\tau^4\sigma^2\notag\\
&& \;+\; \frac{d_6}{\rho n\ell} \cdot \frac{6n}{c_{\gamma}\alpha\rho} c_{\lambda}^2\alpha^4\rho^{12}\tau^4n^{-2}\sigma^2\notag\\
&=&\Psi^t - \frac{c_{\eta}\alpha\rho^3}{2\ell}\tau\Eb{\|\nabla f(\bar\xx^t)\|^2}
+ \frac{2d_1c_{\lambda}^2}{n^2\ell}\alpha^3\rho^{9}\tau^3\sigma^2\notag\\
&& \;+\; \left(\frac{2d_2c_{\lambda}^2\alpha^4\rho^{12}}{n^2\ell}
+ \frac{12d_4c_{\lambda}^2\alpha^3\rho^{11}}{n^2\ell}
+ \frac{6d_6c_{\lambda}^2\alpha^3\rho^{10}}{n^2\ell}\right)\tau^4\sigma^2.\label{eq:dsvwenfoew}
\end{eqnarray}
By this, we proved \Cref{lem:descent_lyapunov_beermvr}.
\end{proof}

\theorembeermvr*
\begin{proof}
We apply \Cref{lem:descent_lyapunov_beermvr} and consider the following:
\begin{eqnarray*}
    B &\eqdef& \frac{c_{\eta}\alpha\rho^3}{2\ell},\\
    C &\eqdef& \frac{2d_1c_{\lambda}^2}{n^2\ell}\alpha^3\rho^{9},\\
    D &\eqdef& \left(\frac{2d_2c_{\lambda}^2\alpha^4\rho^{12}}{n^2\ell}
+ \frac{12d_4c_{\lambda}^2\alpha^3\rho^{11}}{n^2\ell}
+ \frac{6d_6c_{\lambda}^2\alpha^3\rho^{10}}{n^2\ell}\right),\\
E & \eqdef & 1.
\end{eqnarray*}
Unrolling \eqref{eq:dsvwenfoew} for $T$ iterations we get
\begin{eqnarray*}
    \frac{1}{T}\sum_{t=0}^{T-1}\Eb{\|\nabla f(\bar\xx^t)\|^2} \le \frac{\Phi^0}{\tau BT} 
    + \frac{C}{B}\tau^2\sigma^2
    + \frac{D}{B}\tau^3\sigma^2.
\end{eqnarray*}
Choosing $\tau = \min\left\{\frac{1}{E}, \left(\frac{\Psi^0}{C\sigma^2T}\right)^{1/3}, \left(\frac{\Psi^0}{D\sigma^2T}\right)^{1/4}\right\}$ gives the following rate
\begin{eqnarray*}
    \frac{1}{T}\sum_{t=0}^{T-1}\Eb{\|\nabla f(\bar\xx^t)\|^2} &\le& \frac{E\Psi^0}{BT} + \left(\frac{\sqrt{C}\Psi^0\sigma}{B^{3/2}T}\right)^{2/3}
    + \left(\frac{D^{1/3}\Psi^0\sigma^{2/3}}{B^{4/3}T}\right)^{3/4}\\
    &=& \cO\left(\frac{\ell\Psi^0}{\alpha\rho^3T}
    + \left(\frac{\ell\Psi^0\sigma}{nT}\right)^{2/3}\right.\\
    && \left.\;+\; \left(\frac{(n^{-2/3} + \alpha^{-1/3}\rho^{-1/3}n^{-2/3} + \alpha^{-1/3}\rho^{-2/3}n^{-2/3})\ell\Psi^0\sigma^{2/3}}{T}\right)^{3/4} \right),
\end{eqnarray*}
that translates into the rate in terms of $\varepsilon$ to 
\begin{eqnarray*}
    \frac{1}{T}\sum_{t=0}^{T-1}\Eb{\|\nabla f(\bar\xx^t)\|^2} \le \varepsilon^2 &\Rightarrow& 
     \cO\left(\frac{\ell\Psi^0}{\alpha\rho^3\varepsilon^2}
    + \frac{\ell\Psi^0\sigma}{n\varepsilon^{3}} 
    + \frac{\ell\Psi^0\sigma^{2/3}}{n^{2/3}\varepsilon^{8/3}}
    + \frac{\ell\Psi^0\sigma^{2/3}}{\alpha^{1/3}\rho^{1/3}n^{2/3}\varepsilon^{8/3}}
    \right.\\
    && \left.\;+\; \frac{\ell\Psi^0\sigma^{2/3}}{\alpha^{1/3}\rho^{2/3}n^{2/3}\varepsilon^{8/3}}\right).
\end{eqnarray*}
Note that the last term always dominates the third and fourth terms in the rate. Therefore, the final convergence rate has the following form
\begin{eqnarray*}
    \frac{1}{T}\sum_{t=0}^{T-1}\Eb{\|\nabla f(\bar\xx^t)\|^2} \le \varepsilon^2 &\Rightarrow& 
     \cO\left(\frac{\ell\Psi^0}{\alpha\rho^3\varepsilon^2}
    + \frac{\ell\Psi^0\sigma}{n\varepsilon^{3}} 
    + \frac{\ell\Psi^0\sigma^{2/3}}{\alpha^{1/3}\rho^{2/3}n^{2/3}\varepsilon^{8/3}}\right).
\end{eqnarray*}
Note that with the choice $\mV^0 = \mG^0 = \mM^0 = \wtilde\nabla F(\mX^0), \mH^0 = \mX^0 = \xx^0\1^\top,$ we get
\begin{eqnarray*}
    \hat{G}^0 \le \sigma^2n, \quad \wtilde G^0 \le \sigma^2n, \quad \Omega_1^0 =  \Omega_2^0 = \Omega_3^0 = \Omega_4^0 = 0.
\end{eqnarray*}
\begin{eqnarray}
    \Psi^0 \le F^0 + \frac{d_1}{\alpha\rho^3n\tau\ell}\sigma^2n + \frac{d_2}{n\ell} \sigma^2n.
\end{eqnarray}
If we choose the initial batch size $B_{\rm init} \ge \lceil \frac{\sigma^2}{LF^0\alpha\rho^3}\rceil $, we get
\begin{eqnarray}
    \Psi^0 \le F^0 + \frac{1}{\alpha\rho^3\ell}\frac{\sigma^2}{B_{\rm init}} + \frac{1}{\ell} \frac{\sigma^2}{B_{\rm init}} \le 3F^0.
\end{eqnarray}

\end{proof}

\section{Experiment details}\label{sec:additional_experiments}

\subsection{Effect of changing heterogeneity}\label{sec:effect_of_changing}

We perform a grid search for the parameters $\gamma$ from $\{0.1,0.01,0.001\}$, $\eta$ from the log space from $10^{-4}$ to $10^{-1}$ and the log space from $5\times10^{-4}$ to $5\times10^{-1}$. For \algname{MoTEF} we search the momentum parameter $\lambda$ from the same log space as $\eta$ as well. 

\subsection{Effect of communication topology (synthetic problem)}\label{sec:effect_of_topology_synthetic} To study networks with different spectral gaps, we set $n=400$ and construct random regular graphs with different degrees $r$. We sample the random graphs with degree 
$$r\in\{3, 3, 3, 4, 4, 4, 4, 5, 5, 6, 6, 7, 10, 13, 16\},$$ the resulting inverses of the spectral gaps are around $$\nicefrac{1}{\rho} \in \{21.41, 18.40, 18.59, 8.24, 8.55, 8.65, 7.92, 5.57, 5.36, 4.03, 4.34, 3.76, 2.56, 2.17, 1.99\}.$$

\subsection{Robustness to communication topology.}\label{sec:robust_hyperparameters}
Next, we study the effect of the network topology on the convergence of \algname{MoTEF}. We set $n=40, \lambda=0.05,$ choose batch size $100,$ and run experiments for ring, star, grid, Erdös-Rènyi ($p=0.2$ and $p=0.5$) topologies. For all topologies, we use $\eta=0.05, \gamma=0.5, \lambda=0.01$ for {\rm a9a} dataset and $\eta=0.05, \gamma=0.5, \lambda=0.01$ for {\rm w8a}. Note that the spectral gaps of these networks $0.012, 0.049, 0.063, 0.467, 0.755$ correspondingly. 

\subsection{Hyperparameters for \cref{sec:nonconvex_logreg}}\label{sec:nonconvex_logreg_hyperparameters}

For \algname{MoTEF} we tune stepsize as follows 
$$\eta \in \{0.001, 0.01, 0.05\}, \gamma \in \{0.1, 0.2, 0.5, 0.9\}, \lambda\in \{0.005, 0.01, 0.05, 0.1\}.$$ For \algname{BEER} we tune the stepsizes in the range $$\eta \in \{0.001, 0.01, 0.05\}, \gamma \in \{0.1, 0.2, 0.5, 0.9\}.$$ For \algname{Choco-SGD} we tune the stepsizes in the range $\eta \in \{0.01, 0.05\}, \gamma \in\{0.1,0.5,0.9\}.$ Finally, for \algname{DSGD} and \algname{D2} we choose the stepsize $\eta=0.01$.


\subsection{Comparison against \algname{CEDAS}}

In this section, we consider the comparison against \algname{CEDAS} algorithm. We demonstrate the performance of \algname{MoTEF} and \algname{CEDAS} in the training of logistic regression with non-convex regularization used in \Cref{sec:nonconvex_logreg}. Similarly, we use LibSVM datasets. We tune the parameters 
\begin{align*}
\gamma &\in\{10^{-3}, 3\cdot 10^{-3}, 10^{-2},  3\cdot 10^{-2}, 10^{-1}\},\\ 
\eta &\in \{10^{-4},  3\cdot 10^{-4}, 10^{-3}, 3\cdot 10^{-3}, 10^{-2}, 3\cdot 10^{-2}, 10^{-1}, 3\cdot 10^{-1}, 10^0\}, \\
\alpha&\in\{10^{-2},3\cdot 10^{-2}, 10^{-1}, 3\cdot 10^{-1}, 10^0\}
\end{align*}
for \algname{CEDAS} algorithm, and 
\begin{align*}
\gamma&\in\{10^{-3}, 3\cdot 10^{-3}, 10^{-2},  3\cdot 10^{-2}, 10^{-1}\},\\
\eta &\in \{10^{-4},  3\cdot 10^{-4}, 10^{-3}, 3\cdot 10^{-3}, 10^{-2}, 3\cdot 10^{-2}, 10^{-1}, 3\cdot 10^{-1}, 10^0\},\\
\lambda&\in\{0.9, 0.8, 0.1\}
\end{align*}
for \algname{MoTEF}. We use Rand-$K$ compressor for \algname{CEDAS} and Top-$K$ for \algname{MoTEF} both with $K=10$, and mini-batch stochastic gradients with a batch size $16$. We compare the performance of algorithms on the ring topology with $n=10$ and regularization parameter $10^{-1}$ averaging over $3$ different random seeds. In \Cref{fig:cedas}, we demonstrate the communication performance of \algname{CEDAS} and \algname{MoTEF} with the best set of parameters. We observe that the performance of \algname{MoTEF} and \algname{CEDAS} is similar on ijcnn1 and mushrooms datasets while \algname{MoTEF} outperforms \algname{CEDAS} on a7a and a9a datasets.

\begin{figure*}
    \centering
    \begin{tabular}{cccc}
        \includegraphics[width=0.21\textwidth]{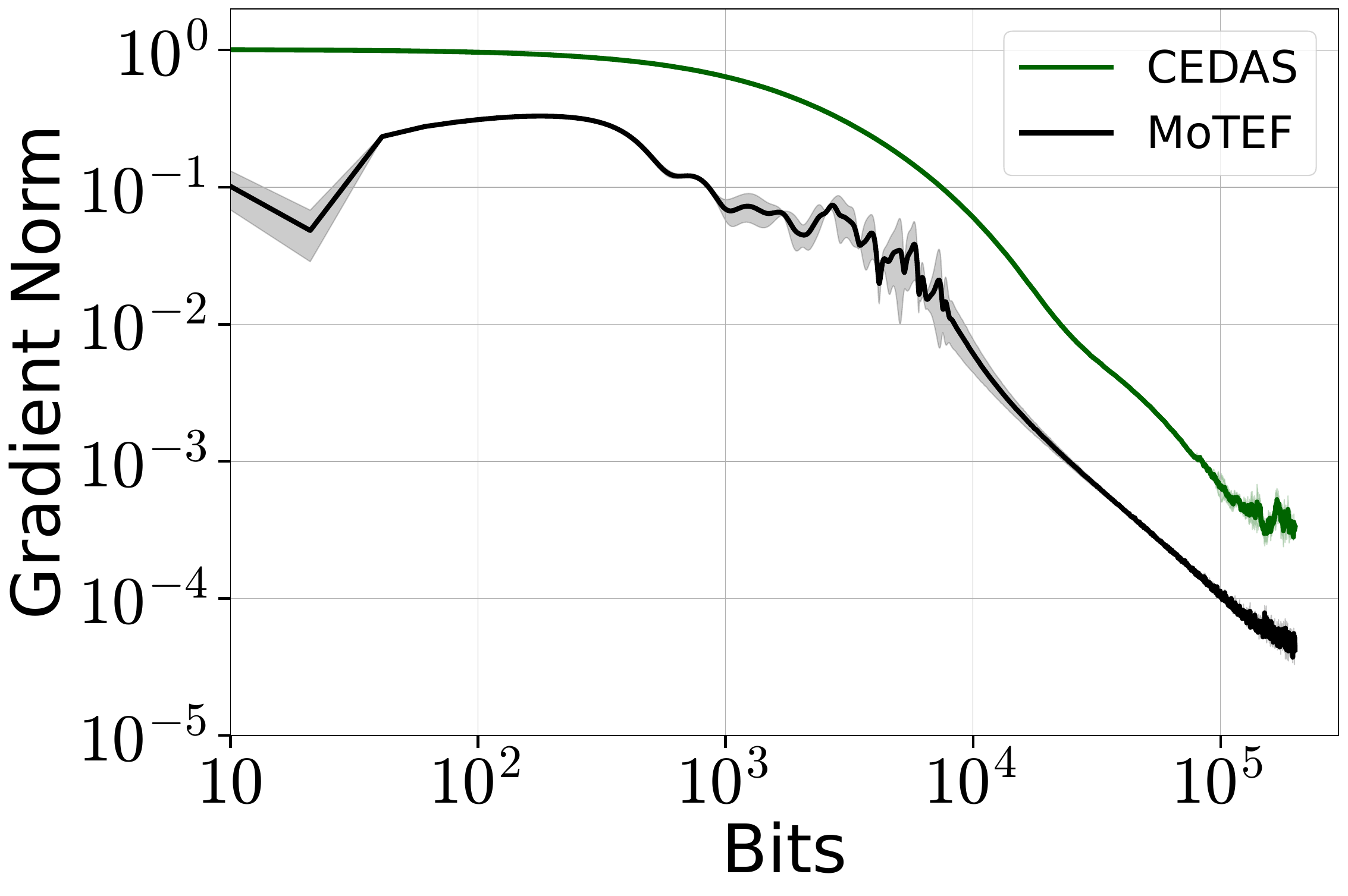} &
         \includegraphics[width=0.22\textwidth]{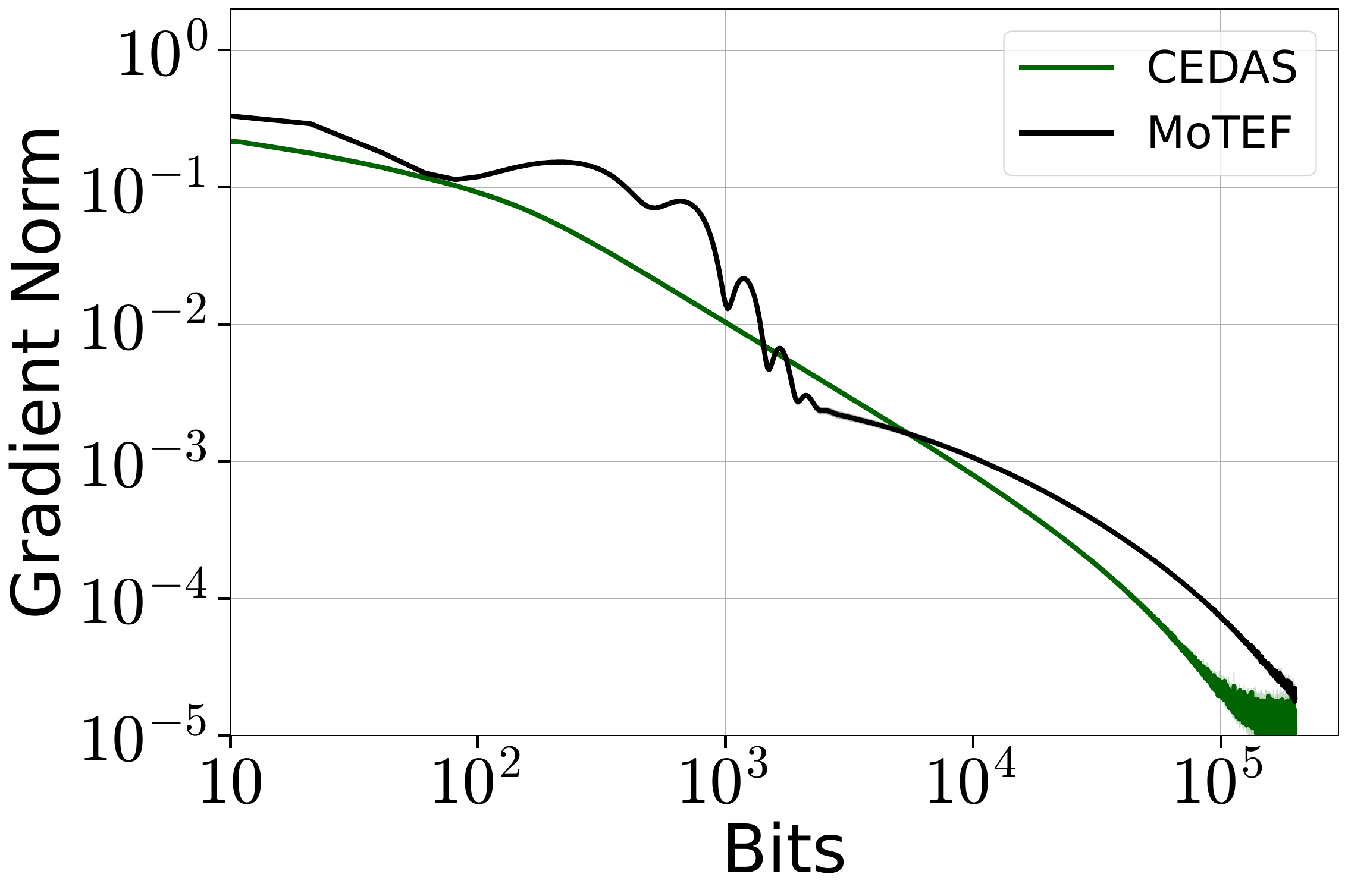} &
         \includegraphics[width=0.215\textwidth]{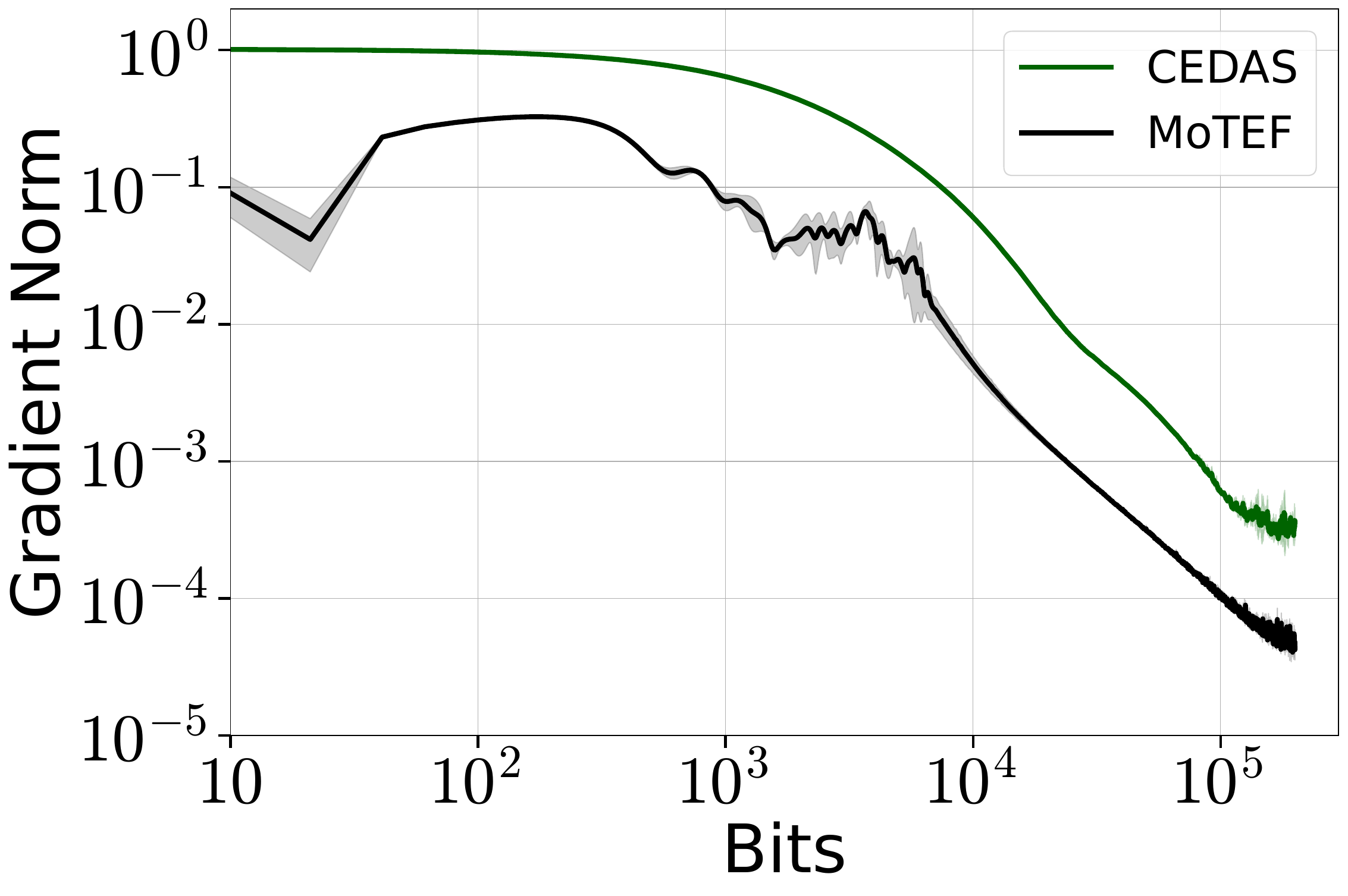} &
         \includegraphics[width=0.22\textwidth]{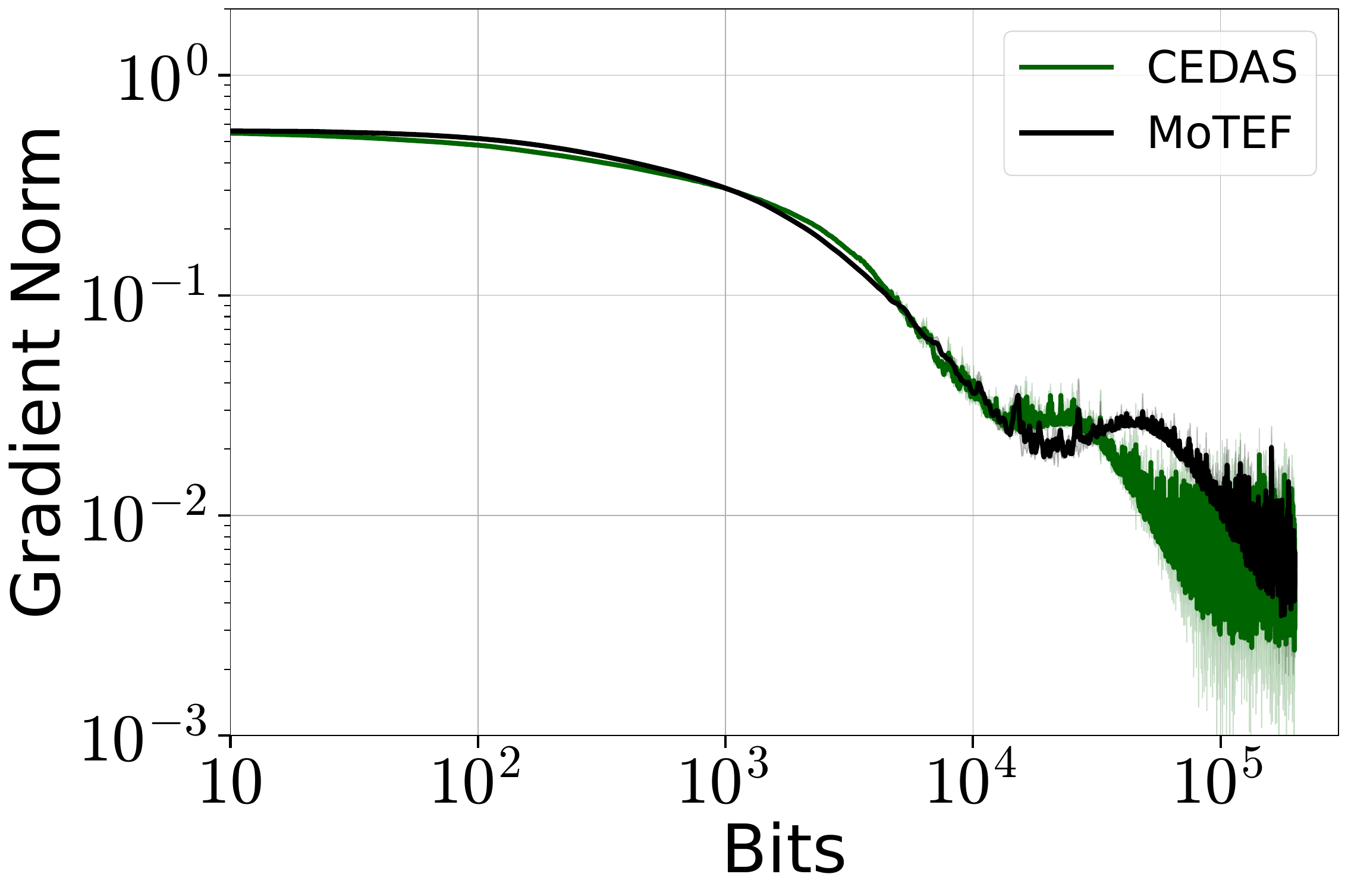}\\
         {\small a7a} & 
         {\small ijcnn1} & 
         {\small a9a} & 
         {\small mushrooms} 
    \end{tabular}
    \caption{Performance of \algname{MoTEF} and \algname{CEDAS} in training logistic regression with non-convex regularization on LibSVM datasets.}
    \label{fig:cedas}
\end{figure*}



\end{document}